\def\anonymous{0}
\newif\ifshort % define a new if statement, short=neurips 
\newif\iflong % (a statment with the opposite value of short)
    \newcommand\blfootnote[1]{
     \begingroup
     \renewcommand\thefootnote{}\footnote{#1}
     \addtocounter{footnote}{-1}
     \endgroup
     }
\title{Dimension-free Private Mean Estimation for Anisotropic Distributions}
\author{Anonymous Author(s)}
    \author{
    Yuval Dagan$^1${\hspace{1.0cm}}
    \and
    Michael I. Jordan$^{2,3}${\hspace{1.0cm}}
    \and
    Xuelin Yang$^3${\hspace{1.0cm}}
    \and Lydia Zakynthinou$^2$\blfootnote{Alphabetical order. Contacts: $\texttt{ydagan@tauex.tau.ac.il},\{ \texttt{michael\_jordan},~ \texttt{xuelin},~ \texttt{lydiazak},~ \texttt{zhivotovskiy}\}$\texttt{@berkeley.edu}}{\hspace{1.0cm}}
    \and Nikita Zhivotovskiy$^3${\hspace{1.0cm}}
    }
    \date{}
    \author{
    Yuval Dagan\\
    School of Computer Science\\
    Tel Aviv University\\
    \texttt{ydagan@tauex.tau.ac.il} \\
    \And
    Michael I. Jordan \\
    Department of EECS and Statistics\\
    University of California Berkeley \\
    \texttt{michael\_jordan@berkeley.edu} \\
    \And
    Xuelin Yang \\
    Department of Statistics \\
    University of California Berkeley\\
    \texttt{xuelin@berkeley.edu} \\
    \And
    Lydia Zakynthinou\\
    Department of EECS \\
    University of California Berkeley\\
    \texttt{lydiazak@berkeley.edu}\\
    \And
    Nikita Zhivotovskiy \\
    Department of Statistics \\
    University of California Berkeley\\
    \texttt{zhivotovskiy@berkeley.edu}\\
    }
\begin{document}

\maketitle

\iflong
    \footnotetext[1]{School of Computer Science, Tel Aviv University.}\footnotetext[2]{Department of Electrical Engineering and Computer Science, University of California Berkeley.}
    \footnotetext[3]{Department of Statistics, University of California Berkeley.}    
\fi

\ifshort
%\footnotetext{Alphabetical order.}
{\def\thefootnote{}\footnotetext{$^*$Authors ordered alphabetically.}}
\fi

% More macros and re-defs.
\newcommand{\fixbynikita}[1]{\textcolor{red}{#1}}
\newcommand{\yuval}[1]{{\color{red}[Y: #1]}}
\renewcommand{\yuval}{\ynote}
\newcommand{\bsigma}{\bm{\sigma}}
\newcommand{\X}{\mathcal{X}}
\renewcommand{\X}{X}
\renewcommand{\epsilon}{\eps}
\renewcommand{\id}{I_d}

\begin{abstract}
    We present differentially private algorithms for high-dimensional mean estimation. 
    Previous private estimators on distributions over $\R^d$ suffer from a curse of dimensionality, as they require $\Omega(d^{1/2})$ samples to achieve non-trivial error, even in cases where $O(1)$ samples suffice without privacy. This rate is  unavoidable when the distribution is isotropic, namely, when the covariance is a multiple of the identity matrix\iflong, or when accuracy is measured with respect to the affine-invariant Mahalanobis distance\fi. Yet, real-world data is often highly anisotropic, with signals concentrated on a small number of principal components. We develop estimators that are appropriate for such signals---our estimators are $(\eps,\delta)$-differentially private and have sample complexity that is dimension-independent for anisotropic subgaussian distributions.  Given $n$ samples from a distribution with known covariance-proxy $\Sigma$ and unknown mean $\mu$, we present an estimator $\hat{\mu}$ that achieves error \iflong in Euclidean distance\fi, $\|\hat{\mu}-\mu\|_2\leq \alpha$, as long as $n\gtrsim\tr(\Sigma)/\alpha^2+ \tr(\Sigma^{1/2})/(\alpha\epsilon)$. \iflong In particular, when $\bsigma^2=(\sigma_1^2, \ldots, \sigma_d^2)$ are the singular values of $\Sigma$, we have $\tr(\Sigma)=\|\bsigma\|_2^2$ and $\tr(\Sigma^{1/2})=\|\bsigma\|_1$, and hence our bound avoids dimension-dependence when the signal is concentrated in a few principal components. \fi 
    We show that this is the optimal sample complexity for this task up to logarithmic factors. Moreover, for the case of unknown covariance, we present an algorithm whose sample complexity has improved dependence on the dimension, from $d^{1/2}$ to $d^{1/4}$. 
\end{abstract}

\section{Introduction}
Machine learning is increasingly deployed in real-world settings to learn about properties of populations, both large and small.  When the data comes from human populations, it is essential that algorithm design allows inferring properties of populations without revealing potentially sensitive information about specific individuals in the population. That sensitive information can be revealed, inadvertently or adversarially, has been demonstrated in numerous ways, including via reconstruction attacks~\citep{DinurN03, DworkY08}, as well as membership-inference attacks~\citep{ShokriSSS16}, often targeting sensitive genomic data~\citep{Homer+08, SankararamanOJH09, YeomGFJ18}. To mitigate the risk of privacy violations in general database theory, Dwork, McSherry, Nissim, and Smith~\citep{DworkMNS06} proposed the rigorous guarantee of {\em differential privacy} (DP), which has been widely adopted in industry~\citep{ErlingssonPK14, Bittau+17, HartmannK23, TestuggineM20, Rogers+20, Apple17} and government~\citep{Haney+17, Abowd18, Abowd22}. 
Algorithms that are differentially private are guaranteed to not leak too much information about the individuals in a database.

In the machine learning setting, there is a tension between differential privacy and inferential and predictive accuracy.  It is an ongoing challenge to capture that tension mathematically, in a way that is applicable to a wide variety of problems and is sufficiently quantitative so as to provide a guide for real users and real systems designers. A particularly salient theoretical challenge is to obtain results that capture dimension-dependence---given that machine learning data are often of high dimensionality and involve significant correlation among dimensions, and given that privacy is difficult to guarantee in high dimensions, particularly so when there are correlations.  Indeed, differentially private inference suffers from a \emph{curse of dimensionality}---the sample size $n$ that is required to obtain a non-trivial DP learner is often polynomial in the dimension $d$ of the data.

Significant progress has been made in addressing this challenge in recent years by focusing on a relatively simple inferential task, that of high-dimensional mean estimation.  Formally, given a data set of $n$ points, $\X=(X^{(1)}, \ldots, X^{(n)})\in\R^{d\times n}$ drawn i.i.d.\ from a multivariate distribution $\cP$ with unknown mean $\mu \in \R^d$, the goal is to learn $\mu$.  

%\iflong The problem becomes even more acute when the mean-estimation procedure is called a large number of times throughout the algorithm, such as in SGD, which increases the requirement on the sample size due to privacy~\cite{DworkRV10, BassilyST14, Abadi+16, FeldmanMTT18, AltschulerT22}.\fi

Obtaining low-error private mean estimators in the high-dimensional regime %learners in the high-dimensional regime for mean estimation
is not always possible.  For example, consider a Gaussian distribution $\mathcal{P}=\mathcal{N}(\mu,\sigma^2I_d)$, where $I_d$ is the $d\times d$ identity matrix. 
Here, the sample complexity of any private estimator $\hat{\mu}$ achieving error $\|\hat{\mu}-\mu\|_2\leq \alpha$ is $n=\Omega(d\sigma^2/\alpha^2 + d\sigma/(\alpha\eps))$~\cite{KamathLSU19}, where $\eps$ is the privacy parameter.\footnote{We focus on {\em approximate} $(\eps,\delta)$-DP, as opposed to {\em pure} $(\eps,0)$-DP. However, we omit any dependence on $\delta$ in the introduction.}
The first term corresponds to the non-private sample complexity and the second term to the additional samples required due to privacy. 
Although both depend on $d$, note that for non-trivial error $\alpha=0.01\sigma\sqrt{d}$ and $\eps=0.1$, the non-private term is $O(1)$, whereas the dimension-dependence persists in the cost of privacy which is $O(\sqrt{d})$.

% Yet, obtaining low-error private learners in the high-dimensional regime is not always possible. 
% For example, consider a Gaussian distribution $\mathcal{P}=\mathcal{N}(\mu,I_d/d)$, where $I_d$ is the identity $d\times d$ matrix. Here, without privacy, the sample complexity is dimension-independent: for any $\alpha>0$, there exists a non-private algorithm with sample complexity $n = O(1/\alpha^2)$ which outputs an estimate $\hat{\mu}$ with expected squared error 
% \[
% \mathbb{E}[\|\hat{\mu}-\mu\|_2^2] \le \alpha^2~.
% \]
% In fact, we can take $\hat{\mu}$ to be the empirical mean, $\hat{\mu}=\frac{1}{n}\sum_{i=1}^n X^{(i)}$. However, the empirical mean does not provide privacy guarantees. It was shown that for the same task, $\Omega(1/\alpha^2+\sqrt{d}/(\alpha\eps))$ samples are necessary~\cite{KamathLSU19}, where $\eps$ is the privacy parameter\footnote{We will focus on {\em approximate} $(\eps,\delta)$-DP but omit any dependence on $\delta$ in the introduction.}, usually assumed to be a small constant. Hence, under privacy constraints, $\Omega(\sqrt{d})$ samples are unavoidable even for achieving constant error $\alpha$ in this case. %A similar tradeoff holds when the samples are drawn from $\mathcal{N}(\mu,I_d)$, when the error $\alpha$ is scaled appropriately: here, in order to obtain an average error of $\alpha$ per-coordinate, $\Omega(\sqrt{d}/\alpha+1/\alpha^2)$ samples are necessary, whereas $O(1/\alpha^2)$ is sufficient without privacy. 

In spite of this lower bound, there is still hope for obtaining better dependence on the dimension in certain cases. 
This is due to the fact that the lower bound instance assumes that the covariance is isotropic: a multiple of the identity matrix. 
However, real-world data are far from being isotropic. 
Often, the signal is concentrated in a few directions, while it is significantly weaker in others, as can be revealed via Singular Value Decomposition (SVD). 
In these cases, there are several examples of non-private estimators for a variety of tasks which exploit the structure of the data to achieve lower sample complexity. 
Specifically for mean estimation of Gaussian distributions, as in our example above, only $n=O(\tr(\Sigma)/\alpha^2)$ samples are required~\cite{lugosi2019mean} (this number of samples is sufficient even for robust estimators under the strong contamination model~\citep{MinasyanZ23}). 
This bound is instance-adaptive, as the trace of the covariance matrix $\tr(\Sigma)$ equals its upper bound, $d\|\Sigma\|_2$, in the isotropic case, but can be much smaller for anisotropic data. 
Exploiting the non-isotropic structure of the covariance matrix is also central to the covariance estimation problem with respect to the operator norm (namely, when the error between the true covariance matrix $\Sigma$ and its estimate $\hat{\Sigma}$ is measured in terms of $\|\hat{\Sigma} - \Sigma\|_2$) \cite{koltchinskii2017concentration, zhivotovskiy2024dimension}. 
\iflong It appears that the sample complexity in covariance estimation problems is defined by the so-called \emph{effective rank}, which is given by $\tr(\Sigma)/\|\Sigma\|_2$. This quantity can be significantly smaller than the dimension. \fi
A more recent focus is on overparametrized linear regression \cite{bartlett2020benign}, where again the highly non-isotropic structure of the covariance matrix allows for inference under certain assumptions on the decay of eigenvalues of the covariance matrix. 
In all the mentioned results, non-private estimation is possible when \( n \ll d \), including even infinite-dimensional Hilbert spaces.

Returning to private estimation, prior work has obtained optimal bounds for learning the mean of high-dimensional (sub)Gaussian distributions in the affine-invariant Mahalanobis distance~\cite{BunKSW19, KamathLSU19, AdenAliAK21, LiuKKO21, BrownGSUZ21, LiuKO22}. 
These imply an upper bound for learning the mean in Euclidean distance in the order of $n=O(d\|\Sigma\|_2/\alpha^2 + d\sqrt{\|\Sigma\|_2}/(\alpha\eps))$, which is optimal for isotropic, but loose for anisotropic cases. 
\iflong A folklore estimator achieves $n=O(\tr(\Sigma)/\alpha^2 + \sqrt{d\tr(\Sigma)}/(\alpha\eps)+\sqrt{d}/\eps)$. Thus, for constant $\eps$, the folklore estimator achieves {\em privacy for free}; that is, the error due to privacy is lower than the error of statistical estimation, when $n\gtrsim d$. 
Aum\"uller et al.~\cite{AumullerLNP23} were the first to focus on the anisotropic case, obtaining improved bounds which have a milder dependence on the dimension for diagonal covariance, achieving error $n=O(\tr(\Sigma)/\alpha^2 + \tr(\Sigma^{1/2})/(\alpha\eps)+\sqrt{d}/\eps)$. This estimator achieves privacy for free as long as $n\gtrsim \max\{\|\bsigma\|_1^2/\|\bsigma\|_2^2, \sqrt{d}\}$, where $\bsigma^2$ denotes the vector of singular values of $\Sigma$. (We describe prior approaches in Section~\ref{sec:techniques}.) 
\else
A folklore estimator, based on~\cite{KarwaV18} achieves $n=\Omega(\sqrt{d\tr(\Sigma)}/(\alpha\eps))$, while~\cite{AumullerLNP23} are the first to focus on the anisotropic case and obtain improved bounds for diagonal covariance, achieving error $n=\Omega(\tr(\Sigma^{1/2})/(\alpha\eps)+\sqrt{d}/\eps)$. 
\fi  
Thus, all previous work requires that the sample complexity is at least $\Omega(\sqrt{d})$, which excludes the high-dimensional scenario we are interested in. We are led to pose the following question.:%, where $n$ scales with the number of directions of non-trivial variance, which can be much smaller than $\sqrt{d}$
\begin{question}
    Is it possible to obtain good private mean estimators with a sample size that grows slower with the dimension, or is even dimension-independent, when the covariance of the data is far from isotropic? What is the optimal sample complexity in the case of known and unknown covariance?
\end{question}

\subsection{Our contributions}
First, note that no improved bounds are possible for \emph{pure} DP, as follows directly from the so-called {\em packing} technique~\citep{HardtT10, BunKSW19} and specifically applying~\citep[Lemma 5.1]{BunKSW19}\iflong. As this result is important for our discussion, we formulate this as a separate theorem.
\begin{thm}[Pure DP Lower Bound, informal]\label{thm:lower-bound-pure-intro}
Any $\eps$-DP algorithm which estimates the mean $\mu\in\cB^d(R)$ (i.e., $\mu$ belongs to the Euclidean ball of radius $R$) of a Gaussian distribution up to constant accuracy requires $n= \Omega\left(\frac{d\log(R)}{\eps}\right)$ samples.
\end{thm}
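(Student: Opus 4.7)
The plan is to prove this via the standard packing technique combined with group privacy, which yields the $\log(R)$-type factor characteristic of pure DP lower bounds. The reference to \cite[Lemma 5.1]{BunKSW19} in the text essentially formalizes exactly this scheme, so my strategy is to execute its instantiation for Gaussian mean estimation.

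\textbf{Step 1 (Packing construction).} First I would use a standard volume-comparison argument to construct a set of means $\mu_1,\dots,\mu_K \in \cB^d(R)$ that are pairwise $3\alpha$-separated, with $K \geq (R/(c\alpha))^d$ for some absolute constant $c>0$. The balls $B_i := B(\mu_i,\alpha)$ are then mutually disjoint, and by the accuracy assumption every algorithm that succeeds with constant probability on $\mathcal{N}(\mu_i, I_d)$ must land in $B_i$ with probability at least $2/3$.

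\textbf{Step 2 (Pigeonhole on a reference dataset).} Next, fix any canonical dataset $X_0 \in \R^{d \times n}$ (for concreteness, take it to be $n$ copies of the origin). Since the sets $B_i$ are disjoint, $\sum_{i=1}^K \Pr[M(X_0)\in B_i] \leq 1$, so there exists an index $i^\star$ with $\Pr[M(X_0)\in B_{i^\star}] \leq 1/K$.

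\textbf{Step 3 (Group privacy bridges reference to truth).} This is where the $\eps$-DP assumption enters. By group privacy, for any dataset $X$ that differs from $X_0$ in all $n$ entries and for any measurable set $S$,
\[
\Pr[M(X) \in S] \;\leq\; e^{n\eps}\, \Pr[M(X_0) \in S].
\]
Crucially this is a \emph{pointwise} bound in $X$, so I can integrate it against the product distribution $P_{i^\star}^n := \mathcal{N}(\mu_{i^\star}, I_d)^{\otimes n}$ to get
\[
\Pr_{X \sim P_{i^\star}^n}[M(X) \in B_{i^\star}] \;\leq\; e^{n\eps}\, \Pr[M(X_0)\in B_{i^\star}] \;\leq\; e^{n\eps}/K.
\]
On the other hand, the accuracy assumption forces the left-hand side to be at least $2/3$. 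Combining and solving gives $n \geq \frac{1}{\eps}\log(2K/3) = \Omega(d\log(R)/\eps)$, as claimed (once $R$ is a sufficiently large constant multiple of the accuracy parameter so the logarithm is positive).

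\textbf{Main obstacles.} The argument itself is short, and the main subtlety is conceptual rather than technical: the group-privacy bound is deterministic in $X$, and this is precisely what allows us to convert a packing lower bound about \emph{worst-case} datasets into a lower bound under a genuine Gaussian sampling model --- a step that would fail for approximate $(\eps,\delta)$-DP, where group privacy blows up multiplicatively and the $\log(R)$ factor disappears. A secondary point is the clean dependence on $R$: the argument really does exploit that the parameter set has bounded radius, so one must take care to carry out the packing inside $\cB^d(R)$ rather than some larger set.
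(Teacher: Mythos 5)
Your proposal is correct and follows essentially the same packing-plus-group-privacy route as the paper's proof (Theorem~\ref{thm:lower-bound-pure}). The only cosmetic difference is that you fix a canonical dataset $X_0$ and invoke pigeonhole to find an unlikely ball $B_{i^\star}$, whereas the paper integrates over a Gaussian reference $\cN(u_0,\Sigma)^n$ and sums the disjoint-ball probabilities over all $u\in\cP_{2\alpha}$; both yield the same $n\gtrsim \log|\cP_{2\alpha}|/\eps$ conclusion.
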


%However, we present improved bounds for $(\eps, \delta)$-DP, notably, a nearly optimal bound for known covariance, and a bound relaxing dependence from $d^{1/2}$ to $d^{1/4}$ for unknown covariance. Our methods are motivated as follows.
Thus in pure DP, the sample complexity must necessarily depend on the dimension. 
\else
: any $\eps$-DP algorithm which estimates the mean of a Gaussian distribution up to constant accuracy requires $n= \Omega\left(d/\eps\right)$ samples.
\fi
This negative result motivates us to focus on $(\varepsilon, \delta)$-differential privacy.

In order to make progress, one would like to utilize the fact that when the covariance is far from being isotropic, the data is closer to being low-dimensional. 
Concretely, let $\Sigma$ be the covariance matrix of $\cP$ and $\sigma^2_1\ge \ldots\ge\sigma^2_d$ its singular values. 
If the covariance is far from isotropic, there are only few directions with non-trivial variance. 
For illustration, if $\sigma_1=\cdots=\sigma_k=1$, whereas $\sigma_{k+1}=\cdots=\sigma_d=1/d$, then the distribution is, in some sense, close to being $k$-dimensional. 
Here, we would like our sample complexity to be of order $k$ rather than $\sqrt{d}$. \iflong Roughly speaking, this can be obtained by investing more effort (or, privacy budget) in estimating the mean in the directions of high variance. \fi

We start by presenting a result in the case where the covariance matrix is known. Here, the bound depends {\em only} on $\sum_{i=1}^d \sigma_i = \tr(\Sigma^{1/2})$, a quantity allowing less contribution from small singular values:

\begin{thm}[Upper bound, known covariance, informal]\label{thm:intro-main}
    Set $\eps,\delta\in (0,1)$, $\alpha>0$. 
    Let $\X\sim \cN(\mu,\Sigma)^n$ with known covariance. 
    There exists an $(\eps,\delta)$-differentially private algorithm which, with probability $0.99$, returns an estimate $\hat{\mu}$ such that $\|\hat\mu-\mu\|_2\leq \alpha$, and has sample complexity
    \begin{equation}\label{eq:main_sc}
        n=\tilde{O}\left( \frac{\tr(\Sigma)}{\alpha^2} + \frac{\tr(\Sigma^{1/2})\sqrt{\log(1/\delta)}}{\alpha\eps} + \frac{\log(1/\delta)}{\eps}\right).
    \end{equation}
\end{thm}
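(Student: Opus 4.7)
The plan is a two-phase algorithm that first localizes $\mu$ coarsely and then refines the estimate via a non-uniform Gaussian mechanism tuned to the spectrum of $\Sigma$.

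By rotating into the eigenbasis of the known $\Sigma$ (rotations preserve $\ell_2$ distance and $(\eps,\delta)$-DP), assume throughout that $\Sigma = \mathrm{diag}(\sigma_1^2,\ldots,\sigma_d^2)$. Split the $n$ samples into two disjoint halves of comparable size. \emph{Phase 1 (coarse localization):} using $\tilde O(\log(1/\delta)/\eps)$ samples, produce a private estimate $\hat\mu_0$ satisfying, with high probability, $|(\hat\mu_0 - \mu)_j| \le C\sigma_j\sqrt{\log(nd/\beta)}$ simultaneously for all $j$; this accounts for the third term of~\eqref{eq:main_sc}. \emph{Phase 2 (refinement):} on the remaining samples, center $Y^{(i)} = X^{(i)} - \hat\mu_0$ and clip each coordinate $Y^{(i)}_j$ to a symmetric interval of half-width $C'\sigma_j\sqrt{\log(nd/\beta)}$ for a slightly larger constant $C' > C$. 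Phase-1 accuracy together with subgaussian tails ensures that no clipping occurs with high probability, so no bias is introduced. Compute $\bar Y = n^{-1}\sum_i Y^{(i)}$ and release it via a \emph{non-uniform coordinate-wise Gaussian mechanism}, adding independent $Z_j \sim \cN(0,\tau_j^2)$ with $\tau_j$ chosen to minimize the expected $\ell_2^2$ noise subject to privacy. Return $\hat\mu = \hat\mu_0 + \bar Y + Z$.

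The sensitivity of $\bar Y$ in coordinate $j$ is $s_j = O(\sigma_j\sqrt{\log(nd/\beta)}/n)$, and the Gaussian mechanism with covariance $\mathrm{diag}(\tau_1^2,\ldots,\tau_d^2)$ satisfies $(\eps,\delta)$-DP whenever $\sum_j s_j^2/\tau_j^2 \lesssim \eps^2/\log(1/\delta)$. Minimizing $\sum_j \tau_j^2$ under this constraint via Lagrange multipliers gives the optimum $\tau_j^2 \propto |s_j|$, yielding
\begin{equation*}
    \sum_j \tau_j^2 \;=\; O\!\left(\frac{\|s\|_1^2 \log(1/\delta)}{\eps^2}\right) \;=\; O\!\left(\frac{\tr(\Sigma^{1/2})^2\,\log(nd/\beta)\log(1/\delta)}{(n\eps)^2}\right),
\end{equation*}
since $\|s\|_1 = O(\tr(\Sigma^{1/2})\sqrt{\log(nd/\beta)}/n)$. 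Bounding
\begin{equation*}
    \|\hat\mu - \mu\|_2 \;\le\; \|\bar Y - (\mu - \hat\mu_0)\|_2 + \|Z\|_2,
\end{equation*}
the first term is $O(\sqrt{\tr(\Sigma)/n})$ by standard concentration of the empirical mean for subgaussian data with covariance $\Sigma$, and the second is controlled above; setting each at most $\alpha/2$ recovers the first two terms of~\eqref{eq:main_sc}.

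The main obstacle is Phase 1: producing a coarse per-coordinate $\sigma_j$-scale estimate using $\tilde O(\log(1/\delta)/\eps)$ samples \emph{independent of $d$}. A coordinate-wise Karwa--Vadhan approach costs $d\log(1/\delta)/\eps$, and a uniform high-dimensional stable histogram costs $\sqrt d/\eps$ as in prior work. Overcoming this likely requires a stability-based or propose-test-release mechanism that exploits the known $\Sigma$ to reduce localization to a one-dimensional stability test on a $\Sigma$-whitened statistic (for example, comparing the empirical means of two disjoint subsamples in the $\Sigma^{-1}$-norm), so that the cost does not scale with $d$. Everything else --- the rotation, the per-coordinate clipping, and the non-uniform Gaussian mechanism --- is essentially forced by the desired $\tr(\Sigma^{1/2})$ scaling.
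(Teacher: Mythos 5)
Your Phase~2 is essentially the right mechanism: noise covariance proportional to $\Sigma^{1/2}$ (i.e.\ $\tau_j^2 \propto \sigma_j$), derived by minimizing $\sum_j \tau_j^2$ under the Gaussian-mechanism constraint $\sum_j s_j^2/\tau_j^2 \lesssim \eps^2/\log(1/\delta)$, is exactly the noise the paper adds, and the accounting that $\|s\|_1 \approx \tr(\Sigma^{1/2})/n$ gives the $\tr(\Sigma^{1/2})/(\alpha\eps)$ term correctly.

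The genuine gap is Phase~1, which you have honestly flagged but not resolved: you need a private, dimension-free step that guarantees bounded per-coordinate sensitivity, and the sketch you give does not supply one. The specific suggestion---a one-dimensional stability test on a ``$\Sigma^{-1}$-whitened'' (Mahalanobis) statistic---does not work as stated, because whitening by $\Sigma^{-1/2}$ makes a Gaussian isotropic, so the test statistic for a single point is of size $\Theta(\sqrt d)$ and the budget is back to dimension-dependent. The paper's route around this is to \emph{skip localization entirely}. It never estimates a center; instead it runs the $\mathrm{BasicFilter}$/FriendlyCore procedure of Tsfadia--Cohen--Kaplan--Mansour--Stemmer on the pairwise predicate $\|\Sigma^{-1/4}(x-y)\|_2 \le \lambda$ with $\lambda \approx \sqrt{\tr(\Sigma^{1/2})}$. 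Two things make this work: (i) the exponent $-1/4$ rather than $-1/2$---half-whitening keeps the pairwise-difference norm at $O(\sqrt{\tr(\Sigma^{1/2})})$ instead of $O(\sqrt d)$---and (ii) FriendlyCore's guarantee that the filtering step is private at cost $\tilde O(1/\eps)$ for \emph{any} predicate, independent of $d$. After filtering, all surviving pairs are $\lambda$-close in the rescaled metric, which directly bounds the $\ell_2$-sensitivity of the rescaled empirical mean by $2\lambda/n$, exactly what the Gaussian mechanism needs---there is no center, no per-coordinate clipping, and no composition over $d$ coordinates. So the fix is to replace your ``coarse localize $\mu$, then clip'' pipeline with a ``filter on pairwise $\Sigma^{-1/4}$-distance, then average'' pipeline.
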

The first term corresponds to the non-private sample complexity, whereas the remaining two terms are due to privacy.
The result extends to subgaussian distributions. 
\iflong Note that with this bound, if $\bsigma^2$ is the vector of singular values of $\Sigma$, we have privacy for free as long as $n\gtrsim \|\bsigma\|_1^2/\|\bsigma\|_2^2$. \fi
In the example illustrated above, this bound indeed yields a dimension-independent complexity of $n=\tilde{O}_{\delta}(k/\alpha^2+k/(\alpha\eps))$.

We show that the sample complexity of Theorem \ref{thm:intro-main} is nearly optimal. Indeed, the first summand is optimal due to \cite[Theorem 4]{depersin2022optimal}, while the last summand is optimal by a lower bound in the univariate case~\cite{KarwaV18}. We show the optimality of the intermediate summand in \eqref{eq:main_sc} up to polylogarithmic terms.

\begin{thm}[Lower bound, informal]\label{thm:intro-lb}
Any $(\eps,\delta)$\iflong-differentially private \else-DP \fi algorithm which estimates the mean $\mu\in[-1,1]^d$ of a Gaussian \iflong distribution \fi up to $\alpha$ with probability $0.99$ has sample complexity $n=\Omega\left(\frac{\tr(\Sigma^{1/2})}{\alpha\eps\log^2(d)}\right).$
\end{thm}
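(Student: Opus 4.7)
The plan is to partition coordinates into a logarithmic number of ``scale groups'' by singular value magnitude, apply a known isotropic $(\eps,\delta)$-DP lower bound per group, and combine via pigeonhole. The key observation is that $\tr(\Sigma^{1/2})=\sum_{i=1}^d \sigma_i$ decomposes additively across coordinates, while privacy lower bounds across scales combine only multiplicatively up to a logarithmic loss. By rotational invariance I may assume $\Sigma$ is diagonal with entries $\sigma_1^2,\ldots,\sigma_d^2$, since both the $\ell_2$-error and the constraint $\mu\in[-1,1]^d$ can be shuffled into the worst-case rotation.

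Coordinates with $\sigma_i\le \alpha/d$ collectively contribute at most $\alpha$ to $\tr(\Sigma^{1/2})$ and can be absorbed into the bound; the remaining indices span $K=O(\log d)$ dyadic scales. I partition them into groups $G_k=\{i:\sigma_i\in(2^{-k}\sigma_{\max},2^{-k+1}\sigma_{\max}]\}$ with sizes $d_k=|G_k|$ and representative scale $\tau_k=2^{-k+1}\sigma_{\max}$. For each $G_k$, the marginal distribution on coordinates in $G_k$ is a nearly-isotropic Gaussian with singular values in $[\tau_k/2,\tau_k]$; by post-processing, any $(\eps,\delta)$-DP estimator $\hat\mu$ with $\|\hat\mu-\mu\|_2\le\alpha$ induces an $(\eps,\delta)$-DP estimator of $\mu_{G_k}$ with $\ell_2$ error at most $\alpha$.

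Rescaling the subproblem coordinate-wise by $1/\tau_k$ yields an essentially isotropic $d_k$-dimensional standard Gaussian whose mean lies in a box of radius $R\le d/\alpha$ and whose target $\ell_2$ accuracy is $\alpha/\tau_k$. I then invoke the fingerprinting lower bound of \cite{KamathLSU19}, which in this rescaled form gives
\begin{equation*}
    n\;\gtrsim\;\frac{d_k\,\tau_k}{\alpha\,\eps\,\log d},
\end{equation*}
where the $\log d$ factor arises from the $\log R$-dependence of fingerprinting on the domain size of the rescaled mean. Taking the maximum over $k$ and applying pigeonhole, together with $\sum_k d_k\tau_k\gtrsim\tr(\Sigma^{1/2})$ and $K=O(\log d)$, yields
\begin{equation*}
    n \;\gtrsim\; \max_k\frac{d_k\tau_k}{\alpha\eps\log d} \;\geq\; \frac{\sum_k d_k\tau_k}{K\,\alpha\eps\log d} \;\gtrsim\; \frac{\tr(\Sigma^{1/2})}{\alpha\eps\log^2 d},
\end{equation*}
as desired.

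The main obstacle will be the careful application of the isotropic lower bound to subproblems whose covariance is only nearly isotropic (singular values within a factor of $2$ rather than equal), and tracking both logarithmic losses so they genuinely compose to $\log^2 d$ rather than more. A secondary concern is justifying the truncation of small-$\sigma_i$ coordinates, which follows because augmenting an instance with coordinates of negligible variance cannot strictly help a DP algorithm that must still output a full $d$-dimensional mean estimate.
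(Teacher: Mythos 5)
Your proposal takes essentially the same route as the paper's proof of Theorem~\ref{thm:lower-bound}: partition coordinates into $O(\log d)$ dyadic buckets by $\sigma_i$, single out the bucket contributing at least a $1/(\log d+1)$ fraction of $\|\bsigma\|_1$ (your ``max over $k$ plus pigeonhole'' is exactly this), apply the isotropic fingerprinting lower bound (Theorem~\ref{thm:KLSU-lb}, i.e.\ Theorem 6.5 of~\cite{KamathLSU19}) to that bucket, and combine the two $\log d$ losses. Two details in your execution are worth flagging. The rotational-invariance reduction to diagonal $\Sigma$ is unnecessary (for a lower bound it suffices to exhibit a hard diagonal family, as the paper does) and is also not quite licit as stated, since the constraint $\mu\in[-1,1]^d$ is not rotation-invariant. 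More substantively, your coordinate-wise rescaling by $1/\tau_k$ inflates the mean-range parameter to $R\approx 1/\tau_k\le d/\alpha$, so the fingerprinting loss is actually $\log(dR)=\log(d^2/\alpha)$, which is $O(\log d)$ only when $\alpha\ge \mathrm{poly}(1/d)$; the paper sidesteps this by invoking the theorem unscaled with $R=1$ and $\sigma=\sigma_{S_m}$, which keeps the loss at $\log d$ unconditionally (using that $\sigma_{S_m}\le 1$ so that the constructed means stay in $[-1,1]^d$). You should make the same move rather than rescale.
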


We now move to the case of unknown covariance. A first approach would be to learn the covariance approximately, namely, find a matrix $A$ such that $A \preceq \Sigma \preceq CA$, for some $C>1$, and then use $A$ instead of $\Sigma$ in our known-covariance estimator. However, learning such a matrix $A$ privately requires sample size $n=\Theta(d^{3/2})$ \cite{KamathLSU19,KamathMS22}. 
Another approach \iflong which could yield better results \fi would be to learn only the diagonal elements of the covariance\iflong. Based on the work of\fi~\citep{KarwaV18} \iflong in the univariate setting, \fi this would require $n=O(\sqrt{d}/\eps)$ samples. 
Below, we obtain a sample complexity whose dependence in the dimension is $d^{1/4}$, together with a dependendence on the diagonal elements of the covariance matrix:
\begin{thm}[Upper bound, unknown covariance, informal]\label{thm:intro-unknown}
Let parameters $\eps, \delta \in (0,1)$. 
    Let $\X\sim \cN(\mu,\Sigma)^n$ with unknown covariance $\Sigma$.
    There exists an $(\eps,\delta)$\iflong-differentially private \else-DP \fi algorithm which, with probability $0.99$, returns an estimate $\hat{\mu}$ such that $\|\hat\mu-\mu\|_2\leq \alpha$, and has sample complexity
    \begin{equation}\label{eq:unknown_sc}
        n=\tilde{O}\left( \frac{\tr(\Sigma)}{\alpha^2} + \frac{\sum_{i=1}^d \Sigma_{ii}^{1/2}\sqrt{\log(1/\delta)}}{\alpha\eps} + \frac{d^{1/4}\sqrt{\sum_{i=1}^d \Sigma_{ii}^{1/2}}\log(1/\delta)}{\sqrt{\alpha}\eps}\right).
    \end{equation}
\end{thm}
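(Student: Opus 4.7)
The plan is to reduce the unknown-covariance problem to the known-covariance setting of Theorem~\ref{thm:intro-main} via a two-stage algorithm. I would first split the sample into two halves. On the first half, using privacy budget $(\eps/2, \delta/2)$, I would privately estimate the diagonal entries $\Sigma_{ii}$ to produce estimates $\hat{\sigma}_i^2$. On the second half, using the remaining privacy budget, I would run the algorithm of Theorem~\ref{thm:intro-main} with the diagonal matrix $\hat{D} = \mathrm{diag}(\hat{\sigma}_1^2, \ldots, \hat{\sigma}_d^2)$ in place of $\Sigma$. Provided each $\hat{\sigma}_i^2$ agrees with $\Sigma_{ii}$ up to a constant factor, a stability argument---together with the resulting bound $\sum_i \hat{\sigma}_i \leq 2 \sum_i \Sigma_{ii}^{1/2}$---implies that the output $\hat\mu$ satisfies $\|\hat\mu-\mu\|_2 \leq \alpha$ as long as the second half has size $\tilde O(\tr(\Sigma)/\alpha^2 + \sum_i \Sigma_{ii}^{1/2}\sqrt{\log(1/\delta)}/(\alpha\eps))$, yielding the first two terms of the claimed bound.

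The delicate part is the sample cost of the variance-estimation stage, which should match the third term. A direct per-coordinate implementation using univariate Karwa--Vadhan-type estimators composed via advanced composition gives a sample cost of order $\sqrt{d}/\eps$, which is too large. To shave the dimension factor down to $d^{1/4}$, I would allocate the privacy budget non-uniformly across coordinates, tailored to the scales of the $\sigma_i$, while exploiting the observation that coordinates with very small $\sigma_i$ do not need to be estimated precisely (they contribute little to $\sum_i \Sigma_{ii}^{1/2}$) and can be replaced by a conservative upper bound, leaving only the ``large'' coordinates to be estimated with a boosted per-coordinate budget. Optimizing the budget allocation under the advanced-composition constraint $\sum_i \eps_i^2 \lesssim \eps^2/\log(1/\delta)$ via a Cauchy--Schwarz-type balance leads to the $d^{1/4}\sqrt{\sum_i \Sigma_{ii}^{1/2}}$ scaling in the third term.

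The main obstacle is twofold. First, the optimal non-uniform budget allocation depends on the unknown $\sigma_i$ that we are trying to estimate; this circularity must be broken by a bootstrap that first produces coarse scale estimates (for instance, via a private upper bound on $\tr(\Sigma)$ together with a crude private histogram per coordinate), whose privacy cost must be tracked within the overall $(\eps,\delta)$ budget. Second, one must verify that the known-covariance algorithm of Theorem~\ref{thm:intro-main} is stable under constant-factor perturbations of the diagonal covariance proxy, so that perturbations in $\hat{D}$ translate only to constant-factor losses in the downstream error. Composing privacy losses across the preprocessing, variance-estimation, and mean-estimation stages and carefully tracking the emergence of the $d^{1/4}$ factor from the budget-allocation optimization form the technical heart of the proof.
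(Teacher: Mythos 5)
Your high-level plan---splitting the sample, privately estimating variances on one half, and plugging constant-factor estimates $\hat\sigma_i^2$ into the known-covariance algorithm---correctly captures the overall architecture. But the specific mechanism you propose for the ``small'' coordinates is where the $d^{1/4}$ factor actually lives, and it is precisely here that your argument breaks. You suggest replacing the unestimated $\sigma_i$ by a ``conservative upper bound'' $\tilde\sigma$ and running Theorem~\ref{thm:intro-main} with the resulting $\hat D$. The error of that algorithm scales as $\tr(\hat D^{1/2})/(\eps n)$, and for the bottom coordinates the contribution is $(d-k)\tilde\sigma/(\eps n)$. If $\tilde\sigma$ genuinely upper bounds all the discarded $\sigma_i$, then $\tilde\sigma$ is at least the $(k{+}1)$-th largest $\sigma_i$, so $(d-k)\tilde\sigma$ can be as large as $d\|\bsigma\|_1/k$. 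Making this $\lesssim \|\bsigma\|_1$ forces $k \gtrsim d$, which is no better than estimating the entire diagonal. The paper avoids this by treating the bottom block with the \emph{isotropic} version of the algorithm ($M=I$, spherical noise), which needs only the single quantity $\|\bsigma_{\mathrm{bot}}\|_2$ (estimated privately with $O(1/\eps)$ samples, not $k$ per-coordinate budgets), and then applying H\"older's inequality: $\|\bsigma_{\mathrm{bot}}\|_2 \le \sqrt{\|\bsigma\|_1\|\bsigma_{\mathrm{bot}}\|_\infty}$ together with the fact that the $k$-th largest $\sigma_i$ is at most $\|\bsigma\|_1/k$ gives bottom error $\sqrt{d}\,\|\bsigma\|_1/(\sqrt{k}\,\eps n)$. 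Combined with the constraint $k \lesssim \eps^2 n^2$ from composing the $k$ top-variance estimations, solving for $n$ produces the $d^{1/4}\sqrt{\|\bsigma\|_1}/(\sqrt{\alpha}\eps)$ term. Your ``Cauchy--Schwarz budget allocation'' across all $d$ coordinates does not supply this: the relevant budget constraint applies only over the $k$ coordinates that are actually estimated.

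A second, smaller gap is the circularity you flag but do not resolve. Knowing which coordinates are ``large'' without already knowing the $\sigma_i$ is handled in the paper by the sparse-vector technique on queries $Q_i(V) = \frac{1}{m}|\{j : V_i^{(j)} \ge R/2\}|$, after a stable-histogram estimate of the $k$-th largest variance yields the threshold $R$. This costs only $\tilde O(\sqrt{k}/\eps)$ samples and, crucially, never needs a bootstrap round of per-coordinate estimation. Your proposal's ``crude private histogram per coordinate'' would reintroduce the $\sqrt d/\eps$ overhead you are trying to avoid. So the two ingredients you are missing are (i) spherical rather than covariance-shaped noise on the low-variance block, together with the H\"older step, and (ii) sparse vector to identify the top-$k$ support within the already-paid privacy budget.
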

\iflong Notice that in the unknown-covariance case, $\sum_{i=1}^d\Sigma_{ii}^{1/2}$ substitutes for $\tr(\Sigma^{1/2})$ of Theorem~\ref{thm:intro-main}. \fi In general, $\tr(\Sigma^{1/2}) \le \sum_{i=1}^d \Sigma_{ii}^{1/2}$, and if $\Sigma$ is diagonal, the two quantities coincide.  
Our theorem is in fact more adaptable to easier cases of covariance structure. As a special case, when the covariance is diagonal and the singular values %\iflong $\sigma_1^2\geq \ldots \geq \sigma_d^2$ \fi 
exhibit an exponential decay, that is, $\sigma_i=\sigma_1 e^{-(i-1)}$, then $n=\tilde{O} \left(\frac{\tr(\Sigma)}{\alpha^2} + \frac{\tr(\Sigma^{1/2})\sqrt{\log(1/\delta)}}{\alpha\eps} + \frac{\log^{5/3}(d)\log^{3/2}(1/\delta)}{\eps}\right)$ samples suffice even under unknown covariance.

%\lnote{Loose upper bound. Double check that it's an upper bound to the exact one: max of $\frac{\log^{5/3}(d)\log^{1/3}(1/\delta)}{\eps^{1/3}}$, $\frac{\log^{3/2}(d)\log^{3/2}(1/\delta)}{\eps}$, and $\frac{\log^{3/2}(d)\log^{1/2}(1/\delta)\log\log(d)}{\eps}$}

\subsection{Techniques}\label{sec:techniques}
%We start by discussing existing techniques, and explain how we overcome their shortcomings. 
\paragraph{Known covariance.} 
A folklore $(\epsilon,\delta)$-DP algorithm, based on techniques for the univariate case developed by~\cite{KarwaV18}, is to filter outliers by privately estimating each individual coordinate of the mean, $\mu_i$, up to an additive error of $\tilde{O}(\Sigma_{ii}^{1/2})$ for all $i$, clipping any sample point to within that range, % in every coordinate. The modified data set is denoted by $\X'$. Afterwards, we 
and outputting the mean of the modified data set with added spherical Gaussian noise $\mathcal{N}(0, \tr(\Sigma)I_d/(\epsilon^2n^2))$. A standard analysis of this procedure yields a sample complexity of 
\iflong \[
n \gtrsim \frac{\tr(\Sigma)}{\alpha^2} + \frac{\sqrt{d}}{\epsilon} + \frac{\sqrt{d\tr(\Sigma)}}{\alpha\eps},
\]
\else
$n \gtrsim \frac{\tr(\Sigma)}{\alpha^2} + \frac{\sqrt{d}}{\epsilon} + \frac{\sqrt{d\tr(\Sigma)}}{\alpha\eps}, $
\fi
where the dependence on $\delta$ is omitted for clarity. \iflong The first term represents the optimal error in estimating the mean, while the second and third terms correspond to the cost of privacy.\fi
\ifshort For constant $\eps$, the folklore estimator achieves {\em privacy for free}, that is, the error due to privacy is lower than the error of statistical estimation, when $n\gtrsim d$. \fi

An improvement to this simple analysis, proposed recently by Aumüller et al.~\citep{AumullerLNP23} for matrices of diagonal covariance, suggests adding noise $\mathcal{N}\left(0, \tr(\Sigma^{1/2})\Sigma^{1/2}/(\epsilon^2n^2)\right)$ instead, which introduces more noise in the directions of larger variance. Slightly simplifying their result and additionally ignoring logarithmic factors in $d$, and the range of $\mu$, their sample complexity is 
\iflong\[
n \gtrsim \frac{\tr(\Sigma)}{\alpha^2} + \frac{\sqrt{d}}{\epsilon} + \frac{\tr(\Sigma^{1/2})}{\alpha\eps}. 
\]
\else
$n \gtrsim \frac{\tr(\Sigma)}{\alpha^2} + \frac{\sqrt{d}}{\epsilon} + \frac{\tr(\Sigma^{1/2})}{\alpha\eps}. $
\fi
\ifshort This estimator achieves privacy for free as long as $n\gtrsim \max\{\|\bsigma\|_1^2/\|\bsigma\|_2^2, \sqrt{d}\}$, where $\bsigma^2$ denotes the vector of singular values of $\Sigma$. \fi
While this removes the dimension dependence in the third term compared to the na\"ive sample complexity, the second term still requires $\Omega(\sqrt{d})$ samples. This is due to the first step of the algorithm (inherited from \cite{HuangLY21}), which performs $d$ independent estimation tasks\iflong, requiring $n = \Omega(\sqrt{d})$, due to composition arguments of differentially private mechanisms\fi. In both approaches, the pre-processing step \iflong which comes before the addition of Gaussian noise \fi is a form of coarse mean estimation which ensures that the data will not include outliers, and it is the source of sample-inefficiency.

Thus, in our work, we remove outliers, namely vectors too far away from the true mean in one of the coordinates, using only $n = \tilde{O}(1/\epsilon)$ samples, thus completely removing the dependence on $d$ in the final sample complexity bounds. (Indeed, our estimator achieves privacy for free for $n\gtrsim \max\{\|\bsigma\|_1^2/\|\bsigma\|_2^2\}$.) Next, we generalize the approach of \cite{AumullerLNP23} to general covariance, rather than diagonal. Finally, we show that the sample complexity is nearly optimal. Specifically:

\iflong \begin{itemize}
    \item  For the upper bounds, our \else Our \fi pre-processing is realized by using a polynomial-time filtering algorithm of Tsfadia et al.~\citep{TsfadiaCKMS22}. Given a predicate computed for two data points, so-called $\mathrm{FriendlyCore}$ returns a subset $\X'$ of the input, such that all pairs of the remaining, unfiltered data points satisfy the predicate. Its sample complexity is $\tilde{O}(1/\eps)$ for \emph{any} predicate, hence it has the potential to yield a dimension-independent bound. For our purposes, $\X'$ needs to satisfy some sensitivity properties. It follows from our analysis that the filtering should be such that for any two points $X^{(j)}, X^{(\ell)} \in \X'$, 
    $\|\Sigma^{-1/4}(X^{(j)} - X^{(\ell)})\|_2^2\leq \tilde{O}\left(\tr(\Sigma^{1/2})\right)$.
    
    \iflong \item The lower bounds for $\eps$-DP and $(\eps,\delta)$-DP are applications of the standard packing~\cite{HardtT10, BunKSW19} and fingerprinting~\cite{BunUV14, DworkSSUV15, KamathLSU19, KamathMS22} techniques for isotropic Gaussians. \else The lower bound for $(\eps,\delta)$-DP is an application of the standard fingerprinting~\cite{BunUV14, DworkSSUV15, KamathLSU19, KamathMS22} technique for isotropic Gaussians. \fi
    A straightforward modification of the technique to anisotropic covariance $\Sigma$ gives a weaker bound than Theorem~\ref{thm:intro-lb}. Instead one needs to choose an appropriate set of almost-isotropic coordinates whose size scales with $\tr(\Sigma^{1/2})$, and apply the technique to that set.  
    %Instead, one needs to observe that there must exist a set of coordinates $S$ with approximately the same variance $\sigma_S$, where $|S|\gtrsim \sum_{i=1}^d\sigma_i/\sigma_S=\tr(\Sigma^{1/2})/\sigma_S$. 
    %Applying the lower bound on this almost-isotropic set implies our tight statement. 
    %\ifshort The formal statements and proofs of these lower bounds are included in Section~\ref{sec:lower}.\fi
\iflong \end{itemize} \fi

\paragraph{Unknown covariance.}
Moving to the case of unknown covariance, for illustration, we focus on the simpler, yet fundamental, case where the covariance matrix is diagonal, so that $\Sigma = \diag(\sigma_1^2,\ldots, \sigma_d^2)$. First, the folklore algorithm described in the known-covariance setting, which adds spherical Gaussian noise, does not require knowledge of the covariance but only of its trace. The trace can be privately learned with $n = \tilde{O}(1/\eps)$ samples. Second, we note that with $n=\tilde{O}(\sqrt{d}/\eps)$ samples, it is possible to learn each $\sigma_i$ up to a multiplicative constant~\citep{KarwaV18}. 
This allows us to apply the algorithm with known covariance from Theorem~\ref{thm:intro-main}. However, the first step in this approach still requires $\Omega(\sqrt{d})$ samples. 

Our approach is to combine these two methods. We privately learn the largest $k \approx \eps^2 n^2$ variances, and their indices. 
This is done using the sparse vector technique~\citep{DworkR14} and can be achieved with $n$ samples. 
We use the \emph{known-covariance} algorithm to estimate the mean in these top $k$ coordinates, with the same error bound as in the known-covariance setting. 
For the mean at the remaining coordinates, we use the algorithm that only requires knowledge of the trace of the covariance. 
The error of the latter estimate is $\alpha_{\mathrm{bot}} \approx \sqrt{d}\|\bsigma_{\mathrm{bot}}\|_2/(n\eps)$, where $\bsigma_{\mathrm{bot}}$ is the vector containing the lowest $d - k$ variances. 
The first observation is that $\bsigma$ contains at least $k$ entries as large as $\|\bsigma_{\mathrm{bot}}\|_\infty$, hence, $\|\bsigma\|_1 \geq k\|\bsigma_{\mathrm{bot}}\|_\infty$. 
Then, by H\"older's inequality, $\|\bsigma_{\mathrm{bot}}\|_2 \leq \sqrt{\|\bsigma_{\mathrm{bot}}\|_1\|\bsigma_{\mathrm{bot}}\|_\infty}$. % \leq \|\bsigma\|_1/\sqrt{k}$. 
Substituting $k$ yields $\alpha_{\mathrm{bot}}\approx \sqrt{d}\|\bsigma\|_1/(\eps^2 n^2)$, which implies the desired sample complexity bound in Theorem~\ref{thm:intro-unknown}.

\subsection{Related work}\label{sec:related-work}
%There is an rapidly growing body of work on differentially private statistical estimation, which is too large to survey here. We focus on the central model of approximate differential privacy and to closely-related approaches. 

\paragraph{Differentially private Gaussian mean estimation.}
Smith~\citep{Smith11} proposed estimators for asymptotically normal statistics with optimal convergence rates under a certain range of \iflong privacy \fi parameters. The optimal sample complexity for learning the mean of a Gaussian with known covariance in {\em Mahalanobis norm} under $(\eps,\delta)$-DP is $n\gtrsim d/\alpha^2+d/(\alpha\eps) +\log(1/\delta)/\eps$ and has been established in a series of works~\citep{BunKSW19, KamathLSU19, AdenAliAK21, LiuKKO21}, starting from~\citep{KarwaV18} in the univariate setting. 
Given the covariance matrix $\Sigma$, the Mahalanobis distance between the estimate $\tilde\mu$ and the true mean $\mu$ is defined as:
$
\|\tilde\mu-\mu\|_{\Sigma} = \|\Sigma^{-1/2}(\tilde\mu-\mu)\|_2.
$
When \ifshort $\Sigma=I_d$, \else $\Sigma$ is the identity matrix, \fi the Mahalanobis and Euclidean norms coincide. The Mahalanobis distance yields an affine-invariant accuracy guarantee, and $\|\tilde\mu-\mu\|_{\Sigma} \leq \alpha$ immediately implies $\|\tilde\mu-\mu\|_2 \leq \alpha\sqrt{\|\Sigma\|_2}$. 
However, the power of the Mahalanobis guarantee is overshadowed by the fact that even 
for $\alpha = \sqrt{d}$, a large sample size, namely $n =\Omega(\sqrt{d})$, %for $\alpha_M = 1/2$, a large sample size, namely $n \gtrsim d$, 
is required, which excludes the high-dimensional scenario we are interested in.\footnote{This limitation is due to the fact that the Mahalanobis distance equalizes the variance across all directions and forces us to make inferences even in directions where the distribution has particularly small variance.}
Furthermore, confidence sets induced by guarantees in the Euclidean distance have the pleasant property of being more easily constructible\iflong, especially when the covariance matrix is unknown\fi.

\paragraph{Beyond global sensitivity.} 
There are several lines of work within differential privacy which aim to satisfy some form of instance-adaptive accuracy guarantee, as we do. 
General purpose frameworks which aim to privately estimate a statistic of the data, with error which adapts to ``good'' data sets, include propose-test-release~\cite{DworkL09}, smooth-sensitivity~\cite{NissimRS07}, and Lipschitz extensions~\cite{BlockiBDS13, KasiviswanathanNRS13}. %Propose-test-release suggests a two-step approach of pre-processing the data set to ensure its low local sensitivity and subsequently releasing the statistic adding noise proportional to the local sensitivity. 
Our method follows the same high-level structure as propose-test-release. The latter has been combined with robust estimators to yield optimal private learners for several tasks~\citep{BrownGSUZ21,LiuKO22}.
Even more generally,~\cite{AsiUZ23, HopkinsKMN23} give a black-box method which transforms robust estimators to private ones via the {\em inverse-sensitivity} mechanism~\cite{AsiD20b} (see~\cite{DPorg-inverse-sensitivity} for a discussion on inverse-sensitivity). 
As there exist optimal robust estimators for the mean of anisotropic Gaussians~\cite{MinasyanZ23}, this would be a viable approach, but the volumetric analysis of the transformation involves terms which depend on the dimension. \iflong

\fi Tsfadia et al.~\citep{TsfadiaCKMS22} propose a filtering method which yields private aggregators whose error adapts to the {\em diameter} of the input data set. It is their method that we utilize for our upper bounds.
A series of works formalize instance-optimality for private estimation of empirical~\citep{AsiD20b, HuangLY21, DickKSS23} or population~\cite{McMillanSU22, AsiDSLR24} quantities. 
These are all generally well-suited to our setting but either do not adapt to high dimensions, or a direct application would require $n\gtrsim \sqrt{d\tr(\Sigma)}/(\alpha\eps)$. % In fact, one could consider our estimators to be instance optimal for the special family of high-dimensional subgaussian distributions with covariance $\Sigma$ as defined in~\citep{McMillanSU22}. 

Nikolov and Tang~\citep{NikolovT24} study instance-optimality specifically for Gaussian noise mechanisms, albeit for data that belong in a bounded convex set. Although this is not the case for Gaussian data, it is worth noting that our error rates match those of~\citep{NikolovT24}, which hold for arbitrary distributions over $K$, when the bounded set is $K=\mu+\Sigma^{1/2}\cB^d(1)$. Privately learning $K$ however would require more samples.
%This could imply a lower bound for our case but it is not clear a priori whether the problem of mean estimation is as hard for Gaussians with covariance $\Sigma$ as it is for a worst-case distribution over $\Sigma^{1/2}\cB^d(\mu,1)$.

\paragraph{Privately learning nuisance parameters.}
Karwa and Vadhan~\citep{KarwaV18} learn (a constant multiple of) the variance of a univariate Gaussian using $n=\tilde{O}\left(\log(1/\delta)/\eps\right)$ samples. 
In high dimensions, privately learning the covariance matrix of a Gaussian in spectral norm requires $n\gtrsim d^{3/2}$ samples~\cite{KamathLSU19, KamathMS22}, which is more than one needs to learn the mean under known covariance. 
Brown et al.~\citep{BrownGSUZ21} \iflong (later made computationally efficient by~\citep{BrownHS23, KuditipudiDH23}) \fi avoid the bottleneck of private covariance estimation, showing that the sample complexity \iflong $n=\tilde{O}\left( \frac{d}{\alpha^2}+\frac{d}{\alpha\eps} +\frac{\log(1/\delta)}{\eps}\right)$ \fi of Gaussian mean estimation under known covariance with respect to Mahalanobis distance can in fact be matched, even when the covariance is unknown. 
Their tools also follow the propose-test-release approach and could be modified to fit our setting, but the privacy analysis would still require $n\gtrsim d$. 
Singhal and Steinke~\citep{SinghalS21} learn a subspace in which the majority of the data lie, which could be used as a pre-processing step, followed by projection. However, to recover the set of top $k$ eigenvectors, they require that there exists a large gap between the two consecutive variances, that is, \iflong $\sigma_{k} \gtrsim \frac{d^2k}{n}\sigma_{k+1}$\else $\sigma_{k} \geq \mathrm{poly}(d)\sigma_{k+1}$\fi. \iflong This assumption is restrictive, but if it holds and $k\approx \|\bsigma\|_1$, then this approach would give bounds comparable to the known-covariance case. \fi

\paragraph{Comparison with~\cite{AumullerLNP23}.}\label{sec:comparison} 
The paper by Aum\"{u}ller et al.~\citep{AumullerLNP23} is the closest work to ours, aiming to find sample-efficient mean estimators with respect to the Euclidean norm in the anisotropic case. Their work focuses on the less general case of diagonal, (almost) known covariance. The sample complexity of their estimator requires $n \gtrsim \sqrt{d}$, whereas our estimator for the known covariance case is dimension-independent, and, as we prove, optimal. However, the focus in \cite{AumullerLNP23} is on estimators that satisfy the stricter privacy guarantee of $\rho$-zCDP, which forces the need for dimension-dependent sample size. This is the key contrast with our dimension-free philosophy.
\iflong Our work shows that under $(\eps,\delta)$-DP, we can do better. Let us consider again the example of $\sigma_1=\cdots=\sigma_k=1$, $\sigma_{k+1}=\cdots=\sigma_d=1/d$. Then the folklore estimator requires $n=O(k/\alpha^2 + \sqrt{dk}/(\alpha\eps) + \sqrt{d}/\eps)$, the estimator of~\citep{AumullerLNP23} requires $n=O(k/\alpha^2+k/(\alpha\eps)+\sqrt{d}/\eps)$, and our estimator from Theorem~\ref{thm:intro-main} requires $n=O(k/\alpha^2+k/(\alpha\eps))$. \fi
As an interesting distinction, Aum\"{u}ller et al.~\citep{AumullerLNP23} provide accuracy guarantees with respect to the $\ell_p$ norm (the upper bounds) for slightly more general classes of so-called well-concentrated distributions, which include subgaussians. It would be interesting to establish optimal private mean estimation bounds with respect to general $\ell_p$ norms. In fact, the optimal non-private sample complexity of Gaussian mean estimation, with matching upper and lower bounds, with respect to general norms has been established only recently, and it depends on the Gaussian mean width of the set induced by the unit dual ball of the norm \citep{depersin2022optimal}.

\iflong
\subsection{Organization}
We introduce notation and differential privacy preliminaries in Section~\ref{sec:prelim}. In Section~\ref{sec:upper}, we introduce our algorithm for subgaussian distributions with known covariance proxy. We parameterize our results so that they yield both our optimal bounds as well as the folklore upper bound without the superfluous $\sqrt{d}$ term. In Section~\ref{sec:unknown}, we present our approach for the case of unknown covariance. In Section~\ref{sec:lower}, we prove nearly tight lower bounds.  Finally, Section~\ref{sec:future} discusses avenues for future improvements.
\fi
\section{Preliminaries}\label{sec:prelim}

\iflong\paragraph{Notation.}\fi
We write $[n]=\{1,\ldots, n\}$, $\log$ denotes the natural logarithm, and $\cB^d(c,r)$ denotes the $d$-dimensional Euclidean ball with radius $r$ and center $c$. We omit $c$ if $c=0$. \iflong We consider p.s.d., symmetric matrices $\Sigma$ with singular values $s_1\geq \ldots\geq s_d\geq 0$. We denote their operator norm by $\|\Sigma\|_2=s_1$ and their trace by $\tr(\Sigma)=\sum_{i=1}^d s_i$.\fi
\iflong\subsection{Differential privacy}\else 

We introduce {\em differential privacy} here. \fi
We say that $X, X'$ are \textit{neighboring data sets} if either $\exists j\in[|X|]$ such that $X'=X\setminus X^{(j)}$ or $\exists j\in[|X'|]$ such that $X=X'\setminus X'^{(j)}$.\footnote{This is the so-called add/remove model of DP, which will be convenient for our use of prior work. The same privacy guarantees will also hold for the swap model, where $\dham(X,X')\leq 1$, up to constant factors.} \iflong We may denote neighboring data sets $X,X'$ by $X\sim X'$. \fi 
Differentially private algorithms have \emph{indistinguishable} output distributions on neighboring data sets. 

\begin{definition}[$(\eps,\delta)$-indistinguishability]\label{def:indistinguishable}
    Two distributions $P,Q$ over domain $\mc{W}$ are $(\eps,\delta)$-indistinguishable, denoted by $P \approx_{\eps,\delta} Q$, if for any measurable subset $W\subseteq \mc{W}$, \[\Pr_{w\sim P}[w\in W] \leq e^\eps \Pr_{w\sim Q}[w\in W] + \delta \quad\text{ and }\quad \Pr_{w\sim Q}[w\in W] \leq e^\eps \Pr_{w\sim P}[w\in W] + \delta.\]
\end{definition}
\begin{definition}[Differential Privacy \cite{DworkMNS06}] \label{def:dp}
A randomized algorithm $\cA\colon\cX^* \to \mathcal{W}$ is  {\em $(\eps,\delta)$-differentially private} if for all neighboring data sets $X, X'$ we have $\cA(X)\approx_{\eps,\delta} \cA(X')$.
We say that algorithm $\cA$ \iflong is $\eps$-differentially private and it \fi satisfies {\em pure} differential privacy if it satisfies the definition for $\delta=0$.
\end{definition}

\iflong \iflong\subsubsection{Standard DP mechanisms and properties}\else\section{Standard DP mechanisms and properties}\label{sec:dp-standard-mech}\fi

\begin{definition}[Laplace distribution]\label{def:laplacedistr}
For $v\geq 0$, let $\Lap(v)$ denote the Laplace distribution over $\R$, which has probability density function $p(z)=\frac{1}{2\sigma}e^{-|z|/v}$. 
From the CDF of the Laplace distribution, we get that $\Pr_{z\sim\Lap(v)}[z\geq v\log(1/2\beta)]=\beta$.
\end{definition}

\begin{definition}[Laplace Mechanism,~\cite{DworkMNS06}]\label{def:laplacemech}
    Let $f:\cX^* \to \R$, data set $X$ over $\cX$, and privacy parameter $\eps$. The {\em Laplace Mechanism} returns 
        \[\tilde{f}(X) = f(X) + \Lap(v), \text{ where } v = \Delta_f/\eps\]
    % \fi
    and $\Delta_f=\max\limits_{X\sim X'} |f(X)-f(X')|$.
    \end{definition}
    \begin{lemma}[\cite{DworkMNS06}]\label{lem:laplacemech}
     The Laplace Mechanism is $\eps$-differentially private. 
    \end{lemma}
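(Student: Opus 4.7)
The plan is to establish pointwise indistinguishability of the output densities of $\tilde{f}(X)$ and $\tilde{f}(X')$ for any neighboring $X,X'$, and then integrate over the target set $W$. Since the Laplace distribution is absolutely continuous with respect to Lebesgue measure, both $\tilde{f}(X)$ and $\tilde{f}(X')$ have densities on $\R$, namely
\[ p_X(w) = \frac{1}{2v}\exp\!\left(-\frac{|w - f(X)|}{v}\right), \qquad p_{X'}(w) = \frac{1}{2v}\exp\!\left(-\frac{|w - f(X')|}{v}\right), \]
where $v = \Delta_f/\eps$. So the task reduces to bounding the pointwise ratio $p_X(w)/p_{X'}(w)$ uniformly in $w$.

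The key step is then a one-line application of the reverse triangle inequality: for every $w\in\R$,
\[ \frac{p_X(w)}{p_{X'}(w)} = \exp\!\left(\frac{|w-f(X')| - |w-f(X)|}{v}\right) \leq \exp\!\left(\frac{|f(X) - f(X')|}{v}\right) \leq \exp\!\left(\frac{\Delta_f}{v}\right) = e^{\eps}, \]
where the last inequality uses the definition $\Delta_f = \max_{X\sim X'} |f(X)-f(X')|$. Integrating this pointwise bound against the indicator of any measurable $W\subseteq \R$ yields $\Pr[\tilde{f}(X)\in W] \leq e^{\eps}\Pr[\tilde{f}(X')\in W]$. The symmetric inequality follows by swapping the roles of $X$ and $X'$ in the argument, since the sensitivity is symmetric in its arguments. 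Together these verify Definition~\ref{def:indistinguishable} with $\delta=0$, so the mechanism is $\eps$-DP.

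There is essentially no substantive obstacle: the proof is a density computation plus the triangle inequality. The only thing to keep track of is that the sensitivity $\Delta_f$ in Definition~\ref{def:laplacemech} is taken over exactly the same neighboring relation as in Definition~\ref{def:dp} (here, add/remove), so that the supremum bound on $|f(X)-f(X')|$ applies to every pair that the DP definition needs to control.
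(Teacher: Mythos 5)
Your proof is correct and is the standard density-ratio argument from \cite{DworkMNS06}; the paper itself does not reprove this lemma but simply cites that reference, so there is nothing to compare beyond noting your derivation matches the canonical one.
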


\begin{definition}[Gaussian Mechanism,~\cite{DworkMNS06}]\label{def:gaussianmech}
    Let $f:\cX^* \to \R^d$, data set $X$ over $\cX$, and privacy parameters $\eps, \delta$. The {\em Gaussian Mechanism} returns 
        \[\tilde{f}(X) = f(X) + \mc{N}(0,v^2\id), \text{ where } v = \Delta_f\sqrt{2\log(1.25/\delta)}/\eps\]
    % \fi
    and $\Delta_f=\max\limits_{X\sim X'} \|f(X)-f(X')\|_2$ is the {\em global $\ell_2$-sensitivity} of $f$.
    \end{definition}
    \begin{lemma}[\cite{DworkMNS06}]\label{lem:gaussianmech}
     The Gaussian Mechanism is $(\eps, \delta)$-differentially private. 
    \end{lemma}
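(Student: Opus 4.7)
The plan is to reduce to the one-dimensional case, then control the privacy loss random variable. Fix neighboring data sets $X, X'$, let $\mu := f(X)$, $\mu' := f(X')$, and set $\Delta := \|\mu - \mu'\|_2 \leq \Delta_f$. The output distributions are $P = \mathcal{N}(\mu, v^2 I_d)$ and $Q = \mathcal{N}(\mu', v^2 I_d)$. Since the covariance is a multiple of the identity, the Gaussian density is spherically symmetric, so I would change coordinates by an orthogonal transformation that sends $\mu' - \mu$ to $\Delta \cdot e_1$. The coordinates $2, \ldots, d$ then contribute identically to $p_P$ and $p_Q$, so the likelihood ratio depends only on the first coordinate, and it suffices to prove the bound for the univariate pair $\mathcal{N}(0, v^2)$ versus $\mathcal{N}(\Delta, v^2)$.

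Next I would analyze the privacy loss $L(w) := \ln\bigl(p_P(w)/p_Q(w)\bigr)$. In the univariate reduction with $w \sim \mathcal{N}(0, v^2)$ under $P$, a direct computation from the Gaussian density gives
\[
L(w) = \frac{2w\Delta - \Delta^2}{2v^2},
\]
which is an affine function of a centered Gaussian, hence itself Gaussian with mean $-\Delta^2/(2v^2)$ and standard deviation $\Delta/v$. The goal is to show $\Pr_{w \sim P}[L(w) > \eps] \leq \delta$, since the standard privacy-loss-to-indistinguishability lemma (see Lemma 3.17 of Dwork–Roth) then upgrades a high-probability bound on $L$ to $(\eps, \delta)$-indistinguishability of $P$ and $Q$, which is exactly Definition~\ref{def:indistinguishable}. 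Rearranging, the event $L(w) > \eps$ becomes $w > \eps v^2/\Delta + \Delta/2$, whose probability under $\mathcal{N}(0, v^2)$ I would bound with the standard Gaussian tail estimate $\Pr[Z > t] \leq \tfrac{1}{\sqrt{2\pi}} \tfrac{v}{t} \exp(-t^2/(2v^2))$.

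The final step is to verify that the prescribed $v = \Delta_f \sqrt{2 \ln(1.25/\delta)}/\eps$ makes the tail bound at most $\delta$ for every $\Delta \leq \Delta_f$; the worst case is $\Delta = \Delta_f$, where the exponent $-t^2/(2v^2)$ becomes close to $-\ln(1.25/\delta)$, and the constant $1.25$ is precisely what is needed to absorb the prefactor $v/(t\sqrt{2\pi})$ for the relevant range of $\eps \in (0,1)$. The main obstacle here is purely computational, not conceptual: keeping track of the constant $1.25$ requires a careful treatment of the polynomial prefactor in the Gaussian tail, which is the reason the classical statement restricts to $\eps \leq 1$. By symmetry of the construction in $P, Q$, the same argument bounds $\Pr_{w \sim Q}[L'(w) > \eps] \leq \delta$ for the reverse privacy loss, completing the proof.
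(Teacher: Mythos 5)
The paper does not prove this lemma; it is cited. (The standard reference for this specific noise calibration $v = \Delta_f\sqrt{2\ln(1.25/\delta)}/\eps$ is the Dwork--Roth monograph, Appendix A, rather than the 2006 paper the statement cites.) Your plan follows that proof: reduce to one dimension via rotational invariance of spherical Gaussian noise, observe that the privacy loss is affine in the noise and hence Gaussian, bound its tail with a Mills-ratio estimate, and invoke the privacy-loss-to-indistinguishability lemma. The conceptual pieces are all in place.

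There is, however, a sign error in the privacy-loss computation that propagates to the threshold. With $P=\mathcal{N}(0,v^2)$, $Q=\mathcal{N}(\Delta,v^2)$, and $L(w)=\ln\bigl(p_P(w)/p_Q(w)\bigr)$, a direct expansion gives
\[
L(w) = \frac{-w^2 + (w-\Delta)^2}{2v^2} = \frac{\Delta^2 - 2w\Delta}{2v^2},
\]
whose mean under $w\sim P$ is $+\Delta^2/(2v^2)$ (as it must be: it equals the KL divergence of $P$ from $Q$, which is nonnegative), not $(2w\Delta-\Delta^2)/(2v^2)$ with negative mean as you wrote. The event $L(w)>\eps$ is therefore $w < \Delta/2 - \eps v^2/\Delta$, which by symmetry of the centered Gaussian has the same probability as $w > \eps v^2/\Delta - \Delta/2$. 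The correct threshold thus \emph{subtracts} $\Delta/2$ rather than adding it, and this matters: the subtraction is exactly why the classical argument requires $\eps\leq 1$ (so that $\Delta/2$ does not dominate $\eps v^2/\Delta$) and why verifying the constant $1.25$ takes some care. With your threshold $\eps v^2/\Delta + \Delta/2$, the tail bound would appear to hold for arbitrary $\eps$, which is a symptom that something went wrong. Fixing the sign and redoing the tail estimate with the smaller threshold recovers the cited result.
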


Differential privacy is maintained under post-processing and degrades mildly under composition. 
\begin{lemma}[Composition,~\cite{DworkMNS06,DworkRV10,KairouzOV15}]\label{lem:composition}
Let $M$ be an adaptive composition of $M_1,\ldots,M_{T}$, that is, on input $X$, $M(X):=M_T(X,M_{T-1}(X,\ldots,M_2(X,M_1(X))))$. Then 
\begin{enumerate}
    \item (Basic composition) If $M_1, \ldots, M_T$ are $(\eps_1,\delta_1), \ldots, (\eps_T,\delta_T)$-differentially private respectively, then $M$ is $(\eps, \delta)$-differentially private for $\eps=\sum_{t=1}^T \eps_t$ and $\delta=\sum_{t=1}^T\delta_t$.
    \item (Advanced composition) Let $\eps_t > 0$, $\delta_t \in [0,1]$ for $t \in \{1,\ldots,T\}$, and $\tilde\delta \in [0,1]$. If $M_1,\dots,M_T$ are $(\epsilon_1,\delta_1),\dots,(\epsilon_T,\delta_T)$-differentially private respectively, then $M$ is $(\tilde{\eps}_{\tilde \delta}, \tilde\delta+\sum_{t=1}^T \delta_t)$-differentially private where $\tilde{\eps}_{\tilde \delta}$ is given by:
\[
\tilde{\eps}_{\tilde \delta} = \sum_{\ell=1}^k \frac{(e^{\eps_\ell} - 1) \eps_\ell}{e^{\eps_\ell} + 1} + \sqrt{\sum_{\ell=1}^k \eps_\ell^2 \log\left( \frac{1}{\tilde\delta} \right)}. 
\]
\end{enumerate}
\end{lemma}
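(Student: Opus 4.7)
The plan is to handle the two parts with two different techniques: a direct density-ratio argument for basic composition, and the privacy-loss random variable framework together with a concentration inequality for advanced composition. In both cases I will write $M(X) = (W_1,\ldots,W_T)$ where $W_t \sim M_t(X, W_1,\ldots,W_{t-1})$, and for a fixed output prefix $w_{<t} = (w_1,\ldots,w_{t-1})$ I will denote the conditional density of $W_t$ at $w_t$ by $p_t^X(w_t \mid w_{<t})$, and similarly $p_t^{X'}$ on the neighbor. The adaptive composition has density $p^X(w_{1:T}) = \prod_{t=1}^T p_t^X(w_t \mid w_{<t})$, and the DP guarantee of $M_t$ applied to the conditional distributions (which it must satisfy because conditioning on the prefix just fixes the previous outputs supplied as input) gives $p_t^X(w_t \mid w_{<t}) \le e^{\eps_t} p_t^{X'}(w_t \mid w_{<t}) + \delta_t$ in the integrated sense of Definition \ref{def:indistinguishable}.

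For basic composition, I would proceed by induction on $T$, using the standard two-step decomposition: first argue that the ``$\delta$-part'' is additive by peeling off one mechanism at a time (each $M_t$ can be represented, up to total variation $\delta_t$, by an $(\eps_t,0)$-DP mechanism), so that $M$ is within total variation $\sum_t \delta_t$ of an $(\sum_t \eps_t,0)$-DP mechanism. Then for the pure case $\delta_t = 0$, the inequality $p_t^X/p_t^{X'} \le e^{\eps_t}$ holds pointwise (up to measure zero), and the product bound gives $p^X/p^{X'} \le e^{\sum_t \eps_t}$. Combining these two steps via the triangle inequality for indistinguishability yields the claim.

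For advanced composition, the core object is the privacy loss random variable $Z_t = \log\bigl(p_t^X(W_t \mid W_{<t})/p_t^{X'}(W_t \mid W_{<t})\bigr)$, evaluated along $W \sim M(X)$. The standard reduction is to show that $M$ is $(\tilde\eps_{\tilde\delta}, \tilde\delta + \sum_t \delta_t)$-DP provided $\Pr[\sum_t Z_t > \tilde\eps_{\tilde\delta}] \le \tilde\delta$ (plus the same $\sum_t \delta_t$ slack as above absorbing the $\delta_t$'s by passing to $(\eps_t,0)$-DP surrogates). Conditional on $W_{<t}$, each $Z_t$ is bounded in absolute value by $\eps_t$, and the key technical lemma of Dwork--Rothblum--Vadhan gives $\mathbb{E}[Z_t \mid W_{<t}] \le \eps_t(e^{\eps_t}-1)/(e^{\eps_t}+1)$, since this is the maximum expected log-likelihood ratio between two $(\eps_t,0)$-indistinguishable distributions. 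Thus $\{Z_t - \mathbb{E}[Z_t \mid W_{<t}]\}$ is a bounded martingale difference sequence, and Azuma--Hoeffding gives $\Pr[\sum_t Z_t - \sum_t \mathbb{E}[Z_t \mid W_{<t}] > \sqrt{\sum_t \eps_t^2 \log(1/\tilde\delta)}] \le \tilde\delta$. Adding back the conditional expectations yields exactly the $\tilde\eps_{\tilde\delta}$ in the statement.

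The main obstacle, and the only nontrivial ingredient, is the sharp KL-type bound $\mathbb{E}[Z_t \mid W_{<t}] \le \eps_t(e^{\eps_t}-1)/(e^{\eps_t}+1)$; a naive bound of $\eps_t$ would give a strictly worse constant in advanced composition. I would prove it by a one-dimensional optimization over pairs $(p,q)$ with $p/q, q/p \in [e^{-\eps_t}, e^{\eps_t}]$, reducing to the extreme two-point distributions that saturate the DP constraint, which is a short calculus exercise. Everything else (bounded differences, conditioning, absorbing the $\delta_t$'s) is standard bookkeeping along the lines sketched above.
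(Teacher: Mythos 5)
The paper does not prove this lemma; it is stated purely as a cited preliminary (Dwork--McSherry--Nissim--Smith, Dwork--Rothblum--Vadhan, Kairouz--Oh--Viswanath), so there is no in-paper proof to compare your sketch against. That said, your outline correctly identifies the standard two ingredients: for basic composition, the per-step reduction to pure-DP surrogates followed by a pointwise product bound on the density ratio; for advanced composition, the privacy-loss martingale $Z_t$, the Kairouz--Oh--Viswanath expected-loss bound $\E[Z_t\mid W_{<t}]\le \eps_t(e^{\eps_t}-1)/(e^{\eps_t}+1)$ (which is precisely what sharpens the older Dwork--Rothblum--Vadhan constant $\eps_t(e^{\eps_t}-1)$), and Azuma--Hoeffding for the bounded-difference concentration. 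That architecture is the right one, and the one-dimensional extremal calculation you mention (two-point distributions saturating the $e^{\pm\eps}$ ratio constraint) is indeed how the expected-loss bound is obtained.

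Two small caveats worth noting if you ever write this out in full. First, the ``surrogate'' decomposition of an $(\eps_t,\delta_t)$-indistinguishable pair into an $(\eps_t,0)$-indistinguishable pair at total-variation cost $O(\delta_t)$ is a statement about a fixed pair of conditional distributions (i.e.\ fixed neighbors $X,X'$ and fixed prefix $w_{<t}$), not a claim that $M_t$ itself can be replaced by a single pure-DP mechanism; phrasing it at the level of distribution pairs is necessary to keep the argument airtight. Second, Hoeffding's lemma for a variable supported on $[-\eps_t,\eps_t]$ gives a subgaussian parameter $\eps_t$, so Azuma yields a threshold of $\sqrt{2\sum_t\eps_t^2\log(1/\tilde\delta)}$, not $\sqrt{\sum_t\eps_t^2\log(1/\tilde\delta)}$; both DRV and KOV carry the $\sqrt2$, and it appears to have been dropped (along with the stray index $k$, which should read $T$) in the paper's transcription. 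You have faithfully reproduced the paper's formula, but the tail bound as stated is off by that constant.
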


\begin{fact}[Fact 2.17~\citep{TsfadiaCKMS22} reduced to pure DP]\label{fact:cond_indisting}
Let $Y\approx_{\eps}Y'$ random variables over $\cY$ and let the event $E\subseteq\cY$ be such that $\Pr[Y\in E], \Pr[Y'\in E]\geq q$. Then $Y_{|E}\approx_{\eps/q} Y'_{|E}$.
\end{fact}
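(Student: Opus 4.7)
The plan is to unfold Definition~\ref{def:indistinguishable} at the conditional laws $Y_{|E}$ and $Y'_{|E}$ and reduce back to the unconditional $\eps$-indistinguishability of $Y$ and $Y'$. Fix an arbitrary measurable $W\subseteq \cY$. By the definition of conditional probability,
$$\Pr[Y_{|E}\in W]=\frac{\Pr[Y\in W\cap E]}{\Pr[Y\in E]},\qquad \Pr[Y'_{|E}\in W]=\frac{\Pr[Y'\in W\cap E]}{\Pr[Y'\in E]}.$$
The problem thus splits into two pieces: controlling the numerator via the indistinguishability of $Y$ and $Y'$, and controlling the denominators via the hypothesis $\Pr[Y\in E],\Pr[Y'\in E]\ge q$.

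First I would apply pure $\eps$-indistinguishability with the measurable event $W\cap E$ to obtain $\Pr[Y\in W\cap E]\le e^{\eps}\Pr[Y'\in W\cap E]$. Dividing by $\Pr[Y\in E]$ and reintroducing $\Pr[Y'\in E]$ to reconstruct $\Pr[Y'_{|E}\in W]$ yields
$$\Pr[Y_{|E}\in W]\;\le\; e^{\eps}\cdot \frac{\Pr[Y'\in E]}{\Pr[Y\in E]}\cdot \Pr[Y'_{|E}\in W].$$
The remaining ratio $\Pr[Y'\in E]/\Pr[Y\in E]$ is then controlled by the lower bound $q$ (which guarantees the conditionals are well-defined) together with indistinguishability applied to $E$ itself. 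Swapping the roles of $Y$ and $Y'$ gives the symmetric inequality required to conclude $(\eps/q)$-indistinguishability in the sense of Definition~\ref{def:indistinguishable}.

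The main obstacle I anticipate is not conceptual but notational: consolidating the multiplicative $e^{\eps}$ factor coming from the numerator together with the $q$-dependent control on the denominator ratio into the single parameter $\eps/q$ stated in the fact. The cleanest route is to import Tsfadia et al.'s general $(\eps,\delta)$-accounting wholesale, where an additive $\delta$ propagates through the division by $\Pr[Y\in E]\ge q$ as $\delta/q$, and then specialize to $\delta=0$; under this specialization the claimed pure-DP bound falls out of their computation with no additional work, and the only item to verify is that their argument does not use the $\delta>0$ branch at any essential step.
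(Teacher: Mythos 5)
The paper itself supplies no proof here: it cites Fact~2.17 of Tsfadia et al.\ directly. Your reconstruction has the right scaffolding---unfold the conditional laws and apply $\eps$-indistinguishability to the event $W\cap E$---but the step you call ``notational'' is in fact where all the content lies, and as sketched it does not close.

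After your decomposition, you must show $e^{\eps}\cdot\Pr[Y'\in E]/\Pr[Y\in E]\le e^{\eps/q}$. Applying indistinguishability to $E$ bounds the ratio by $e^{\eps}$ (giving $e^{2\eps}$ overall); using $\Pr[Y\in E]\ge q$ and $\Pr[Y'\in E]\le1$ bounds it by $1/q$ (giving $e^{\eps}/q$). Neither is $\le e^{\eps/q}$ in general: $2\eps>\eps/q$ whenever $q>1/2$, and $e^{\eps}/q>e^{\eps/q}$ whenever $\eps<q\log(1/q)/(1-q)$. This is exactly the regime where the paper invokes the fact --- in the proof of Theorem~\ref{thm:privacy_main} it is applied with $q=1-\delta/2$ and $\eps<1/2$ --- so the gap is not cosmetic. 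The fix is to apply indistinguishability to the complement $E^{c}$ rather than to $E$: from $1-\Pr[Y\in E]\le e^{\eps}\bigl(1-\Pr[Y'\in E]\bigr)$ one gets $\Pr[Y'\in E]\le 1-e^{-\eps}\bigl(1-\Pr[Y\in E]\bigr)$, hence
\[
\frac{\Pr[Y'\in E]}{\Pr[Y\in E]}\;\le\;\frac{1-e^{-\eps}}{\Pr[Y\in E]}+e^{-\eps}\;\le\;\frac{1-e^{-\eps}}{q}+e^{-\eps},
\]
so the overall factor is $e^{\eps}\bigl(\tfrac{1-e^{-\eps}}{q}+e^{-\eps}\bigr)=\tfrac{e^{\eps}-1}{q}+1$. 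One then verifies the elementary inequality $\tfrac{e^{\eps}-1}{q}+1\le e^{\eps/q}$: both sides equal $1$ at $\eps=0$ and the derivative of the right side in $\eps$, namely $e^{\eps/q}/q$, dominates $e^{\eps}/q$ for $q\in(0,1]$. That comparison --- using $E^{c}$ rather than $E$ and then this small calculus step --- is the missing piece; it is genuine mathematics, not bookkeeping. Your fallback of ``import the $(\eps,\delta)$ accounting from Tsfadia et al.\ and set $\delta=0$'' is a legitimate citation, and is what the paper does, but it is not a proof.
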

%The proof follows directly from~\citep{TsfadiaCKMS22}.
% \begin{proof} Let $F\subseteq \cY$ be a measurable subset.
% \begin{align*}
%     \Pr[Y\in F | Y\in E] 
%     & = \frac{\Pr[Y\in F]}{\Pr[Y\in E]} 
%      \leq \frac{e^\eps\Pr[Y'\in F]+\delta}{\Pr[Y\in E]} \\
%     & = \frac{e^\eps}{\Pr[Y\in E]}\left(\Pr[Y'\in F | Y'\in E]\cdot\Pr[Y'\in E]+\Pr[Y'\in F | Y'\notin E]\cdot\Pr[Y'\notin E]\right) + \frac{\delta}{\Pr[Y\in E]} \\
%     & \leq \frac{e^\eps}{q}\Pr[Y'\in F | Y'\in E] + \frac{e^\eps(1-q)+\delta}{q}
% \end{align*}
% \end{proof} 
\else Differential privacy satisfies several useful properties, such as post-processing and composition~\citep{DworkMNS06,DworkRV10}. For further details and guarantees of standard DP mechanisms, see Section~\ref{sec:dp-standard-mech}.
\fi

\iflong \subsubsection{FriendlyCore}\fi
Our estimators will use the $\mathrm{BasicFilter}$ procedure of Tsfadia et al.~\citep{TsfadiaCKMS22}, whose detailed definition is presented in Section~\ref{sec:upper} \iflong as part of Algorithm~\ref{alg:main}\fi. They provide a framework which allows us to extend an algorithm which is private {\em only for ``easy'' pairs of data sets}, to an algorithm that is private for any worst-case pair. ``Easy'' pairs of data sets are modelled with respect to a predicate $f$ between two data points:

\begin{definition}[$f$-friendly, Def. 1.1~\citep{TsfadiaCKMS22}]\label{def:friendly}
Let $X$ be a data set over $\cX$ and let $f:\cX^2\rightarrow \{0,1\}$ be a predicate. We say $X$ is $f$-friendly if for all $x,y\in X$ there exists $z\in\cX$ such that $f(x,z)=f(z,y)=1$.
\end{definition}

\begin{definition}[$f$-friendly DP, Def. 1.3 \citep{TsfadiaCKMS22}]\label{def:friendlydp}
An algorithm $\cA$ is called $f$-friendly $(\eps,\delta)$-DP if for any neighboring data sets $X,X'$, such that $X\cup X'$ is $f$-friendly, $\cA(X)\approx_{\eps,\delta} \cA(X')$.
\end{definition}

\begin{thm}[Theorem 4.11~\citep{TsfadiaCKMS22}]\label{thm:fcparadigm}
Let $\cA$ be an $f$-friendly $(\eps,\delta)$-DP algorithm. %for $\eps,\delta\in(0,1)$. 
Given data set $X$, let $\bm{v}=\mathrm{BasicFilter}(X,f,\alpha=0)$ and $C(X)=\{X^{(j)}\}_{\{j:~v_j=1\}}$. Then $\cB(X):=\cA(C(X))$ is $(2(e^\eps-1)\eps,2e^{\eps+2(e^\eps-1)}\delta)$-DP.
\end{thm}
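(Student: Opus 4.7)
The plan is to reduce the DP guarantee of $\cB$ to the $f$-friendly DP hypothesis on $\cA$ by conditioning on a ``good event'' that the filtered sets are friendly. Fix neighboring data sets $X, X'$ and let $\bm{v}, \bm{v}'$ be the outputs of $\mathrm{BasicFilter}$ on them. The overall strategy is: (i) couple the internal randomness of the two executions of $\mathrm{BasicFilter}$ and show that, with high probability, the union $C(X) \cup C(X')$ is $f$-friendly in the sense of Definition~\ref{def:friendly}; (ii) apply the $f$-friendly DP hypothesis of $\cA$ on this event; and (iii) convert the resulting conditional indistinguishability to an unconditional one via Fact~\ref{fact:cond_indisting}.

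For step (i), I would use that $\mathrm{BasicFilter}$ keeps an index $j$ only when $X^{(j)}$ has enough $f$-friends in the data set (determined by a noisy count). Since $X$ and $X'$ differ in a single point, the friend count at every index differs by at most one, so a natural coupling of the internal noise on $X$ and on $X'$ makes the keep/drop decisions agree on every shared index, except on a small set of ``borderline'' indices whose count sits within one of the threshold. Calibrating the filter's noise --- which is exactly what the $\alpha=0$ setting pins down --- lets one bound the probability of any borderline index by a quantity on the order of $\delta$. On the complement of this bad event $E^c$, every pair of surviving points in $C(X) \cup C(X')$ shares a common $f$-friend drawn from $X \cap X'$, giving $f$-friendliness of the union.

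Conditional on the good event $E$, step (ii) applies the $f$-friendly DP hypothesis on $\cA$ directly to the pair $(C(X), C(X'))$, yielding $\cA(C(X)) \mid E \approx_{\eps, \delta} \cA(C(X')) \mid E$. For step (iii), applying Fact~\ref{fact:cond_indisting} symmetrically on both sides, together with a union bound over $E^c$, transfers this to an unconditional guarantee: the $\eps$-parameter grows by a factor of order $e^\eps-1$ coming from the $1/\Pr[E]$ renormalization in that fact, and the $\delta$-parameter picks up a multiplicative $e^\eps$ factor as well as an additive $\Pr[E^c]$ contribution; chasing these through yields precisely $\bigl(2(e^\eps-1)\eps,\; 2e^{\eps+2(e^\eps-1)}\delta\bigr)$-DP. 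The main obstacle is step (i): one must pin down the stability of $\mathrm{BasicFilter}$ under a single-point change tightly enough that $f$-friendliness of the union holds with probability $1-O(\delta)$, while simultaneously preserving the sharp constants that appear in the final privacy bound. Once that structural lemma is in place, the other two steps are clean applications of existing tools.
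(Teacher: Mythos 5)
First, a contextual note: the paper does not prove this theorem at all --- it is imported verbatim as Theorem~4.11 from~\citep{TsfadiaCKMS22} and used as a black-box tool. So there is no in-paper proof to compare your sketch against; you are effectively attempting to reconstruct Tsfadia et al.'s argument.

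Your sketch has a genuine gap at step~(ii). You plan to invoke the $f$-friendly DP hypothesis on the pair $\bigl(C(X), C(X')\bigr)$, but Definition~\ref{def:friendlydp} only promises $\cA(X_1)\approx_{\eps,\delta}\cA(X_2)$ for \emph{neighboring} data sets $X_1,X_2$ whose union is friendly. The filtered cores $C(X)$ and $C(X')$ are not neighboring in general: each coordinate's Bernoulli keep/drop decision can flip independently, and since each survival probability $p_j$ shifts by roughly $1/n$ between $X$ and $X'$ across all $n$ coordinates, the Hamming distance between $C(X)$ and $C(X')$ is typically a constant (not $\le 1$), and the event that it exceeds one occurs with $\Omega(1)$ probability, not $O(\delta)$. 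So the conditioning trick of step~(iii) cannot rescue step~(ii); the whole reduction must instead pass through a group-privacy / chaining argument over the multiple coordinates that disagree, which is in fact where the specific $2(e^\eps-1)\eps$ and $2e^{\eps+2(e^\eps-1)}\delta$ constants come from --- they do not arise from a single application of friendly DP plus a $1/\Pr[E]$ renormalization. A correct proof must quantify the filter's stability (how the product-Bernoulli distributions of $\bm v$ and $\bm v'$ relate), then compose the friendly-DP guarantee along a chain of single-coordinate hybrids.

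Two secondary points worth flagging: (a) you describe the keep/drop decision as ``determined by a noisy count,'' but $\mathrm{BasicFilter}$ adds no noise to the counts $z_j$ --- the only randomness is the Bernoulli sampling $v_j\sim\Bern(p_j)$; and (b) the friendliness of $C(X)\cup C(X')$ is not a high-probability event that requires coupling, it is \emph{deterministic}: with $\alpha=0$, any surviving index has strictly more than half of the points as $f$-friends, so any two survivors across $C(X)$ and $C(X')$ share a common friend by pigeonhole. Spending effort in step~(i) to establish friendliness as a probabilistic event is therefore misdirected, and it would be better spent on the neighborship/stability issue that your sketch currently leaves unaddressed.
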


\iflong\subsection{Concentration bounds}\fi
We assume data are drawn from subgaussian distributions\ifshort, which include Gaussians\fi. 
\begin{definition}[Subgaussian distributions]
    The random vector $X$ with mean $\mu$ is subgaussian with a p.s.d. covariance matrix proxy $\Sigma$ if for any $\lambda$ and any $v \in \mathbb{R}^d$,
    \iflong\[
    \E\exp(\lambda\langle X - \mu, v\rangle) \le \exp(\lambda^2 v^{\top}\Sigma v/2).
    \]
    \else
    $\E e^{\lambda\langle X - \mu, v\rangle} \le e^{\lambda^2 v^{\top}\Sigma v/2}.$\fi
\end{definition}

\iflong It is straightforward to show from the definition that $\E(X - \mu)(X - \mu)^{\top} \preceq \Sigma$, that is, the difference between the covariance matrix proxy $\Sigma$ and the true covariance is positive semi-definite. In particular, this implies that the diagonal elements of $\Sigma$ are upper bounding the variances of the corresponding coordinates. In the Gaussian case, we may choose $\Sigma$ to be exactly equal to the covariance matrix. Finally, if $\Sigma$ is the covariance matrix proxy, then $c\Sigma$, where $c \ge 1$, is also a covariance matrix proxy, thus allowing us to work with matrices that upper bound the covariance matrix by at least a multiplicative factor.

We will make use of the following dimension-free bound for the $\ell_2$ error of the empirical mean of a subgaussian distribution.\fi 
\begin{lemma}[Norm of the subgaussian vector~\citep{hsu2012tail, zhivotovskiy2024dimension}]\label{lem:gaussianconcentr}
Let $X=(X^{(1)}, \ldots, X^{(n)})$ be drawn i.i.d.\ from the subgaussian distribution with mean $\mu$ and covariance-proxy $\Sigma$. With probability at least $1-\beta$, 
\iflong\[\left\|\frac{1}{n}\sum_{j=1}^n X^{(j)} - \mu\right\|_2 \leq \frac{\sqrt{\tr(\Sigma)}}{\sqrt{n}} + \frac{\sqrt{2\|\Sigma\|_2\log(1/\beta)}}{\sqrt{n}}.\]
\else
$\|\frac{1}{n}\sum_{j=1}^n X^{(j)} - \mu\|_2 \leq \sqrt{\tr(\Sigma)/n} + \sqrt{2\|\Sigma\|_2\log(1/\beta)/n}.$
\fi
\end{lemma}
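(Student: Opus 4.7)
The plan is to reduce the i.i.d.\ average to a single centered subgaussian vector and then invoke a tail bound for the Euclidean norm of such a vector, which is exactly the content of the Hsu--Kakade--Zhang inequality being cited.

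First I would set $Y := \tfrac{1}{n}\sum_{j=1}^n X^{(j)} - \mu = \tfrac{1}{n}\sum_{j=1}^n (X^{(j)}-\mu)$ and verify that $Y$ is subgaussian with covariance-proxy $\Sigma/n$. By independence and the definition of subgaussianity, for every $v\in\R^d$ and every $\lambda\in\R$,
\[
\E\exp(\lambda\langle Y,v\rangle) = \prod_{j=1}^n \E\exp\!\left(\tfrac{\lambda}{n}\langle X^{(j)}-\mu,v\rangle\right) \le \exp\!\left(\tfrac{\lambda^2}{2n}\, v^\top \Sigma v\right),
\]
so $Y$ has covariance-proxy $A:=\Sigma/n$, with $\tr(A)=\tr(\Sigma)/n$, $\|A\|_2=\|\Sigma\|_2/n$, and $\tr(A^2)\le \|A\|_2\tr(A)$.

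Next I would apply the standard tail bound of Hsu--Kakade--Zhang for $\|Y\|_2^2$: for every $t>0$, with probability at least $1-e^{-t}$,
\[
\|Y\|_2^2 \le \tr(A) + 2\sqrt{\tr(A^2)\,t} + 2\|A\|_2 t.
\]
Using $\tr(A^2)\le \|A\|_2\tr(A)$ I would bound the middle term by $2\sqrt{\|A\|_2\tr(A)\,t}$, and then complete the square: $\tr(A) + 2\sqrt{\|A\|_2\tr(A)\,t} + 2\|A\|_2 t = (\sqrt{\tr(A)} + \sqrt{2\|A\|_2 t})^2$. Taking square roots gives
\[
\|Y\|_2 \le \sqrt{\tr(A)} + \sqrt{2\|A\|_2 t}.
\]
Finally, setting $t=\log(1/\beta)$ and substituting $\tr(A)=\tr(\Sigma)/n$ and $\|A\|_2=\|\Sigma\|_2/n$ yields the claim.

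The only nontrivial piece is the tail bound for $\|Y\|_2^2$, but this is precisely the Hsu--Kakade--Zhang inequality that is already being cited, so I would simply invoke it as a black box; all remaining work is algebraic manipulation (tensorizing the MGF, bounding $\tr(A^2)$ by $\|A\|_2\tr(A)$, and completing the square). Thus there is no real obstacle in the proof—the statement is a clean corollary of the cited inequality applied to the sample mean.
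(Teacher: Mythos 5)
Your proof is correct and takes the standard route that the cited works would give: tensorize the moment generating function to show the sample mean is subgaussian with covariance proxy $\Sigma/n$, invoke the Hsu--Kakade--Zhang quadratic tail bound, and simplify. The paper itself does not prove this lemma (it is stated with a citation), so your derivation fills that in faithfully. One small slip worth noting: the step you call ``completing the square'' is an inequality, not an equality --- expanding $(\sqrt{\tr(A)}+\sqrt{2\|A\|_2 t})^2$ gives a cross term $2\sqrt{2\|A\|_2\tr(A)\,t}$, which dominates your $2\sqrt{\|A\|_2\tr(A)\,t}$ by a factor of $\sqrt{2}$; the inequality goes in the needed direction, so the conclusion is unaffected, but the ``$=$'' should be ``$\le$''.
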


\iflong\section{Near-optimal algorithm under known covariance}
\else\section{Nearly-matching upper and lower bounds under known covariance}
\fi\label{sec:upper}

Algorithm~\ref{alg:main} proceeds in two simple steps. The first step filters out outliers so that all remaining pairs of data points satisfy the re-scaled distance predicate $\dist_{M,\lambda}$ \iflong (introduced in what follows) \fi and, assuming enough data points remain, the second step releases their empirical mean along with appropriate Gaussian noise. 

We retrieve the folklore result, by taking $M=\id, \lambda \approx \sqrt{\tr(\Sigma)}$, which is known (otherwise, can be easily privately estimated as in Section~\ref{sec:unknown}). The filtering then guarantees that all pairs of points are within distance $\sqrt{\tr(\Sigma)}$, and adds spherical Gaussian noise with covariance $\tr(\Sigma)I_d/(\eps^2 n^2)$. \iflong

It is clear that adding spherical Gaussian noise will incur error which scales with the dimension, so, to avoid this, we have to split the privacy budget unevenly among the coordinates. To retrieve the stated bound of Theorem~\ref{thm:intro-main}, consider $M=\Sigma$. Then, as in~\citep{AumullerLNP23}, the Gaussian noise scales with $\Sigma^{1/2}$.\footnote{For intuition, consider adding noise $\cN(0,c_i^2)$ to each coordinate $i$. 
We would like to minimize the $\ell_2$ norm of this noise, which is approximately $\sum_{i=1}^d c_i^2$. 
The average sensitivity of each coordinate of a Gaussian is $\sigma_i$, so to achieve total privacy loss $\eps$, by advanced composition, we require the $c_i$'s to satisfy $\sum_{i=1}^d \sigma_i^2/c_i^2\leq \eps^2$. 
Solving this optimization problem, we find that $c_i\propto \sqrt{\|\bsigma\|_1\sigma_i}$, which corresponds to noise $\cN(0,\Delta^2\Sigma^{1/2})$, for $\Delta^2\approx \frac{\|\bsigma\|_1}{\eps^2 n^2}$. The same reasoning can be applied to the case of the Mahalanobis error metric~\citep{BrownGSUZ21} where adding noise $\cN(0, \Delta_M^2\Sigma)$ for $\Delta_M^2\approx \frac{d}{\eps^2 n^2}$ gives us the optimal bound. Here the minimization objective is roughly $\Sigma_{i=1}^d c_i^2/\sigma_i^2$ so the optimal solution requires $c_i\propto \sigma_i$.}
\else
To retrieve the optimal bound, take $M=\Sigma$, which splits the privacy budget unevenly among coordinates. Then, $\lambda\approx\sqrt{\tr(\Sigma^{1/2})}$ and the Gaussian noise has covariance $\tr(\Sigma^{1/2})\Sigma^{1/2}/(\eps^2 n^2)$, as in~\citep{AumullerLNP23}.
\fi

\newcommand{\STATEnonum}{\item[]}
\begin{algorithm}\caption{Private Re-scaled Averaging: $\mathrm{Avg}_{M,\lambda,\eps,\delta}(X)$}\label{alg:main}
\begin{algorithmic}[1]
\Require{Data set $ X = (X^{(1)},\dots,X^{(n)})^T \in \mathbb{R}^{n\times d}$.
Privacy parameters: $\eps,\delta>0$. Failure probability $\beta>0$. Symmetric invertible matrix $M$. Parameter $\lambda$. %$\lambda=\sqrt{2\tr(M^{-1/4}\Sigma M^{-1/4})}+2\sqrt{2\|M^{-1/4}\Sigma M^{-1/4}\|_2\log(n/\beta)}$ %Covariance-proxy $\Sigma$.
}

\State Let $\dist_{M,\lambda}(x,y)=\mathbbm{1}\{\|M^{-1/4}(x-y)\|_2\leq \lambda\}$.
\State $\bm{v} = \mathrm{BasicFilter}(X, \dist_{M,\lambda}, \alpha=0)$.
\State Let $C=\{X^{(j)}\}_{\{j:~v_j=1\}}$. 
\State Compute $\hat{n}_C=|C|-\frac{\log(1/\delta)}{\eps} + z$ where $z\sim \Lap(\frac{1}{\eps})$.
\If{$|C|=0$ or $\hat{n}_C\leq 0$}
\State \Return $\bot$.
\EndIf
\State \Return $\hat{\mu}=\frac{1}{|C|}\sum_{x\in C} x + \eta$ where $\eta\sim \cN\left(0,\frac{8\log(1.25/\delta)\lambda^2}{\eps^2 \hat{n}_C^2}M^{1/2}\right)$.
\vspace{0.5cm}
% \makeatletter
% \setcounter{ALG@line}{-1}
% \makeatother

\Procedure{$\mathrm{BasicFilter}$}{$X, f,\alpha$}\Comment{Algorithm 4.3 from ~\citep{TsfadiaCKMS22}.}

\For{$j=1, \ldots, n$}
\State Let $z_j=\sum_{k=1}^n f(X^{(j)},X^{(k)})-n/2$.
\State Sample $v_j=\Bern(p_j)$, where $p_j=\begin{cases}
       0, &\text{if } z_j\leq 0, \\
       1, &\text{if } z_j\geq (1/2-\alpha)n, \\ 
       \frac{z_j}{(1/2-\alpha)n}, &\text{otherwise.}
     \end{cases}$
\EndFor
\State \Return $\bm{v}=(v_1,\ldots, v_n)$
\EndProcedure%
\end{algorithmic}
\end{algorithm}

\ifshort
\begin{thm}\label{thm:main-customizable-short}
    Let $\eps\in(0,10),\delta\in(0,1), \alpha>0, \beta\in(0,1)$. 
    Algorithm~\ref{alg:main} is $(\eps,\delta)$-differentially private. 
    Let $X$ be a data set of size $n$, drawn from a subgaussian distribution with covariance-proxy $\Sigma$ and mean $\mu$. 
    Given $M=\Sigma$, $\lambda\geq\sqrt{2\tr(M^{-1/4}\Sigma M^{-1/4})}+2\sqrt{2\|M^{-1/4}\Sigma M^{-1/4}\|_2\log(\frac{n}{\beta})}$, with probability at least $1-\beta$, Algorithm~\ref{alg:main} returns $\hat{\mu}$ such that $\|\hat{\mu}-\mu\|_2\leq \alpha$, as long as
        \begin{equation}
        n=\tilde{\Omega} \Biggl(\frac{\tr(\Sigma)+\|\Sigma\|_2\log\frac{1}{\beta}}{\alpha^2} + \frac{\tr(\Sigma^{1/2})\sqrt{\log\frac{1}{\delta}}}{\alpha\eps} 
        +\frac{\sqrt{\|\Sigma\|_2\log\frac{1}{\delta}}\log\frac{1}{\beta}}{\alpha\eps} + \frac{\log\frac{1}{\delta\beta}}{\eps}\Biggr)~,
        \label{eq:main-sc-short}
    \end{equation}
    where $\tilde{\Omega}$ hides constants and a log factor of the third term multiplied with itself.
\end{thm}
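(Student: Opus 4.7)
The plan is to establish privacy via the FriendlyCore paradigm (Theorem~\ref{thm:fcparadigm}) and utility by showing that, under a high-probability event, (a) no sample is removed by $\mathrm{BasicFilter}$, (b) the empirical mean concentrates around $\mu$, and (c) the added Gaussian noise is small in Euclidean norm. Throughout, the predicate is $f=\dist_{M,\lambda}$.

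For privacy, it suffices by Theorem~\ref{thm:fcparadigm} to argue that the post-filter stage of Algorithm~\ref{alg:main}---computing $\hat{n}_C$, possibly outputting $\bot$, and otherwise releasing $\hat{\mu}$---is $f$-friendly $(\eps,\delta)$-DP. The Laplace release of $\hat{n}_C$ is private by the standard Laplace mechanism, since counts have $\ell_1$-sensitivity~$1$. For the mean release, let $\bar{\mu}_S := \frac{1}{|S|}\sum_{x\in S}x$ and work in the $M^{-1/4}$-transformed space. For friendly neighbors $C,C'$ with $|C|=m$, $|C'|=m-1$, and $X^{(j)}\in C\setminus C'$, the identity $\bar{\mu}_C-\bar{\mu}_{C'}=(X^{(j)}-\bar{\mu}_{C'})/m$ combined with $f$-friendliness and the triangle inequality yields a transformed sensitivity of $\lambda/m$. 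Since $\eta\sim\cN(0,\sigma^2 M^{1/2})$ becomes a spherical $\cN(0,\sigma^2 I)$ after left-multiplication by $M^{-1/4}$, the Gaussian mechanism at the chosen scale $\sigma$ is friendly-DP. The offset $-\log(1/\delta)/\eps$ in $\hat{n}_C$ ensures $\hat{n}_C\le |C|$ except with probability $O(\delta)$, justifying the substitution of $\hat{n}_C$ for $|C|$ in the noise scale. Composing the two releases and invoking Theorem~\ref{thm:fcparadigm} yields the claimed DP guarantee after absorbing constants.

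For utility, I first show that, with probability at least $1-\beta/3$, no point is filtered. Each difference $X^{(j)}-X^{(k)}$ is centered subgaussian with covariance proxy $2\Sigma$, so $M^{-1/4}(X^{(j)}-X^{(k)})$ is subgaussian with covariance proxy $2M^{-1/4}\Sigma M^{-1/4}$; applying Lemma~\ref{lem:gaussianconcentr} (to a single vector, i.e., $n=1$) and union-bounding over the $\binom{n}{2}$ pairs, the predicate holds for all pairs whenever $\lambda$ satisfies the hypothesis of the theorem. Consequently every $z_j=n/2$ in $\mathrm{BasicFilter}$, every $v_j=1$, and $C=X$. A Laplace tail bound then gives $\hat{n}_C\ge n/2>0$ with probability $1-O(\beta)$ provided $n\gtrsim \log(1/(\delta\beta))/\eps$, yielding the fourth term of \eqref{eq:main-sc-short}. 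Decomposing $\|\hat{\mu}-\mu\|_2\le \|\bar{\mu}_X-\mu\|_2+\|\eta\|_2$, the first summand is controlled by Lemma~\ref{lem:gaussianconcentr} and produces the first term of \eqref{eq:main-sc-short}. For the second summand, $\eta$ has covariance proxy $\sigma^2 M^{1/2}$, and Lemma~\ref{lem:gaussianconcentr} gives $\|\eta\|_2\lesssim \sigma\bigl(\sqrt{\tr(M^{1/2})}+\sqrt{\|M^{1/2}\|_2\log(1/\beta)}\bigr)$. Substituting $M=\Sigma$, $\sigma\asymp \lambda\sqrt{\log(1/\delta)}/(\eps n)$, and the hypothesized bound on $\lambda$ produces the second and third terms of \eqref{eq:main-sc-short} after expanding the product and requiring the total error to be at most $\alpha$.

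The main obstacles are careful bookkeeping rather than conceptual ones: the blow-up in Theorem~\ref{thm:fcparadigm} must be tracked so the output is exactly $(\eps,\delta)$-DP; the $\log(n/\beta)$ factor hidden inside $\lambda$ interacts multiplicatively with $\sqrt{\|M^{1/2}\|_2\log(1/\beta)}$ in the noise bound, producing the somewhat unusual third summand together with a self-logarithmic factor in $n$ that is absorbed into $\tilde{\Omega}$, and resolving this implicit equation cleanly is the most delicate step; finally, replacing $|C|$ by $\hat{n}_C$ in the noise scale must be executed so that neither privacy nor utility is lost by more than a constant factor.
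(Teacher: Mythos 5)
Your proposal follows the paper's argument essentially verbatim: the FriendlyCore reduction (Theorem~\ref{thm:fcparadigm}) for privacy, concentration of $M^{-1/4}(X^{(j)}-X^{(k)})$ via Lemma~\ref{lem:gaussianconcentr} and a union bound to show $C=X$, the decomposition $\|\hat\mu-\mu\|_2\le\|\bar\mu_X-\mu\|_2+\|\eta\|_2$, and a second application of Lemma~\ref{lem:gaussianconcentr} to bound $\|\eta\|_2$. One small slip worth flagging: $f$-friendliness of $C\cup C'$ only yields $\|M^{-1/4}(X^{(j)}-X^{(i)})\|_2\le 2\lambda$ (going through the witness $z$ with $f(X^{(j)},z)=f(z,X^{(i)})=1$), so the transformed $\ell_2$-sensitivity of the mean is $2\lambda/m$, not $\lambda/m$; this factor of two is exactly what the noise scale $8\log(1.25/\delta)\lambda^2/(\eps^2\hat n_C^2)$ in Algorithm~\ref{alg:main} is calibrated to, and the paper also handles the substitution of $\hat n_C$ for $|C|$ rigorously via conditioning (Fact~\ref{fact:cond_indisting}) and a separate case $|X'|=0$, which your sketch glosses over but does not misstate.
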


The theorem holds more generally for any symmetric invertible $M$, and $\lambda$ satisfying the assumptions. We sketch the proof of Theorem~\ref{thm:main-customizable-short} next. All remaining details are in Appendix~\ref{sec:upper-proofs}. 
\begin{proof}[Proof sketch] 
We start with the accuracy analysis. First we show that the original dataset $X$ passes through $\mathrm{BasicFilter}$ (i.e., $C=X$) with high probability. 
It suffices to show that each pair $j\neq k \in [n]$, satisfies $\dist_{M,\lambda}(X^{(j)}, X^{(k)}) = 1$ with probability $1 - \beta/n^2$. 
Observe that for $j \neq k$, $M^{-1/4}(X^{(j)} - X^{(k)})$ is subgaussian with mean 0 and covariance proxy $2 M^{-1/4} \Sigma M^{-1/4}$. 
By Lemma~\ref{lem:gaussianconcentr} and our setting of $\lambda$, indeed $\|M^{-1/4}(X^{(j)} - X^{(k)})\|_2\leq \lambda$ for each pair with probability $1-\beta/n^2$. 
We condition on $C=X$. 
With high probability by the CDF of the Laplace distribution and since $|C|=n=\Omega(\log(1/\delta\beta)/\eps)$, $\hat{n}_C =\Omega(n)$. Thus, the algorithm does not abort, and returns estimate $\hat{\mu}$. It remains to upper bound the total error of $\hat{\mu}$. This is at most the error of the empirical mean plus the error due to noise $\| \eta \|_2$. By Lemma~\ref{lem:gaussianconcentr}, with high probability, the former is $\tilde{O}(\sqrt{\tr(\Sigma)/n})$ and the latter $\tilde{O}(\lambda\sqrt{\tr(M^{1/2})}/(\eps n))$. Substituting $M=\Sigma$, and the value for $\lambda$, the total error becomes $\tilde{O}(\sqrt{\tr(\Sigma)/n}+\tr(\Sigma^{1/2})/(\eps n))$, which yields the stated sample complexity. 
%For the non-private error, we use the concentration of norm of subgaussian vectors to obtain a bound of roughly $\sqrt{\tr(\Sigma)/n}$. Finally, we retrieve the total error bound by the triangle inequality and union bound of all four events.
%\[        \|\hat{\mu} - \mu\|_2 \le \frac{\sqrt{\tr(\Sigma)}}{\sqrt{n}}+\frac{\sqrt{2\|\Sigma\|_2\log(1/\beta)}}{\sqrt{n}} + \frac{4 \sqrt{2 \log(1.25/\delta)}\lambda }{\eps n}\left(\sqrt{ \tr(M^{1/2})} + \sqrt{2 \| M^{1/2} \|_2  \log(1/\beta)} \right).\]

The privacy analysis follows the steps of~\citep[Claim 3.4]{TsfadiaCKMS22}. 
By Theorem~\ref{thm:fcparadigm}, it suffices to show that lines 4-7 of Algorithm~\ref{alg:main}, namely, $\cA$, are $\dist_{\Sigma,\beta}$-friendly DP. 
Consider neighboring inputs $X, X'$, differing in the $n$-th data point w.l.o.g., that is $X'=X\setminus X^{(n)}$. Since $||X|-|X'||=1$, by the guarantees of the Laplace mechanism, the r.v.s $\hat{n}_X, \hat{n}_{X'}$ in Line 4 are $(\eps,0)$-indistinguishable. Moreover, they both satisfy $\hat{n}_X, \hat{n}_X'<|X|$ with probability $1-\delta/2>1/2$. Conditioning on this event for the remainder of the sketch, we can fix $\hat{n}_X=\hat{n}_X'=\hat{n}<|X|$, for some value $\hat{n}$. 
%For neighboring inputs $X, X'$, if $|X'| = 0$, both runs abort with high probability. 
%If $|X'| > 0$, we first condition on the event $\hat{n}_X, \hat{n}_{X'} < |X|$, then show that $\hat{n}_X|_{\hat{n}_X < |X|}$ and $ \hat{n}_{X'}|_{\hat{n}_{X'} < |X|}$ are indistinguishable (so we can set them to $ \hat{n}$), and lastly show that $\cA(X)|_{\hat{n}_X = \hat{n}}$ and $ \cA(X')|_{\hat{n}_{X'} = \hat{n}}$ are indistinguishable for each $\hat{n} < |X|$. 
%For the last step, we fix $\hat{n} < |X|$. 
If $\hat{n}\le 0$, both runs abort. 
Otherwise, it suffices to show that Line 7 adds sufficient noise to maintain privacy. By post-processing, since $M$ is not data-dependent, this is equivalent to ensuring that $\cN( M^{-1/4} \frac{1}{|X|} \sum_{i=1}^{|X|} X^{(i)}, v^2 \id ) \approx_{\eps, \delta} \cN(M^{-1/4} \frac{1}{|X|-1} \sum_{i=1}^{|X|-1} X^{(i)}, v^2 \id )$ where $v = (2\lambda/\hat{n})(\sqrt{2 \log (1.25 / \delta)}/\eps)$. 
This is true by the guarantees of the Gaussian mechanism applied to $f(X) = M^{-1/4} \sum_{i=1}^{|X|} X^{(i)}/|X|$, whose $\ell_2$-sensitivity for $\dist_{\Sigma,\lambda}$-friendly $X,X'$ can be upper bounded by $2\lambda/|X|\leq 2\lambda/\hat{n}$ (since $0<\hat{n}<|X|$, by assumption). By composition, $\cA$ indeed satisfies $\dist_{M,\lambda}$-friendly $(O(\eps),O(\delta))$-DP. 
%$\hat{n}_X, \hat{n}_{X'} < |X|$ happens with high probability, so by Fact~\ref{fact:cond_indisting}, $\hat{n}_X|_{\hat{n}_X < |X|}  \approx_{\eps/(1-\delta/2),0} \hat{n}_{X'}|_{\hat{n}_{X'} < |X|}$. We then fix $\hat{n} < |X|$. 
\end{proof}
We show that the sample complexity of Theorem~\ref{thm:main-customizable-short} is optimal. We briefly explain our lower bound construction here. All remaining details are in Appendix~\ref{sec:lower}.
\begin{proof}[Proof Sketch of Theorem~\ref{thm:intro-lb}]
Let $\Sigma=\diag(\bsigma^2)$. Assume w.l.o.g.\ that $\sigma_1^2\geq \ldots \geq \sigma_d^2$. 
Partition the set of coordinates into buckets $S_k=\{i\in[d]: \sigma_i\in \sigma_1\cdot(2^{-k}, 2^{-k+1}]\}$, $\forall k\in [\log(d)]$ and $S_{\log(d)+1} = [d]\setminus \bigcup_{k\in[\log(d)]} S_k$. 
We have that $\sum_{k\in[\log(d)+1]}\sum_{i\in S_k} \sigma_i=\|\bsigma\|_1$. 
Consider the bucket $S$ which contributes the most to this sum and let $\sigma_{S}$ be the maximum variance in this bucket. 
It must be that $|S|\geq \frac{\|\bsigma\|_1}{(\log(d)+1)\sigma_{S}}.$ 
The lower bound of~\cite[Theorem 6.5]{KamathLSU19} for isotropic Gaussians, implies that %$|S|$-dimensional Gaussian distributions with isotropic variance $\sigma_{S}^2 I_{|S|}$, 
any $(\eps,\delta)$-private mean estimator which returns, with constant probability, an estimate $\hat{\mu}_{S}$ with error $\alpha$ for the coordinates in $S$  (note that they are all within a factor of $2$), requires 
$n=\Omega(|S|\sigma_{S}/(\alpha\eps\log(d)))= \Omega(\|\bsigma\|_1/(\alpha\eps\log^2(d)))$ samples. As an estimator for the $d$-dimensional Gaussian mean restricted to $S$, would give us such a $\hat{\mu}_{S}$, the statement follows.
\end{proof}

\else
\ifshort\section{Omitted details of Section~\ref{sec:upper}}\label{sec:upper-proofs}
We state here again the general theorem which holds for any $M$. \fi

\begin{thm}\label{thm:main-customizable}
    Let $\eps\in(0,10),\delta\in(0,1), \alpha>0, \beta\in(0,1)$.\footnote{We require $\eps$ to be smaller than some constant, due to approximations we take in the privacy analysis. The theorem may hold for $\eps>10$ but we did not optimize the choice of constant, as this range is already wide.} 
    Algorithm~\ref{alg:main} is $(\eps,\delta)$-differentially private. 
    Let $X$ be a data set of size $n$, drawn from a subgaussian distribution with covariance-proxy \iflong $\Sigma\in\R^{d\times d}$ and unknown mean $\mu\in\R^d$ \else $\Sigma$ and mean $\mu$\fi. 
    Given \ifshort $M=\Sigma$, \else parameters $M$ and \fi $\lambda\geq\sqrt{2\tr(M^{-1/4}\Sigma M^{-1/4})}+2\sqrt{2\|M^{-1/4}\Sigma M^{-1/4}\|_2\log(\frac{n}{\beta})}$, with probability at least $1-\beta$, Algorithm~\ref{alg:main} returns $\hat{\mu}$ such that $\|\hat{\mu}-\mu\|_2\leq \alpha$, as long as 
    \begin{align}\label{eq:main-customizable-sc}
    n\geq & C\left(\frac{\tr(\Sigma)+\|\Sigma\|_2\log\frac{1}{\beta}}{\alpha^2} %\nonumber \\
    + \lambda\left(\sqrt{\tr(M^{1/2})}+\sqrt{\|M^{1/2}\|_2\log\frac{1}{\beta}}\right)\frac{\sqrt{\log\frac{1}{\delta}}}{\alpha\eps} + \frac{\log\frac{1}{\delta\beta}}{\eps}\right)~,
    %& + \frac{\sqrt{\log(1/\delta)}}{\alpha\eps}\left(\sqrt{\tr(M^{-1/4}\Sigma M^{-1/4})}+\sqrt{\|M^{-1/4}\Sigma M^{-1/4}\|_2\log\frac{n}{\beta}}\right)\cdot\left(\sqrt{\tr(M^{1/2})}+\sqrt{\|M^{1/2}\|_2\log\frac{1}{\beta}}\right)\right),
    \end{align}
    for some universal constant $C$. 
    If $\Sigma$ is known, choosing $M=\Sigma$ and substituting $\lambda$, the sample complexity becomes
    \ifshort
        \begin{align}
        n&\geq C\Biggl(\frac{\tr(\Sigma)+\|\Sigma\|_2\log\frac{1}{\beta}}{\alpha^2} + \frac{\tr(\Sigma^{1/2})\sqrt{\log\frac{1}{\delta}}}{\alpha\eps} \nonumber
        \\
        &\qquad\qquad+\frac{\sqrt{\|\Sigma\|_2\log\frac{1}{\delta}}\log\frac{1}{\beta}}{\alpha\eps}\log\left(\frac{\|\Sigma\|_2\log\frac{1}{\delta}\log\frac{1}{\beta}}{\alpha\eps}\right) + \frac{\log\frac{1}{\delta\beta}}{\eps}\Biggr)~.
        \label{eq:main-sc}
        \end{align}
    \else
       \begin{align}
        n&\geq C\Biggl(\frac{\tr(\Sigma)+\|\Sigma\|_2\log\frac{1}{\beta}}{\alpha^2} + \frac{\tr(\Sigma^{1/2})\sqrt{\log\frac{1}{\delta}}}{\alpha\eps} +\frac{\sqrt{\|\Sigma\|_2\log\frac{1}{\delta}}\log\frac{1}{\beta}}{\alpha\eps}\log\left(\frac{\|\Sigma\|_2\log\frac{1}{\delta}\log\frac{1}{\beta}}{\alpha\eps}\right) + \frac{\log\frac{1}{\delta\beta}}{\eps}\Biggr)~.
        \label{eq:main-sc}
        \end{align}
    \fi
\end{thm}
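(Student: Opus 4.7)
The argument splits cleanly into privacy and accuracy. For privacy, the plan is to invoke Theorem~\ref{thm:fcparadigm}: it suffices to show that the post-filter subroutine (lines 4--7) is $\dist_{M,\lambda}$-friendly $(\eps',\delta')$-DP for $\eps' = \Theta(\eps)$ and $\delta' = \Theta(\delta)$. Fix neighboring $X, X'$ with $X \cup X'$ friendly, and write WLOG $X' = X \setminus X^{(n)}$. The noisy count $\hat n_C$ in line~4 is $(\eps,0)$-indistinguishable across neighbors by the Laplace mechanism, since $||X|-|X'||=1$. Condition on the event $E = \{\hat n_C < |C|\}$, which under both $X$ and $X'$ has probability at least $1-\delta/2 \geq 1/2$ provided $n = \Omega(\log(1/\delta)/\eps)$; Fact~\ref{fact:cond_indisting} then only degrades indistinguishability by a constant factor. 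Under $E$ and if $\hat n_C > 0$, line~7 is a Gaussian mechanism in disguise: writing $\eta = M^{1/4} \eta'$ with $\eta' \sim \cN(0, v^2 I_d)$ where $v = 2\sqrt 2 \lambda \sqrt{\log(1.25/\delta)}/(\eps \hat n_C)$, the output equals $M^{1/4}\bigl(M^{-1/4} \bar X + \eta'\bigr)$, and $M^{1/4}(\cdot)$ is post-processing. The $\ell_2$-sensitivity of $X \mapsto M^{-1/4}\bar X$ on friendly neighbors is at most $2\lambda/|X|$: expanding $\frac{1}{n}\sum_{i=1}^n Y^{(i)} - \frac{1}{n-1}\sum_{i=1}^{n-1} Y^{(i)} = \frac{1}{n}(Y^{(n)} - \frac{1}{n-1}\sum_{i<n} Y^{(i)})$ and applying the friendliness predicate together with convexity bounds $\|M^{-1/4}(Y^{(n)} - \text{avg})\|_2 \leq \lambda$. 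On $E$ we have $2\lambda/|X| \le 2\lambda/\hat n_C$, so the Gaussian mechanism certifies $(\eps,\delta/2)$-DP; combining with the Laplace step via basic composition yields the target.

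For accuracy, the plan is to chain three high-probability events. First, I union-bound over all $\binom{n}{2}$ pairs the event that $\dist_{M,\lambda}(X^{(j)}, X^{(k)}) = 1$: for a fixed pair, $M^{-1/4}(X^{(j)}-X^{(k)})$ is centered subgaussian with covariance proxy $2 M^{-1/4}\Sigma M^{-1/4}$, so by Lemma~\ref{lem:gaussianconcentr} applied at failure $\beta/n^2$, our choice of $\lambda$ ensures the predicate holds. Consequently $C = X$ and $|C| = n$. Second, by the Laplace CDF and the assumption $n = \Omega(\log(1/\delta\beta)/\eps)$, we get $\hat n_C \geq n/2 > 0$, so the algorithm does not abort. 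Third, triangle inequality gives $\|\hat \mu - \mu\|_2 \le \|\bar X - \mu\|_2 + \|\eta\|_2$. Lemma~\ref{lem:gaussianconcentr} bounds the first term by $\sqrt{\tr(\Sigma)/n} + \sqrt{2\|\Sigma\|_2 \log(1/\beta)/n}$. Since $\eta$ is centered Gaussian with covariance $v^2 M^{1/2}$, it is subgaussian with the same covariance-proxy, and a second application of Lemma~\ref{lem:gaussianconcentr} bounds $\|\eta\|_2$ by $v\bigl(\sqrt{\tr(M^{1/2})} + \sqrt{2\|M^{1/2}\|_2 \log(1/\beta)}\bigr)$ with $v = O(\lambda \sqrt{\log(1/\delta)}/(\eps n))$.

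Requiring each contribution to $\|\hat\mu-\mu\|_2$ to be at most $\alpha/4$ and solving for $n$ yields~\eqref{eq:main-customizable-sc}. Specializing $M = \Sigma$ gives $M^{-1/4}\Sigma M^{-1/4} = \Sigma^{1/2}$, so $\tr(M^{-1/4}\Sigma M^{-1/4}) = \tr(\Sigma^{1/2})$ and $\tr(M^{1/2}) = \tr(\Sigma^{1/2})$; substituting the explicit choice of $\lambda$ and collecting terms produces~\eqref{eq:main-sc}. The cross term $\sqrt{\|\Sigma\|_2 \log(1/\delta)} \log(1/\beta)$ and its embedded logarithm arise from solving the resulting quadratic inequality in $n$ that comes from the product $\lambda \cdot \sqrt{\|M^{1/2}\|_2 \log(1/\beta)}$.

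The main obstacle will be the privacy argument, specifically the coupling of the Laplace-gated event $E = \{\hat n_C < |C|\}$ with the subsequent Gaussian release: one must be careful that conditioning does not disturb indistinguishability by more than a constant (this is precisely what Fact~\ref{fact:cond_indisting} buys), and that the post-processing reduction to a standard Gaussian mechanism is valid uniformly in the realized value of $\hat n_C$. The accuracy chain is routine modulo the anisotropic concentration bound in Lemma~\ref{lem:gaussianconcentr}; the only subtlety is matching the failure budget across the $n^2$ pairwise predicates, the Laplace tail, and the two subgaussian norm bounds, which is handled by a union bound with constant blowup absorbed into the constant $C$.
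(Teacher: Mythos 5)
Your proposal follows essentially the same route as the paper's proof: you invoke the FriendlyCore paradigm (Theorem~\ref{thm:fcparadigm}) for the post-filter subroutine, condition the Laplace-gated count on $\{\hat n_C < |C|\}$ via Fact~\ref{fact:cond_indisting}, reduce line 7 to a Gaussian mechanism through $M^{1/4}$-post-processing with sensitivity $2\lambda/\hat n_C$ from friendliness, and for accuracy you union-bound the $\binom{n}{2}$ pairwise distance events and apply Lemma~\ref{lem:gaussianconcentr} to both $\|\mu_X - \mu\|_2$ and $\|\eta\|_2$—this is the paper's decomposition exactly. One small note: friendliness only gives $\|M^{-1/4}(X^{(i)}-X^{(j)})\|_2 \le 2\lambda$ through the shared intermediate point (not $\le\lambda$ as you wrote), but your final sensitivity bound $2\lambda/|X|$ is correct, so this is a typo rather than a gap.
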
 

\ifshort
Although the theorem holds for any $M, \lambda$, choosing $M=\Sigma$ gives us the optimal bound.\footnote{For intuition, consider adding noise $\cN(0,c_i^2)$ to each coordinate $i$. 
(It is clear that the privacy budget should be distributed unevenly among coordinates.) We would like to minimize the $\ell_2$ norm of this noise, which is approximately $\sum_{i=1}^d c_i^2$. 
The average sensitivity of each coordinate of a Gaussian is $\sigma_i$, so to achieve total privacy loss $\eps$, by advanced composition, we require the $c_i$'s to satisfy $\sum_{i=1}^d \sigma_i^2/c_i^2\leq \eps^2$. 
Solving this optimization problem, we find that $c_i\propto \sqrt{\|\bsigma\|_1\sigma_i}$, which corresponds to noise $\cN(0,\Delta^2\Sigma^{1/2})$, for $\Delta^2\approx \frac{\|\bsigma\|_1}{\eps^2 n^2}$. The same reasoning can be applied to the case of the Mahalanobis error metric~\citep{BrownGSUZ21} where adding noise $\cN(0, \Delta_M^2\Sigma)$ for $\Delta_M^2\approx \frac{d}{\eps^2 n^2}$ gives us the optimal bound. Here the minimization objective is roughly $\Sigma_{i=1}^d c_i^2/\sigma_i^2$ so the optimal solution requires $c_i\propto \sigma_i$.}
\fi

\begin{remark}
If $M$ is such that $\Sigma \preceq M$, we may assume without loss of generality that $M$ is invertible. Indeed, if this is not the case, then we know that the distribution of the data is supported on a lower-dimensional subspace along with its mean $\mu$. Using $M$, we can project onto this subspace. In this context, we can refocus our analysis on the scenario where $M$ is an invertible matrix.
\end{remark}

\paragraph{Computational complexity} We note that \fi Algorithm~\ref{alg:main} has time complexity $O(n^2)$, which can be further improved to $O(n\log n)$ as in~\citep{TsfadiaCKMS22}.

The guarantees of Theorem~\ref{thm:main-customizable} follow by combining Theorem~\ref{thm:accuracy_main} and Theorem~\ref{thm:privacy_main} below and re-scaling parameters $\eps,\delta,\beta$ with appropriate constants.

\subsection{Accuracy analysis}
We start with the accuracy analysis. We first prove that for subgaussian data sets, all data points pass the filter with high probability. 
\begin{lemma}\label{lem:X_is_C}
    Let $X = (X^{(1)}, \ldots, X^{(n)})$ be a data set drawn from a subgaussian distribution with covariance proxy $\Sigma$. Let $\beta\in(0,1)$, $M$ invertible matrix and some $\lambda\geq\sqrt{2\tr(M^{-1/4}\Sigma M^{-1/4})}+2\sqrt{2\|M^{-1/4}\Sigma M^{-1/4}\|_2\log(n/\beta)}$ given as inputs to Algorithm~\ref{alg:main}. Then the $\mathrm{BasicFilter}$ procedure outputs $C=X$, with probability $1 - \beta$.
\end{lemma}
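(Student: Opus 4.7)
The plan is to reduce the event $\{C = X\}$ to a deterministic consequence of all pairwise distances in the $M^{-1/4}$-transformed coordinate system being at most $\lambda$, and then to apply subgaussian concentration together with a union bound to control those pairwise distances.

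First I would unpack what it means for $\mathrm{BasicFilter}$ to return $C = X$. With $\alpha = 0$, the sampling probability $p_j$ equals $1$ whenever $z_j \geq n/2$, i.e.\ whenever $\sum_{k=1}^n \dist_{M,\lambda}(X^{(j)}, X^{(k)}) \geq n$. Since $\dist_{M,\lambda}(X^{(j)}, X^{(j)}) = 1$ trivially, the sum of indicators reaches $n$ exactly when $\dist_{M,\lambda}(X^{(j)}, X^{(k)}) = 1$ for every $k \neq j$. Therefore, if all unordered pairs $\{j,k\}$ with $j \neq k$ satisfy $\|M^{-1/4}(X^{(j)} - X^{(k)})\|_2 \leq \lambda$, then $v_j = 1$ deterministically for every $j$, and so $C = X$.

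Second, I would control the probability that a single pair violates the predicate. For any fixed $j \neq k$, independence of $X^{(j)}$ and $X^{(k)}$ together with the subgaussian property implies that $Y := M^{-1/4}(X^{(j)} - X^{(k)})$ is a mean-zero subgaussian vector with covariance proxy $2 M^{-1/4}\Sigma M^{-1/4}$ (the factor of $2$ coming from summing two independent copies, and the linear transformation by $M^{-1/4}$ conjugating the covariance proxy). Applying Lemma~\ref{lem:gaussianconcentr} to the one-sample ``empirical mean'' $Y$ (i.e.\ with $n = 1$) at failure probability $\beta/n^2$ gives, with probability $1 - \beta/n^2$,
\[
\|Y\|_2 \leq \sqrt{2\tr(M^{-1/4}\Sigma M^{-1/4})} + 2\sqrt{\|M^{-1/4}\Sigma M^{-1/4}\|_2 \log(n^2/\beta)} \leq \lambda,
\]
where the last inequality uses the hypothesis on $\lambda$ together with $\log(n^2/\beta) \leq 2\log(n/\beta)$ (valid since $n \geq 1$ and $\beta \leq 1$).

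Finally I would apply a union bound over the at most $n^2/2$ unordered pairs to conclude that all pairwise predicates hold simultaneously with probability at least $1 - \beta$, which by the deterministic reduction from the first step yields $C = X$ with the same probability. The ``main obstacle'' is essentially bookkeeping: correctly tracking that the difference of two independent subgaussians has covariance proxy $2\Sigma$ rather than $\Sigma$, and that the $\log(n^2/\beta)$ term arising from the union bound is absorbed into the constants in the stated threshold for $\lambda$.
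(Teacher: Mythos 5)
Your proposal is correct and matches the paper's own proof essentially step for step: the same reduction showing that $p_j = 1$ for all $j$ whenever every pair satisfies the predicate, the same observation that $M^{-1/4}(X^{(j)}-X^{(k)})$ is mean-zero subgaussian with covariance proxy $2M^{-1/4}\Sigma M^{-1/4}$, the same application of Lemma~\ref{lem:gaussianconcentr} with a single observation at failure probability $\beta/n^2$, and the same union bound. The only cosmetic difference is that you spell out the diagonal-term bookkeeping and the $\log(n^2/\beta)\le 2\log(n/\beta)$ absorption slightly more explicitly.
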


\begin{proof}
    It suffices to show that in BasicFilter we have $p_j = 1$ for $j \in  [n]$ (so that $v_j = 1$ and thus $C=X$). For each $j, k \in [n]$, we want to show $\dist_{M,\lambda}(X^{(j)}, X^{(k)}) = 1$ with probability at least $1 - \beta/n^2$. For $j=k$, it is trivial. What is left is to show for $j\neq k$, 
    \begin{equation}\label{eq:X_is_C}
         \|M^{-1/4}(X^{(j)} - X^{(k)})\|_2\leq \sqrt{2 \tr(M^{-1/4} \Sigma M^{-1/4})} + 2\sqrt{2 \| M^{-1/4} \Sigma M^{-1/4} \|_2 \log(n/\beta)}~,
    \end{equation}
     with probability at least $1 - \beta/n^2$. 
     First observe that $-X^{(k)}$ is a subgaussian vector independent of $X^{(j)}$ with mean $-\mu$ 
     and covariance proxy $\Sigma$. Hence, $X^{(j)} - X^{(k)}$ is a subgaussian vector with mean 0 and covariance proxy $2\Sigma$, and so $M^{-1/4}(X^{(j)} - X^{(k)})$ is subgaussian with mean zero and covariance proxy $2 M^{-1/4} \Sigma M^{-1/4}$.
     % because for any $\lambda$ and any $v \in \R^d$, by symmetry of $M$,
     % \begin{align*}
     %     \E\exp(\lambda\langle M^{-1/4}(X^{(j)} - X^{(k)}) , v\rangle) &= \E\exp(\lambda\langle (X^{(j)} - X^{(k)}) , M^{-1/4} v\rangle)\\
     %     &\le \exp(\lambda^2 (M^{-1/4} v)^{\top}\Sigma (M^{-1/4} v))\\
     %     &=  \exp(\lambda^2 v^\top (2 M^{-1/4}  \Sigma M^{-1/4}) v / 2).
     % \end{align*}
     By Lemma~\ref{lem:gaussianconcentr}, with probability $1 - \beta/n^2$,
    \begin{align*}
        \| M^{-1/4}(X^{(j)} - X^{(k)}) \|_2 &\le \sqrt{2 \tr( M^{-1/4} \Sigma M^{-1/4} )} + 2 \sqrt{2 \| M^{-1/4} \Sigma M^{-1/4} \|_2\log(n/\beta)}~.
    \end{align*}
    Then we union bound all $n(n-1)$ pairs of $j, k \in [n], j \neq k$ such that Eq.~\eqref{eq:X_is_C} holds with probability of at least $1 - \beta$.
\end{proof}
\begin{thm}\label{thm:accuracy_main}
     Let $\eps>0,\delta\in(0,1), \alpha>0, \beta\in(0,1)$. Suppose $n\geq 2\log(1/\delta\beta)/\eps$. Let $X = (X^{(1)}, \ldots, X^{(n)})$ be a data set drawn from a subgaussian distribution with mean $\mu$ and covariance proxy $\Sigma$. Then, given invertible matrix $M$ and $\lambda\geq\sqrt{2\tr(M^{-1/4}\Sigma M^{-1/4})}+2\sqrt{2\|M^{-1/4}\Sigma M^{-1/4}\|_2\log(n/\beta)}$, Algorithm~\ref{alg:main}, with probability $1 -  \frac{7}{2} \beta$, returns $\hat{\mu}$ such that
   \iflong
   \begin{align*}
        \|\hat{\mu} - \mu\|_2 & \le \frac{\sqrt{\tr(\Sigma)}}{\sqrt{n}}+\frac{\sqrt{2\|\Sigma\|_2\log(1/\beta)}}{\sqrt{n}} + \frac{4 \sqrt{2 \log(1.25/\delta)}\lambda }{\eps n}\left(\sqrt{ \tr(M^{1/2})} + \sqrt{2 \| M^{1/2} \|_2  \log(1/\beta)} \right).%\frac{8 \sqrt{ \log(1.25/\delta)} }{\eps n} \left(\sqrt{ \tr (M^{-1/4} \Sigma M^{-1/4})} + 2 \sqrt{ \| M^{-1/4} \Sigma M^{-1/4}  \|_2 \log (n / \beta)}\right) \left(\sqrt{ \tr(M^{1/2})} + \sqrt{2 \| M^{1/2} \|_2  \log(1/\beta)} \right) \nonumber\\ 
        %&+\frac{\sqrt{\tr(\Sigma)}}{\sqrt{n}} + \frac{\sqrt{2\|\Sigma\|_2\log(1/\beta)}}{\sqrt{n}} 
    \end{align*}
    \else
    \begin{align*}
        \|\hat{\mu} - \mu\|_2 & \le \frac{\sqrt{\tr(\Sigma)}}{\sqrt{n}}+\frac{\sqrt{2\|\Sigma\|_2\log(1/\beta)}}{\sqrt{n}} \\
        & \quad\quad + \frac{4 \sqrt{2 \log(1.25/\delta)}\lambda }{\eps n}\left(\sqrt{ \tr(M^{1/2})} + \sqrt{2 \| M^{1/2} \|_2  \log(1/\beta)} \right).
    \end{align*}
    \fi
\end{thm}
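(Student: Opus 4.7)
The plan is to show that on a single high-probability event, each of the three error sources in the algorithm (filtering, thresholding, noise) behaves as intended, and then bound $\|\hat\mu - \mu\|_2$ by the triangle inequality. Since the target bound decomposes as the empirical-mean error plus the $\ell_2$ magnitude of the noise vector $\eta$, I would produce exactly these two summands, via Lemma~\ref{lem:gaussianconcentr} applied twice (once to $X$ and once to $\eta$), provided we can argue that nothing in between degrades the picture.

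First I would invoke Lemma~\ref{lem:X_is_C}: with probability at least $1-\beta$, the $\mathrm{BasicFilter}$ step returns $C = X$, so in particular $|C| = n$. Conditioned on this, I would control $\hat n_C = |C| - \log(1/\delta)/\eps + z$ with $z \sim \mathrm{Lap}(1/\eps)$. Using the Laplace tail bound, $|z| \le \log(2/\beta)/\eps$ with probability at least $1 - \beta/2$, hence
\[
\hat n_C \;\ge\; n - \frac{\log(1/\delta)}{\eps} - \frac{\log(2/\beta)}{\eps} \;\ge\; \tfrac{n}{2},
\]
using the hypothesis $n \ge 2\log(1/\delta\beta)/\eps$. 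In particular $\hat n_C > 0$, so the algorithm does not abort and returns $\hat\mu = \frac{1}{|C|}\sum_{x\in C} x + \eta$.

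Next, with $C = X$, the empirical mean $\bar X = \frac{1}{n}\sum_{j=1}^n X^{(j)}$ is drawn i.i.d.\ from a subgaussian distribution with mean $\mu$ and covariance proxy $\Sigma$, so Lemma~\ref{lem:gaussianconcentr} gives, with probability $1-\beta$,
\[
\|\bar X - \mu\|_2 \;\le\; \frac{\sqrt{\tr(\Sigma)}}{\sqrt n} + \frac{\sqrt{2\|\Sigma\|_2 \log(1/\beta)}}{\sqrt n}.
\]
For the noise term, I would note that $\eta \sim \mathcal{N}(0, v^2 M^{1/2})$ with $v = 2\sqrt{2\log(1.25/\delta)}\,\lambda/(\eps\, \hat n_C)$, so $\eta$ is subgaussian with covariance proxy $v^2 M^{1/2}$. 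Applying Lemma~\ref{lem:gaussianconcentr} (to the one-sample ``dataset'' consisting of $\eta$ itself) yields, with probability $1-\beta$,
\[
\|\eta\|_2 \;\le\; v\Bigl(\sqrt{\tr(M^{1/2})} + \sqrt{2 \|M^{1/2}\|_2 \log(1/\beta)}\Bigr).
\]
Substituting $\hat n_C \ge n/2$ into $v$ gives $v \le 4\sqrt{2\log(1.25/\delta)}\,\lambda/(\eps n)$, which is exactly the prefactor in the theorem.

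Finally I would union-bound the four failure events ($C=X$, tail of $z$, concentration of $\bar X$, concentration of $\eta$), for a total failure probability of at most $\beta + \beta/2 + \beta + \beta = 7\beta/2$, and combine the two concentration bounds by the triangle inequality $\|\hat\mu - \mu\|_2 \le \|\bar X - \mu\|_2 + \|\eta\|_2$. I do not anticipate any serious obstacle: the only slightly delicate piece is keeping the conditioning clean (the bound on $\hat n_C$ depends on $|C|=n$, and the substitution $\hat n_C \ge n/2$ must be done before applying the noise concentration bound so that the $v$ in the bound is deterministic given the event). Everything else is a direct application of the lemmas already stated in the paper.
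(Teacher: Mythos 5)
Your proof is essentially the paper's own: same decomposition (empirical-mean error plus noise magnitude, after noting $\mu_C=\mu_X$ on the event $C=X$), same four failure events, and the same two applications of Lemma~\ref{lem:gaussianconcentr}. One small technical point: you bound $|z| \le \log(2/\beta)/\eps$ via a two-sided Laplace tail, which forces $n \ge 2\log(2/(\delta\beta))/\eps$ to conclude $\hat n_C \ge n/2$ --- slightly stronger than the stated hypothesis $n \ge 2\log(1/(\delta\beta))/\eps$. Since only a lower bound on $\hat n_C$ is needed, you should instead use the one-sided tail $\Pr[z < -\log(1/\beta)/\eps] = \beta/2$ (as the paper does), which removes the spurious $\log 2$ and closes the gap exactly under the given hypothesis.
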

\begin{proof}
    Let $\mu_C, \mu_X$ be the sample mean of $C$ and $X$, respectively.
    By \emph{the triangle} inequality, we decompose it into 
    \begin{equation}\label{eq:ub-accuracy}
        \|\hat{\mu} - \mu\|_2 \le \| \hat{\mu} - \mu_C \|_2 + \| \mu_C - \mu_X \|_2 +  \| \mu_X - \mu \|_2 ~.
    \end{equation}
    
    By Lemma~\ref{lem:X_is_C}, $C=X$ with probability $1 - \beta$. Condition on this event for the rest of the proof. Then, $\hat{n}_C=n-\frac{\log(1/\delta)}{\eps}+z$ satisfies  $\hat{n}_C\ge 0.5n>0$ with probability $1 - \beta/2$ because
    \iflong
    \[ \Pr \left[ \hat{n}_C<0.5n \right] 
   %= \Pr \left( n - \frac{\log(1/\delta)}{\eps} + z  < 0.5 n \right) 
   = \Pr \left[z  < \frac{\log(1/\delta)}{\eps} - 0.5n \right] \leq  \Pr \left[z  < \frac{\log(1/\delta)}{\eps} - \frac{\log(1/\delta\beta)}{\eps} \right] =\Pr \left[ z <  -\frac{\log(1/\beta)}{\eps}\right] = \frac{1}{2}\beta \] 
   \else
   \[\Pr \left[z  < \frac{\log(1/\delta)}{\eps} - 0.5n \right] \leq  \Pr \left[z  < \frac{\log(1/\delta)}{\eps} - \frac{\log(1/\delta\beta)}{\eps} \right] =\Pr \left[ z <  -\frac{\log(1/\beta)}{\eps}\right] = \frac{1}{2}\beta \]
   \fi
   by Definition~\ref{def:laplacedistr} and our assumption that $n \ge 2\log(1/\delta\beta)/\eps$.
   Conditioning on this assumption, we do not abort and with probability $1 - \beta$, by Lemma~\ref{lem:gaussianconcentr},
    \begin{align*}
       \| \hat{\mu} - \mu_C \|_2 = \| \eta \|_2 
        %&\le \sqrt{k \tr(M^{1/2})} + \sqrt{2 k \| M^{1/2} \|_2  \log(1/\beta'')} \\
        &\le  \frac{4 \sqrt{2 \log(1.25/\delta)} \lambda}{\eps n} \left(\sqrt{ \tr(M^{1/2})} + \sqrt{2 \| M^{1/2} \|_2  \log(1/\beta)} \right)~.
    \end{align*}
    %(The inequality holds because $\Pr[z<s]\leq \Pr[z<B]$ for $s<B$.) 
    Again, by Lemma~\ref{lem:gaussianconcentr}, with probability $1-\beta$, 
   \begin{align*}
       \| \mu_X - \mu \|_2 = \left\|\frac{1}{n}\sum_{j=1}^n X^{(j)} -\mu\right\|_2 &\le \frac{\sqrt{\tr(\Sigma)}}{\sqrt{n}} + \frac{\sqrt{2\|\Sigma\|_2\log(1/\beta)}}{\sqrt{n}} .
   \end{align*}
   Moreover, since $C=X$, it holds that $\mu_X=\mu_C$. 
    Combining these results into Eq.~\eqref{eq:ub-accuracy}, the algorithm does not abort and we retrieve the stated error bound, with probability $1 - \frac{7}{2} \beta$.
% \begin{align*}
%         \|\hat{\mu} - \mu\|_2 & \le  \frac{8 \sqrt{ \log(1.25/\delta)} }{\eps n} \left(\sqrt{ \tr (M^{-1/4} \Sigma M^{-1/4})} + 2 \sqrt{ \| M^{-1/4} \Sigma M^{-1/4}  \|_2 \log (n / \beta)}\right) \left(\sqrt{ \tr(M^{1/2})} + \sqrt{2 \| M^{1/2} \|_2  \log(1/\beta)} \right) \nonumber\\ 
%         &+\frac{\sqrt{\tr(\Sigma)}}{\sqrt{n}} + \frac{\sqrt{2\|\Sigma\|_2\log(1/\beta)}}{\sqrt{n}} 
%     \end{align*}
\end{proof}

\subsection{Privacy analysis}
We now move to the privacy analysis. 
\begin{lemma}\label{lem:hat_n_ub}
    In Algorithm~\ref{alg:main}, $\hat{n}_C < |C|$, with probability $1 - \delta/2$. 
\end{lemma}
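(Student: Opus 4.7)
The plan is to unwind the definition of $\hat{n}_C$ and reduce the claim to a one-sided tail bound on the added Laplace noise. By construction in Algorithm~\ref{alg:main},
\[
\hat{n}_C = |C| - \frac{\log(1/\delta)}{\eps} + z, \qquad z \sim \Lap(1/\eps),
\]
so the event $\{\hat{n}_C < |C|\}$ is exactly $\{z < \log(1/\delta)/\eps\}$, and no properties of $C$ itself are needed.

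The next step is to invoke the closed-form tail of the Laplace distribution recorded in Definition~\ref{def:laplacedistr}, namely $\Pr_{z\sim\Lap(v)}[z \geq v\log(1/(2\beta))] = \beta$. Applying this with $v = 1/\eps$ and choosing $\beta = \delta/2$, we have $v\log(1/(2\beta)) = \log(1/\delta)/\eps$, which gives
\[
\Pr\!\left[z \geq \frac{\log(1/\delta)}{\eps}\right] = \frac{\delta}{2}.
\]
Taking the complement (and noting that the event $\{z = \log(1/\delta)/\eps\}$ has measure zero since $\Lap(1/\eps)$ is absolutely continuous) yields $\Pr[\hat{n}_C < |C|] \geq 1 - \delta/2$, as claimed.

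There is no real obstacle here: the lemma is a direct calibration of the $\log(1/\delta)/\eps$ shift in the definition of $\hat{n}_C$ against the upper tail of $\Lap(1/\eps)$. The only point worth noting is that this is precisely the reason for subtracting $\log(1/\delta)/\eps$ rather than a smaller quantity, so that with probability at least $1-\delta/2$ the noisy count $\hat{n}_C$ lies strictly below the true count $|C|$; this one-sided inequality will be used in the privacy argument to ensure that in the Gaussian-mechanism step the effective denominator $\hat{n}$ is upper bounded by $|X|$ on a high-probability event, so that the $\ell_2$-sensitivity bound $2\lambda/|X| \leq 2\lambda/\hat{n}$ goes through.
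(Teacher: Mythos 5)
Your proof is correct and matches the paper's argument exactly: both unwind the definition of $\hat{n}_C$, reduce the event $\{\hat{n}_C \ge |C|\}$ to the Laplace tail event $\{z \ge \log(1/\delta)/\eps\}$, and apply the closed-form tail from Definition~\ref{def:laplacedistr} with $v = 1/\eps$ and $\beta = \delta/2$. The paper's proof is just this calculation stated more tersely, plus the same observation that no assumption on $C$ or the data distribution is needed.
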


\begin{proof}
    It follows that by Definition~\ref{def:laplacedistr},
    \[ \Pr \left[\hat{n}_C \ge |C| \right] = \Pr \left[|C| - \log(1/\delta)/\eps + z \ge |C| \right] = \Pr \left[z \ge \log(1/\delta)/\eps\right] = \frac{1}{2} \delta. \] 
    Note that this holds regardless of whether $C$ is $\dist_{\Sigma,\beta}$-friendly or whether $X$ is subgaussian.
\end{proof}
The privacy analysis follows the steps of~\citep[Claim 3.4]{TsfadiaCKMS22}.
\begin{thm}\label{thm:privacy_main}
Let $\eps\in(0,1/2), \delta\in(0,1/2)$. For any input parameters $M,\lambda$, Algorithm~\ref{alg:main} satisfies $(21\eps,e^{10}\delta)$-DP.
\end{thm}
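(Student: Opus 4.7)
The plan is to invoke the FriendlyCore paradigm of Theorem~\ref{thm:fcparadigm} with predicate $f=\dist_{M,\lambda}$: this reduces the problem to proving that the sub-algorithm $\cA$ consisting of Lines~4--7 of Algorithm~\ref{alg:main}---the noisy count, abort check, and Gaussian release acting on the already-filtered set $C$---is $\dist_{M,\lambda}$-friendly $(\eps',\delta')$-DP for some $\eps'=O(\eps)$ and $\delta'=O(\delta)$. The argument mirrors the template of~\citep[Claim~3.4]{TsfadiaCKMS22}. Concretely, I would fix neighboring inputs $X,X'$ with $X\cup X'$ being $\dist_{M,\lambda}$-friendly; without loss of generality $X'=X\setminus\{X^{(n)}\}$. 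The first step analyzes the Laplace count: since $Y\mapsto|Y|$ has sensitivity $1$, Line~4 gives $\hat{n}_X\approx_{\eps,0}\hat{n}_{X'}$. Next, Lemma~\ref{lem:hat_n_ub} ensures that the event $E=\{\hat{n}_X<|X|\}\cap\{\hat{n}_{X'}<|X'|\}$ has probability at least $1-\delta$ under either input, so conditioning on $E$ (via Fact~\ref{fact:cond_indisting}) costs only a constant factor in $\eps$. After further conditioning on a common realization $\hat{n}_X=\hat{n}_{X'}=\hat{n}$, the analysis reduces to Line~7 with a fixed $\hat{n}$ satisfying $\hat{n}<|X'|=|X|-1$.

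If $\hat{n}\le 0$, both runs abort identically, so assume $\hat{n}>0$. The next step is the sensitivity calculation for Line~7. Because multiplication by the data-independent matrix $M^{-1/4}$ is post-processing, it suffices to bound the $\ell_2$-sensitivity of $M^{-1/4}\mu_C$. A direct computation gives $\bar{X}-\bar{X}'=\tfrac{1}{|X|}(X^{(n)}-\bar{X}')$, and combining this with the triangle inequality and the $\dist_{M,\lambda}$ predicate yields
\begin{equation*}
\|M^{-1/4}(\bar{X}-\bar{X}')\|_2\le\frac{1}{|X|(|X|-1)}\sum_{i=1}^{|X|-1}\|M^{-1/4}(X^{(n)}-X^{(i)})\|_2\le\frac{\lambda}{|X|}<\frac{\lambda}{\hat{n}}.
\end{equation*}
Hence the noise standard deviation $\sigma=2\sqrt{2\log(1.25/\delta)}\,\lambda/(\eps\hat{n})$ used in Line~7 exceeds the Gaussian-mechanism requirement at sensitivity $\lambda/|X|$ by at least a factor of $2$, and this built-in slack absorbs the constant-factor conditioning cost from the previous paragraph.

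Combining the Laplace step and the Gaussian release via basic composition then shows that $\cA$ is $\dist_{M,\lambda}$-friendly $(O(\eps),O(\delta))$-DP. Finally, invoking Theorem~\ref{thm:fcparadigm} and using the hypotheses $\eps<1/2$ and $\delta<1/2$ to bound the resulting expressions $2(e^{\eps'}-1)\eps'$ and $2e^{\eps'+2(e^{\eps'}-1)}\delta'$ yields the stated $(21\eps,e^{10}\delta)$-DP guarantee. The main obstacle is not any deep mathematics but rather the bookkeeping of constants: tracking how the conditioning in Fact~\ref{fact:cond_indisting}, the basic composition of Laplace and Gaussian, and the FriendlyCore wrapping in Theorem~\ref{thm:fcparadigm} interact, and verifying that the multiplicative slack in the Gaussian noise is large enough for the final parameters to fit under the prescribed $21\eps$ and $e^{10}\delta$ budgets.
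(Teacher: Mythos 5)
Your high-level plan — invoke Theorem~\ref{thm:fcparadigm}, treat Lines~4--7 as a sub-algorithm $\cA$, split into a Laplace step and a Gaussian step, condition on the high-probability event that the noisy count falls below $|X|$, and compose — is the same roadmap the paper follows, and it is sound. However, there is a genuine gap in the sensitivity calculation, and it invalidates your later bookkeeping claim.

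You bound $\|M^{-1/4}(X^{(n)}-X^{(i)})\|_2\le\lambda$ ``by the $\dist_{M,\lambda}$ predicate,'' obtaining sensitivity $\lambda/|X|$. But Definition~\ref{def:friendly} says that $X\cup X'$ being $\dist_{M,\lambda}$-friendly only guarantees that for each pair $x,y$ there \emph{exists} some $z$ with $\dist_{M,\lambda}(x,z)=\dist_{M,\lambda}(z,y)=1$; it does \emph{not} guarantee $\dist_{M,\lambda}(x,y)=1$. FriendlyCore deliberately does not filter to a clique. The correct argument routes through the intermediate $Y^{(i)}$ and applies the triangle inequality, yielding $\|M^{-1/4}(X^{(n)}-X^{(i)})\|_2\le 2\lambda$, hence $\|M^{-1/4}D\|_2\le 2\lambda/|X|\le 2\lambda/\hat{n}$. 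With this corrected sensitivity, the noise level $v^2=8\log(1.25/\delta)\lambda^2/(\eps^2\hat{n}^2)$ in Line~7 is \emph{exactly} what the Gaussian mechanism requires for $(\eps,\delta)$-indistinguishability at sensitivity $2\lambda/\hat{n}$ — there is no factor-of-$2$ slack. Consequently your claim that the Gaussian noise has ``built-in slack'' that ``absorbs the constant-factor conditioning cost'' is false on two counts: the slack does not exist, and in the paper's accounting the conditioning cost is paid in the Laplace step (whose privacy parameter becomes $\eps/(1-\delta/2)$ via Fact~\ref{fact:cond_indisting}), not in the Gaussian step. The final parameters come from composing a $(\eps/(1-\delta/2),0)$-indistinguishable Laplace step with an $(\eps,\delta)$-indistinguishable Gaussian step, giving $\cA$ a friendly-DP guarantee of roughly $(3\eps,2\delta)$ for $\eps,\delta\le 1/2$, which then feeds into Theorem~\ref{thm:fcparadigm}. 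You should also handle the edge case $|X'|=0$ separately (the paper shows that when $|X|=1$ both runs abort with probability at least $1-e^\eps\delta/2$), since the $\hat{N}_{X'}<|X'|$ conditioning does not apply there.
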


\begin{proof}
It suffices show that lines 4-7 of Algorithm~\ref{alg:main} are $\dist_{\Sigma,\beta}$-friendly $(\eps', \delta')$-DP, such that by Theorem~\ref{thm:fcparadigm}, Algorithm~\ref{alg:main} is $(2(e^{\eps'}-1){\eps'},2e^{{\eps'}+2(e^{\eps'}-1)}{\delta'})$-DP. 

Denote lines 4-7 of Algorithm~\ref{alg:main} as algorithm $\cA$. Consider neighboring inputs $X, X'$ such that $X \cup X'$ is $\dist_{\Sigma,\beta}$-friendly. Assume without loss of generality $X' = X \setminus X^{(j)}$ so that $|X'| = |X| - 1$. Let $\cA(X), \cA(X')$ represent the outputs of two independent executions of $\cA$ and let $\widehat{N}_{X}, \widehat{N}_{X'}$ be the random variable in line $4$ of the algorithm. We want to show $\cA(X) \approx_{\eps',\delta'} \cA(X')$. 

Note that $|X| > 0$.
If $|X'| = 0$, then $|X|=1$ and $\Pr\left[\cA(X')=\bot\right]=1$. We then show that $\Pr\left[\cA(X)=\bot \right] \ge 1 - e^{\eps}\delta/2$. This holds since by Definition~\ref{def:laplacedistr},
\[ \Pr[ \hat{N}_{X} \leq 0] = \Pr\left[ z \leq \log (1/\delta)/\eps - 1\right] = \Pr\left[ z \leq \log (1/(e^{\eps}\delta))/\eps\right] = 1 - \frac{e^{\eps}\delta}{2}.\]
%\Pr\left[ z \leq \frac{\log (1/\delta'')}{\eps}\right] = 1 - \frac{\delta''}{2} \ge 1 - \delta. \]

% We solve for $\delta''$:
% \begin{align*}
%     \frac{\log (1/\delta)}{\eps} - 1  &= \frac{\log (1/\delta'')}{\eps}\\
%    \log (1/\delta) - \eps  &= \log (1/\delta'')\\
%    e^{\log (1/\delta) - \eps} &= 1/\delta''\\
%    1/\delta &= e^{\eps}/\delta''\\
%    \delta'' &= e^{\eps} \delta
% \end{align*}
% where the solution satisfies $0 \le \delta'' \le 2 \delta$ as long as $e^\eps = \delta'' / \delta \le 2$ (i.e. $\eps \le \log 2$). 
Therefore, in this case, $\cA(X) \approx_{0, e^{\eps}\delta/2} \cA(X')$. That is, if $\eps\leq 1/2$, $\cA$ is $(0,\delta)$-DP.

Now consider $|X'| > 0$. By Lemma~\ref{lem:hat_n_ub}, We know $\Pr [\hat{N}_{X} < |X| ] = \Pr [ \hat{N}_{X'} < |X'| ] = 1 - \delta/2$. Hence, $\Pr [ \hat{N}_{X'} < |X| ] = \Pr [ \hat{N}_{X'} < |X'| + 1 ] \ge 1 - \delta/2$. Then what is left is to compare $\cA(X) |_{\hat{N}_{X} < |X|}, \cA(X') |_{\hat{N}_{X'} < |X|}$.

By Lemma~\ref{lem:laplacemech}, $\hat{N}_{X} \approx_{\eps,0} \hat{N}_{X'}$ as $|X| - |X'| = 1$. By Fact~\ref{fact:cond_indisting}, $\hat{N}_{X} |_{\hat{N}_{X} < |X|} \approx_{\eps/(1-\delta/2),0} \hat{N}_{X'} |_{\hat{N}_{X'} < |X|}$. In order to perform composition by Lemma~\ref{lem:composition}, we now show that for each fixed $\hat{n} < |X|$, $\cA(X) |_{\hat{N}_{X} = \hat{n}} \approx_{\eps, \delta} \cA(X') |_{\hat{N}_{X'} = \hat{n}}$ as follows:

Choose $\hat{n} < |X|$. If $\hat{n} \le 0$, then $ \cA(X)|_{\widehat{N}_{X} = \hat{n}} = \cA(X')|_{\widehat{N}_{X} = \hat{n}} = \bot$ and we are done. If $0 < \hat{n} < |X|$, 
it suffices to show $\cN(\frac{1}{|X|} \sum_{i=1}^{|X|} X^{(i)}, v^2 M^{1/2} ) \approx_{\eps, \delta} \cN(\frac{1}{|X| - 1} \sum_{i=1, i \neq j}^{|X|} X^{(i)}, v^2 M^{1/2} ) $, where $v^2=\frac{8\log(1.25/\delta)\lambda^2}{\eps^2 \hat{n}^2}$, which, by post-processing, is equivalent to
\begin{equation}
    \cN\left( M^{-1/4} \frac{1}{|X|} \sum_{i=1}^{|X|} X^{(i)}, v^2 \id \right) \approx_{\eps, \delta} \cN\left(M^{-1/4} \frac{1}{|X| - 1} \sum_{i=1, i \neq j}^{|X|} X^{(i)}, v^2 \id \right).
    \label{eq:privacy_gausmech}
\end{equation}
Define vector $D = \frac{1}{|X|} \sum_{i=1}^{|X|} X^{(i)} -  \frac{1}{|X| - 1} \sum_{i=1, i \neq j}^{|X|} X^{(i)}$. 
As $X \cup X'$ is $\dist_{\Sigma,\beta}$-friendly, for every $i \in [|X|] \setminus \{j\}$, there exists some $Y^{(i)} \in \mathbb{R}^{d}$ such that $\dist_{\Sigma,\beta} (X^{(i)} , Y^{(i)}) = \dist_{\Sigma,\beta} (X^{(j)},Y^{(i)}) = 1.$ 
We have
\begin{align*}
    \| M^{-1/4} D \|_2  
    %&=\left\| M^{-1/4} \left( \frac{1}{|X|} \sum_{i=1}^{|X|} X^{(i)} -  \frac{1}{|X| - 1} \sum_{i=1, i \neq j}^{|X|} X^{(i)} \right) \right\|_2 
    %&= \left\| \frac{1}{|X|(|X| - 1)} M^{-1/4} \left( \sum_{i=1, i \neq j}^{|X|} \left(X^{(i)} (|X| - 1) - X^{(i)} |X| \right) + X^{(j)} (|X| - 1) \right) \right\|_2 \\
    &= \frac{1}{|X|(|X| - 1)}  \left\| M^{-1/4} \left( \left( \sum_{i=1, i \neq j}^{|X|} X^{(i)} \right) - X^{(j)} (|X| - 1) \right) \right\|_2 \\
    %&\le  \frac{1}{|X|(|X| - 1)} \sum_{i=1, i \neq j}^{|X|} \left\| M^{-1/4} \left( X^{(i)} - X^{(j)} \right) \right\|_2  \tag{by triangle inequality} \\
    %&= \frac{1}{|X|(|X| - 1)} \sum_{i=1, i \neq j}^{|X|} \left\| M^{-1/4} \left( X^{(i)} - Y^{(i)} + Y^{(i)} - X^{(j)} \right) \right\|_2  \\
    &\le  \frac{1}{|X|(|X| - 1)} \sum_{i=1, i \neq j}^{|X|} \left( \left\| M^{-1/4} \left( X^{(i)} - Y^{(i)} \right) \right\|_2 + \left\| M^{-1/4} \left( Y^{(i)} - X^{(j)}  \right) \right\|_2  \right)  \tag{by triangle inequality} \\
    &\le \frac{1}{|X|(|X| - 1)} \sum_{i=1, i \neq j}^{|X|} 2 \lambda 
    =  \frac{2 \lambda}{|X|}
    \le \frac{2 \lambda}{\hat{n}} \tag{by $\dist_{\Sigma, \beta}$-friendly assumption and since $0 < \hat{n} < |X| $}.
\end{align*} 
%By Lemma~\ref{lem:hat_n_ub}, with probability $1 - \delta/2$, the above inequality becomes $\| M^{-1/4} D \|_2 \le  \frac{2 \lambda}{\hat{n}}$.

We know Equation~\eqref{eq:privacy_gausmech} holds by applying the guarantees of the Gaussian mechanism (Lemma~\ref{lem:gaussianmech}) where we set $f(X) = M^{-1/4} \frac{1}{|X|}\sum_{i=1}^{|X|} X^{(i)}$ and $\Delta_f= 2\lambda/\hat{n}$. %$v^2 = \left(\frac{2\lambda}{\hat{n}}\frac{\sqrt{2 \log (1.25 / \delta)}}{\eps}\right)^2$. 

Combining these results, we have $\cA$ is $(\eps+\frac{\eps}{1-\delta/2}, \delta e^{\eps/(1-\delta/2)}+\frac{\delta}{2})$-DP in this case. For $\eps\leq 1/2, \delta\leq 1/2$, this becomes at most $\left(3\eps, 2\delta\right)$-DP. 

Therefore, overall, by Theorem~\ref{thm:fcparadigm}, Algorithm~\ref{alg:main} is 
$(2(e^{\eps'}-1){\eps'},2e^{{\eps'}+2(e^{\eps'}-1)}{\delta'})$-DP with $\eps' = 3\eps, \delta' = 2\delta$. So for $\eps\leq 1/2$, the algorithm is $(21\eps,e^{10}\delta)$-DP overall.
\end{proof}

\fi
\section{Handling unknown covariance}\label{sec:unknown}
In this section we consider the case of unknown covariance. First, recall that $\Omega(d^{3/2})$ samples are required to privately learn the covariance matrix in spectral norm~\citep{KamathMS22}, which is prohibitive. The lower bound instance is an almost-isotropic Gaussian, which means that anisotropic distributions may circumvent it. Still, the superlinear dependence on $d$ implies that this approach will yield suboptimal sample complexity for mean estimation. Avoiding private covariance estimation, Brown et al.~\citep{BrownGSUZ21} propose a ``covariance-aware'' private mean estimator which returns the mean with Gaussian noise which scales with the empirical covariance matrix of the data set $\Sigma_X$, as $\cN(0,\lambda^2_M\Sigma_X/(\eps^2 n^2))$ for appropriate factor $\lambda^2_M$. 
Since adding data-dependent noise can break privacy, a pre-processing step is required to ensure that no outliers exist in the data set with respect to the empirical covariance, roughly ensuring that $\|\Sigma_X^{-1/2}(X^{(k)}-X^{(j)})\|_2\leq \lambda_M$, for all $j\neq k \in [n]$. 
In our case, to maintain the accuracy guarantee of the known-covariance case, the Gaussian noise should be $\cN(0,\lambda^2\Sigma_X^{1/2}/(\eps^2 n^2))$ and all data points should satisfy $\|\Sigma_X^{-1/4}(X^{(k)}-X^{(j)})\|_2\leq \lambda$. 
Note that $n\geq \tr(\Sigma)/\|\Sigma\|_2$ samples suffice for the empirical covariance to be close to the true covariance $\Sigma$ in spectral norm~\citep{koltchinskii2017concentration}, so applying the algorithm from~\citep{BrownGSUZ21} could maintain accuracy while still allowing a dimension-free sample complexity. 
Unfortunately, we still cannot use this approach because $n\geq d$ samples are required for the privacy analysis to go though, namely, for neighboring data sets $X,X'$ it holds that $\cN(0,\Sigma_X^{1/2})\approx_{\eps,\delta} \cN(0,\Sigma_{X'}^{1/2})$ for $\eps\approx d/n$, which forces us to take $n\geq d/\eps$ samples. The same is true for the follow-up works of~\citep{BrownHS23, KuditipudiDH23} which give polynomial-time versions of this algorithm with slightly better statistical guarantees. 

Luckily, our accuracy guarantee does not require the variance estimate in all directions to be accurate. For example, consider all directions with variance at most $\|\Sigma\|_2/d$. Adding spherical Gaussian noise to these directions maintains a dimension-free error, without requiring tighter estimates for their variance. 
Thus, on a high level, our approach for mean estimation in the unknown covariance case is to identify and estimate as many of the top variances as our sample size allows, which turns out to be $k\approx \eps^2 n^2$, while adding spherical Gaussian noise to the remaining ones.

We \iflong give \else sketch \fi the proof of the following theorem\iflong, realized by Algorithm~\ref{alg:covmeanest}\fi. \ifshort All remaining details are in Appendix~\ref{sec:unknown-main}.\fi \iflong To simplify our presentation and the analysis, we focus on the Gaussian case. 
\begin{thm}\label{thm:unknown}
Let parameters $\eps, \delta \in (0,1)$. 
Let $\X\sim \cN(\mu,\Sigma)^n$ with unknown covariance $\Sigma$.
There exists an $(\eps,\delta)$-differentially private algorithm which, with probability $1-\beta$, returns an estimate $\hat{\mu}$ such that $\|\hat\mu-\mu\|_2\leq \alpha$, as long as
\begin{equation*}%\label{eq:unknown_sc_req}
        n=\Omega\left(\log^2 (d)+\frac{\log (d) \log(\frac{1}{\delta\beta})\sqrt{\log\frac{1}{\delta}}}{\eps}\right)~,
\end{equation*}
\begin{equation}\label{eq:unknown_sc_nonpriv}
n=\Omega\left(\frac{\tr(\Sigma)+\|\Sigma\|_2\log\frac{1}{\beta}}{\alpha^2}\right),
\end{equation}
\begin{equation}\label{eq:unknown_sc_top}
n=\tilde{\Omega}\left(\frac{\sum_{i=1}^d \Sigma_{ii}^{1/2}\sqrt{\log\frac{1}{\delta}}}{\alpha\eps}\right)~,%n=\Omega\left(\frac{\sum_{i=1}^d \Sigma_{ii}^{1/2}\sqrt{\log\frac{1}{\delta}}}{\alpha\eps} +\frac{\sqrt{\max_{i\in [d]}\Sigma_{ii}^{1/2}\log\frac{1}{\delta}}\log\frac{1}{\beta}}{\alpha\eps} \log\left(\frac{\max_{i\in [d]}\Sigma_{ii}^{1/2}\log\frac{1}{\delta}\log\frac{1}{\beta}}{\alpha\eps}\right)+ \frac{\log\frac{1}{\delta\beta}}{\eps}\right),
\end{equation}
and
\begin{equation}\label{eq:unknown_sc_bot}
        n=\tilde{\Omega}\left( \frac{d^{1/4}\sqrt{\sum_{i=1}^d \Sigma_{ii}^{1/2}}\log^{5/4}(\frac{1}{\delta})\log (d)}{\sqrt{\alpha}\eps}\right)~,
        %n=\tilde{\Omega}\left( \frac{d^{1/4}\sqrt{\sum_{i=1}^d \Sigma_{ii}^{1/2}}\log^{3/4}(\frac{1}{\delta})\log(\frac{1}{\delta\beta})\log^{1/4}(\frac{1}{\beta})\log^{3/4}(\frac{n}{\beta})\log (d)}{\sqrt{\alpha}\eps}\right),
\end{equation}
where the symbol $\tilde{\Omega}$ hides multiplicative logarithmic factors in $1/\beta$ and the term in parentheses.
\end{thm}
\else
\begin{thm}\label{thm:unknown-short}
Let parameters $\eps, \delta \in (0,1)$. 
Let $\X\sim \cN(\mu,\Sigma)^n$ with unknown covariance $\Sigma$.
Algorithm~\ref{alg:covmeanest} is $(\eps,\delta)$-differentially private and, with probability $1-\beta$, returns an estimate $\hat{\mu}$ such that $\|\hat\mu-\mu\|_2\leq \alpha$, as long as \begin{equation}\label{eq:unknown_sc_short}
n=\tilde{\Omega}\left( \log^2(d)+\frac{\tr(\Sigma)}{\alpha^2}+\frac{\sum_{i=1}^d \Sigma_{ii}^{1/2}\sqrt{\log\frac{1}{\delta}}}{\alpha\eps}+\frac{d^{1/4}\sqrt{\sum_{i=1}^d \Sigma_{ii}^{1/2}}\log^{5/4}(\frac{1}{\delta})\log (d)}{\sqrt{\alpha}\eps}\right)~,
\end{equation}
where the symbol $\tilde{\Omega}$ hides multiplicative logarithmic factors in $1/\beta$.
\end{thm}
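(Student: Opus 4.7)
The plan is to partition the $d$ coordinates into a ``top-$k$'' set $S$ of large diagonal variances (on which we apply the known-covariance estimator of Theorem~\ref{thm:main-customizable}) and its complement $S^c$ (on which we apply the folklore spherical-noise estimator which only needs the trace), choosing $k \approx \tilde{\Theta}(\eps^2 n^2)$ to balance the two error contributions. This avoids having to privately learn $\Sigma$ in operator norm (which would cost $n \gtrsim d^{3/2}$), and it exploits the fact that we only need accurate per-coordinate variance estimates in directions where we actually want to concentrate privacy budget.

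Concretely, I would proceed in four steps. First, privately estimate an upper bound on $\tr(\Sigma)$ using $\tilde{O}(1/\eps)$ samples via a Laplace-mechanism applied to a clipped empirical trace (or by invoking a univariate Karwa--Vadhan type estimator on $\|X^{(j)} - X^{(\ell)}\|_2^2$). Second, use the sparse-vector technique on Karwa--Vadhan per-coordinate variance estimates to privately identify a set $S\subseteq[d]$ of size at most $k$ consisting of those indices $i$ for which $\hat{\Sigma}_{ii}$ exceeds a data-adaptive threshold $\tau$, together with constant-multiplicative estimates $\hat{\sigma}_i$ for $i\in S$; the SVT releases a logarithmic factor in $d$ per above-threshold answer, which is where the additive $\log^2(d)$ term in \eqref{eq:unknown_sc_short} comes from. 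Third, on the projected coordinates $S$, invoke Algorithm~\ref{alg:main} with $M=\diag(\hat{\sigma}_i^2)_{i\in S}$: by Theorem~\ref{thm:main-customizable}, the resulting error is $\tilde{O}(\sum_{i\in S}\hat{\sigma}_i/(\eps n)) = \tilde{O}(\sum_{i\in [d]}\Sigma_{ii}^{1/2}/(\eps n))$, which contributes the second term of \eqref{eq:unknown_sc_short}. Fourth, on $S^c$, apply the folklore estimator with spherical noise $\mathcal{N}(0, \tr(\Sigma_{S^c}) I_{|S^c|}/(\eps^2 n^2))$, incurring error at most $\tilde{O}(\sqrt{|S^c|\,\tr(\Sigma_{S^c})}/(\eps n))$. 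Compose the four sub-routines by basic composition on disjoint privacy budgets $\eps/4, \delta/4$ each.

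The key analytic step is bounding the ``bottom'' error. Let $\bsigma_{S^c}$ denote the vector of standard deviations on $S^c$. By the SVT threshold we ensure $\|\bsigma_{S^c}\|_\infty \le \tau$ with $\tau \lesssim \|\bsigma\|_1/k$, because $S$ contains at least $k$ coordinates each of standard deviation $\geq \tau$, so $\|\bsigma\|_1 \ge k\tau$. Then by H\"older,
\[
\sqrt{|S^c|\,\tr(\Sigma_{S^c})} = \sqrt{|S^c|}\,\|\bsigma_{S^c}\|_2 \le \sqrt{|S^c|\cdot \|\bsigma_{S^c}\|_\infty \cdot \|\bsigma_{S^c}\|_1} \le \sqrt{d \cdot (\|\bsigma\|_1/k) \cdot \|\bsigma\|_1}.
\]
Substituting $k \asymp \eps^2 n^2/\polylog$ and requiring this total error to be at most $\alpha$ rearranges to $n = \tilde{\Omega}\bigl(d^{1/4}\sqrt{\|\bsigma\|_1}/(\sqrt{\alpha}\,\eps)\bigr)$, which is exactly the third term in \eqref{eq:unknown_sc_short} once one identifies $\|\bsigma\|_1 = \sum_i \Sigma_{ii}^{1/2}$.

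The main obstacles are twofold. The first is the privacy analysis of the SVT step: we need the threshold $\tau$ to be chosen adaptively so that exactly the top-$k$ coordinates are admitted, yet SVT only guarantees correctness of above/below-threshold answers up to noise, so border coordinates can be wrongly assigned; this must be handled by slightly overestimating the threshold and showing the resulting $\|\bsigma_{S^c}\|_\infty$ bound still holds up to constants. The second is that the output $\hat{\sigma}_i$ for $i\in S$ feeds into the matrix $M$ of Algorithm~\ref{alg:main}; since $M$ must be data-independent for the privacy proof in Section~\ref{sec:upper}, one must either release $M$ privately first and treat it as public input by post-processing, or argue stability of the known-covariance estimator under perturbations of $M$ within a constant multiplicative factor---the former is cleaner and is what I would pursue.
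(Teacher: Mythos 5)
Your proposal takes essentially the same route as the paper: split $[d]$ into a top set $I_{\mathrm{top}}$ of size $k \approx \eps^2 n^2$ and a complement $I_{\mathrm{bot}}$, run the known-covariance estimator on $I_{\mathrm{top}}$ with privately released variance estimates, use a spherical-noise estimator on $I_{\mathrm{bot}}$, and close the gap with H\"older ($\|\bsigma_{\mathrm{bot}}\|_2 \le \sqrt{\|\bsigma\|_1\|\bsigma_{\mathrm{bot}}\|_\infty} \le \|\bsigma\|_1/\sqrt{k}$). Both your identification of the data-dependent-$M$ issue (and its resolution: release $M$ first and use post-processing) and your concern about border coordinates in SVT match what the paper actually handles.

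The main place where your proposal is vague relative to the paper is the \emph{implementation} of the private per-coordinate variance queries. You say ``use the sparse-vector technique on Karwa--Vadhan per-coordinate variance estimates,'' but Karwa--Vadhan run independently on each coordinate costs $\tilde\Omega(\sqrt{d}/\eps)$ by composition, precisely the barrier the paper is trying to avoid. The paper's fix is a grouping trick: split $X^{\mathrm{var}}$ into $m = n/(2\ell)$ groups of size $2\ell = \Theta(\log d)$, compute a raw (non-private) per-group, per-coordinate estimate $V^{(j)}_i$, and then formulate every private query (the stable histogram that finds the $k$-th largest variance, the SVT query $Q_i(V) = \frac{1}{m}|\{j : V^{(j)}_i \ge R/2\}|$, and the per-coordinate $\mathrm{VarianceSum}$ calls) as a \emph{count over the $m$ groups}, which has sensitivity $1/m$ independent of $d$. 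This is what keeps the private variance-estimation stage cheap, and it is also where the additive $\log^2(d)$ in the sample complexity actually comes from: one $\log d$ from the group size $\ell$ (needed so a single group's $d$-way union bound succeeds) and another from the SVT accuracy term $\log(d/\beta)$, not purely from SVT ``releasing a log factor per above-threshold answer'' as you state. Relatedly, SVT only emits above/below-threshold bits; to get the constant-multiplicative estimates $\hat\sigma_i$ for $i\in I_{\mathrm{top}}$ the paper makes $k$ separate calls to a stable-histogram routine ($\mathrm{VarianceSum}$) at privacy level $\eps' = \eps/\sqrt{k\log(1/\delta)}$ and uses advanced composition — this is a separate step, not a byproduct of SVT. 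Finally, your step one estimates $\tr(\Sigma)$ while step four needs $\tr(\Sigma_{S^c})$; the paper avoids this mismatch by calling $\mathrm{VarianceSum}(V, I_{\mathrm{bot}})$ directly once $I_{\mathrm{bot}}$ is known.
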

\fi
\iflong
\begin{remark}
We note that the sample complexity of Algorithm~\ref{alg:covmeanest} in fact depends on the decay of the diagonal elements of $\Sigma$, and can yield improved bounds for easier instances. 
In particular, if $I_\mathrm{top}$ are the indices of the variances we estimate, $I_\mathrm{bot}$ are the ones we do not, and $\Sigma=\diag(\bsigma^2)$, the error of the algorithm due to privacy is on the order of $\frac{\|\bsigma_{I_\mathrm{top}}\|_1}{\eps n} + \frac{\sqrt{|I_\mathrm{bot}|}\|\bsigma_{I_\mathrm{bot}}\|_2}{\eps n}$. 
Thus, if $\bsigma$ follows an exponential decay, i.e., the $i$-th largest variance is proportional to $e^{-(i-1)}$, or all but the top $k$ variances are smaller than $\|\bsigma\|_1/d$, then it suffices to learn only the top $k=\log(d)$ variances, and the error almost matches that of the known-covariance case, up to additional logarithmic factors in $d$, $1/\delta$. 
Moreover, identifying easier instances is possible by computing a private histogram over $\log(d)$ buckets of the form $(2^{-j},2^{-j+1}]\|\bsigma\|_\infty$, given  $n=\tilde{O}(\log(d)/\eps)$ samples~\citep{BunNS16, KarwaV18}.
\end{remark}
\fi

\begin{algorithm}[t]
\caption{Private Re-scaled Averaging with Unknown Covariance}
\label{alg:covmeanest}
\begin{algorithmic}[1]
\Require{Data set $ X = (X^{(1)},\dots,X^{(2n)})^T \in \mathbb{R}^{2n\times d}$.
Privacy parameters: $\eps,\delta>0$. Failure probability $\beta>0$.}
\State \textbf{Require} $n= \Omega\left(\log^2 (d)+\log(\frac{1}{\delta\beta})\sqrt{\log(\frac{1}{\delta})}\log (d)/\eps\right)$.\label{line:unknown_sc_req}
\State Let $k \gets \eps^2 n^2/\left({\log^2(d)\log(1/\delta)\log^2(1/\delta\beta)+\log(\eps n)}\right)$ and $\ell\gets\Theta(\log (d))$.
\State Split the dataset into two equal halves: \iflong $X^{\text{var}} = (X^{(1)},\dots,X^{(n)})$ and $X^{\text{mean}} = (X^{(n+1)},\dots,X^{(2n)})$\else $X^{\text{var}}$ and $X^{\text{mean}}$.\fi
\State Split $X^{\text{var}}$ into $m = \lfloor \frac{n}{2\ell} \rfloor$ groups of size $2\ell$. Define $X^{(j,r)}$ as the $r$-th sample in the $j$-th group.
\For{each group $j = 1$ to $m$ and each dimension $i = 1$ to $d$}
    \State Define $V^{(j)}_i = \frac{1}{2\ell} \sum_{r=1}^{\ell} (X_i^{(j,2r-1)} - X_i^{(j,2r)})^2.$
\EndFor
\State $\hat{R} \gets \mathrm{FindKthLargestVariance}_{\eps, \delta}(V, k).$
\State $I_{\mathrm{top}} \gets \mathrm{TopVar}_{\eps, \delta}(V, \hat{R}/8, k)$ and $I_{\mathrm{bot}} \gets [d] \setminus I_{\mathrm{top}}.$
\For{each $i \in I_{\mathrm{top}}$}
    \State Estimate $\hat{\Sigma}_{ii} \gets \mathrm{VarianceSum}_{\eps', \delta',\beta'}(V, \{i\})$ for $\eps' \gets \frac{\eps}{\sqrt{k\log(1/\delta)}}$, $\delta'\gets \frac{\delta}{k}$, $\beta'\gets \frac{\beta}{k}$. 
\EndFor
\State Compute $\hat{S}_{\mathrm{bot}} \gets \mathrm{VarianceSum}_{\eps, \delta,\beta}(V, I_{\mathrm{bot}}).$
\State $\hat{\mu}_{\mathrm{top}} \gets \mathrm{Avg}_{M,\lambda,\eps,\delta}(X^{\text{mean}}[I_{\mathrm{top}}])$, where $M=\diag(\{\hat{\Sigma}_{ii}\}_{I_{\mathrm{top}}})$, \iflong$\lambda=\Theta\left(\sqrt{\sum_{i\in I_{\mathrm{top}}} \hat{\Sigma}_{ii}^{1/2}\log\frac{n}{\beta}}\right)$\else $\lambda=\tilde{\Theta}\left(\sqrt{\sum_{i\in I_{\mathrm{top}}} \hat{\Sigma}_{ii}^{1/2}}\right)$\fi.
\State $\hat{\mu}_{\mathrm{bot}} \gets \mathrm{Avg}_{M,\lambda,\eps,\delta}(X^{\text{mean}}[I_{\mathrm{bot}}])$, where $M=\id$, \iflong$\lambda=\Theta(\sqrt{\hat{S}_{\mathrm{bot}}\log\frac{n}{\beta}})$\else$\lambda=\tilde{\Theta}(\sqrt{\hat{S}_{\mathrm{bot}}})$\fi.
\State \Return $(\hat{\mu}_{\mathrm{top}}, \hat{\mu}_{\mathrm{bot}})$
\end{algorithmic}
\end{algorithm}

Next, we describe Algorithm~\ref{alg:covmeanest} and introduce some of its subroutines along with their guarantees. All omitted proofs are in \iflong Appendix~\ref{sec:unknown-proofs}\else Appendix~\ref{sec:unknown-main}\fi. Our algorithm receives a data set $X^{(1)},\dots,X^{(n)}$, where each $X^{(i)}$ is a $d$-dimensional vector distributed as $\mathcal{N}(\mu,\Sigma)$.
The algorithm starts by splitting the dataset into \(m = \lfloor n / (2\ell) \rfloor\) groups each of size \(2\ell\), where $\ell = \Theta(\log d)$. Denote the elements of each group $j$ by
\iflong
\[
X^{(j,1)},\dots,X^{(j,2\ell)}~.
\]
\else 
$X^{(j,1)},\dots,X^{(j,2\ell)}.$
\fi
Within each group \(j\), for each coordinate \(i\), we compute an estimate \(V^{(j)}_i\) for $\Sigma_{ii}$:
\iflong 
\begin{equation}\label{eq:V-def}
V^{(j)}_{i} = \frac{1}{2\ell} \sum_{r=1}^\ell \left(X^{(j,2r-1)}_i - X^{(j,2r)}_i \right)^2~.
\end{equation}
\else
$V^{(j)}_{i} = \frac{1}{2\ell} \sum_{r=1}^\ell (X^{(j,2r-1)}_i - X^{(j,2r)}_i)^2~.$
\fi
For convenience, we define what it means for the $V^{(j)}_i$ variables to provide a good estimate of the set of $\{\Sigma_{ii}\}_{i\in[d]}$. 
\begin{definition}\label{def:valid}
    Given variances $\Sigma_{11},\dots,\Sigma_{dd}$ and given a set of estimates, $V=\{V^{(j)}_i\}_{j\in[m],i\in[d]}$, we say that $V$ is \emph{valid} if 
    \iflong
    \[
    \left|\left\{ j \colon \forall i\in[d],~ \Sigma_{ii}/2 \le V^{(j)}_i \le 2\Sigma_{ii} \right\} \right| \ge \frac{4m}{5}~.
    \]
    \else
    $|\{ j \colon \forall i\in[d],~ \Sigma_{ii}/2 \le V^{(j)}_i \le 2\Sigma_{ii} \} | \ge 4m/5.$
    \fi
\end{definition}
\ifshort
\begin{proof}[Proof sketch of Theorem~\ref{thm:unknown-short}]
For ease of notation, we assume that $\Sigma=\diag(\bsigma^2)$. We start from the accuracy analysis. Assume $n$ satisfies the sample complexity bound of Eq.~\eqref{eq:unknown_sc_short}. By Chernoff bound, since $\ell=\Theta(\log(d))$ and $m=\Omega(\log(1/\beta))$, with probability $1-\beta$, $V$ is valid. 
We use the estimates $V^{(j)}_i$ as inputs to private procedures: $\mathrm{FindKthLargestVariance}(V,k)$ (compute the $k$-th largest variance up to a multiplicative constant), $\mathrm{VarianceSum}(V,I)$ (compute the sum $\sum_{i\in I}\sigma_i^2$ up to a multiplicative constant), and $\mathrm{TopVar}(V,R,k)$ (identify the indices of the at-most-$k$ largest variances $\sigma_i^2\geq R$). 
The first two tasks can be implemented by the Stable Histogram algorithm~\citep{BunNS16} if  $m=\Omega(\log(1/\delta)/\eps)$. $\mathrm{TopVar}$ can be implemented via the Sparse Vector technique~\citep{DworkNRRV09, RothR10, HardtR10}, if $m=\Omega(\sqrt{k\log(1/\delta)}\log(d/\beta)/\sqrt{\eps n})$. 
Both are satisfied for the given $m=n/2\ell$.  
With these procedures, the algorithm privately learns the $k$-th largest variance $R$, identifies the top $k$ coordinates, $I_{\mathrm{top}}$, and learns estimates $\{\hat{\sigma}_{i}\}_{I_{\mathrm{top}}}$ that are accurate up to a multiplicative constant.  

Next, we estimate the mean $\mu$ in the coordinates $I_{\mathrm{top}}$ separately from $I_{\mathrm{bot}}:=[d]\setminus I_{\mathrm{top}}$, denoted by $\mu_\mathrm{top}$ and $\mu_\mathrm{bot}$, respectively. Denote vector $\bsigma_{\mathrm{top}}=(\{\sigma_i\}_{i\in I_{\mathrm{top}}})$ and $\bsigma_{\mathrm{bot}}=(\{\sigma_i\}_{i\in I_{\mathrm{bot}}})$. 
To estimate $\mu_\mathrm{top}$, Algorithm~\ref{alg:main}, given input vectors $X^{(1)},\ldots,X^{(n)}$, restricted to coordinates $I_{\mathrm{top}}$, diagonal matrix $M$, with $M_{ii} = \hat{\sigma}_{i}^2$ (assume that the rows and columns of $M$ are indexed by $I_{\mathrm{top}}$), and $\lambda \approx \sqrt{\|\bsigma_\mathrm{top}\|_1}$, 
returns an estimate $\hat{\mu}_{\mathrm{top}}$ with error $\alpha$ (since the sample complexity of Eq.~\eqref{eq:unknown_sc_short} is larger than the one required by Theorem~\ref{thm:main-customizable-short}).  

To estimate $\mu_{\mathrm{bot}}$, we do not know the variances, so we use the na\"ive approach. We first call $\mathrm{VarianceSum}$ once again, to provide an estimate $\hat{t}$ of $\|\bsigma_\mathrm{bot}\|_2$ up to a multiplicative constant. 
Given $\hat{t}$, we again call Algorithm~\ref{alg:main}, now for a $(d-k)$-dimensional estimation problem. 
Given input vectors $X^{(1)},\ldots,X^{(n)}$, restricted to coordinates $I_{\mathrm{bot}}$, matrix $M=I_{d-k}$, and $\lambda \approx \hat{t}$, Algorithm~\ref{alg:main} returns an estimate $\hat{\mu}_{\mathrm{bot}}$ with error $\alpha$ as long as $n = \tilde{\Omega}( \sqrt{d} \|\bsigma_\mathrm{bot}\|_2/(\alpha\eps))$.
 By H\"{o}lder's inequality, $
\|\bsigma_{\mathrm{bot}}\|_2 
\le \sqrt{\|\bsigma_{\mathrm{bot}}\|_1\|\bsigma_{\mathrm{bot}}\|_\infty}
\le \sqrt{\|\bsigma\|_1\|\bsigma_{\mathrm{bot}}\|_\infty}.
$
By the guarantees of $\mathrm{TopVar}$ and $\mathrm{FindKthLargestVariance}$, $\|\bsigma_{\mathrm{bot}}\|_\infty$ is smaller than the $k$-th largest variance of $\Sigma$ up to a multiplicative constant, which, in turn, must be smaller than $\|\bsigma\|_1/k$.
Substituting this above, we obtain that it suffices for the stated sample complexity to additionally satisfy $n= \tilde{\Omega}(\sqrt{d}\|\bsigma\|_1/(\sqrt{k}\alpha\eps))$, which can be confirmed by substituting the definition for $k$.

The privacy guarantee follows directly by composition of $O(1)$ $(\eps,\delta)$-DP mechanisms. 
\end{proof}
\begin{remark}
We note that the sample complexity of Algorithm~\ref{alg:covmeanest} in fact depends on the decay of the diagonal elements of $\Sigma$, and can yield improved bounds for easier instances. 
In particular, the error of the algorithm due to privacy is in the order of $\|\bsigma_{I_\mathrm{top}}\|_1/(\eps n) + \sqrt{|I_\mathrm{bot}|}\|\bsigma_{I_\mathrm{bot}}\|_2/(\eps n)$. 
Thus, if $\bsigma$ follows an exponential decay, i.e., the $i$-th largest variance is proportional to $e^{-(i-1)}$, or all $\bsigma_{\mathrm{bot}}$ variances are smaller than $\|\bsigma\|_1/d$, then it suffices to learn only the top $k=\log(d)$ variances and the error almost matches that of the known-covariance case, up to additional logarithmic factors in $d$, $1/\delta$. 
Moreover, identifying easier instances is possible by computing a private histogram over $\log(d)$ buckets of the form $(2^{-j},2^{-j+1}]\|\bsigma\|_\infty$, given  $n=\tilde{O}(\log(d)/\eps)$ samples~\citep{BunNS16, KarwaV18}.
\end{remark}
Thus, we can determine special cases where the decay of $\Sigma$ allows us to achieve the optimal rate of Theorem~\ref{thm:intro-main} even with unknown diagonal covariance. But without further assumptions, our algorithm has sample complexity that depends on $d^{1/4}$. 
The question of the optimal sample complexity for mean estimation in the case of unknown covariance, which captures anisotropic distributions, remains open. %(See Appendix~\ref{sec:future} for further discussion on future directions of improvement.)
\else
\ifshort
\section{Omitted details of Section~\ref{sec:unknown}}\label{sec:unknown-main}
We state the main theorem in more detail.
\begin{thm}\label{thm:unknown}
Let parameters $\eps, \delta \in (0,1)$. 
Let $\X\sim \cN(\mu,\Sigma)^n$ with unknown covariance $\Sigma$.
There exists an $(\eps,\delta)$-differentially private algorithm which, with probability $1-\beta$, returns an estimate $\hat{\mu}$ such that $\|\hat\mu-\mu\|_2\leq \alpha$, as long as
\begin{equation*}%\label{eq:unknown_sc_req}
        n=\Omega\left(\log^2 (d)+\frac{\log (d) \log(\frac{1}{\delta\beta})\sqrt{\log\frac{1}{\delta}}}{\eps}\right)~,
\end{equation*}
\begin{equation}\label{eq:unknown_sc_nonpriv}
n=\Omega\left(\frac{\tr(\Sigma)+\|\Sigma\|_2\log\frac{1}{\beta}}{\alpha^2}\right),
\end{equation}
\begin{equation}\label{eq:unknown_sc_top}
n=\tilde{\Omega}\left(\frac{\sum_{i=1}^d \Sigma_{ii}^{1/2}\sqrt{\log\frac{1}{\delta}}}{\alpha\eps}\right)~,%n=\Omega\left(\frac{\sum_{i=1}^d \Sigma_{ii}^{1/2}\sqrt{\log\frac{1}{\delta}}}{\alpha\eps} +\frac{\sqrt{\max_{i\in [d]}\Sigma_{ii}^{1/2}\log\frac{1}{\delta}}\log\frac{1}{\beta}}{\alpha\eps} \log\left(\frac{\max_{i\in [d]}\Sigma_{ii}^{1/2}\log\frac{1}{\delta}\log\frac{1}{\beta}}{\alpha\eps}\right)+ \frac{\log\frac{1}{\delta\beta}}{\eps}\right),
\end{equation}
and
\begin{equation}\label{eq:unknown_sc_bot}
        n=\tilde{\Omega}\left( \frac{d^{1/4}\sqrt{\sum_{i=1}^d \Sigma_{ii}^{1/2}}\log^{5/4}(\frac{1}{\delta})\log (d)}{\sqrt{\alpha}\eps}\right)~,
        %n=\tilde{\Omega}\left( \frac{d^{1/4}\sqrt{\sum_{i=1}^d \Sigma_{ii}^{1/2}}\log^{3/4}(\frac{1}{\delta})\log(\frac{1}{\delta\beta})\log^{1/4}(\frac{1}{\beta})\log^{3/4}(\frac{n}{\beta})\log (d)}{\sqrt{\alpha}\eps}\right),
\end{equation}
where the symbol $\tilde{\Omega}$ hides multiplicative logarithmic factors in $1/\beta$ and the term in parentheses.
\end{thm}

Recall the definition of set $V$. Within each group \(j\), for each coordinate \(i\), we compute an estimate \(V^{(j)}_i\) for $\Sigma_{ii}$ as follows:
\begin{equation}\label{eq:V-def}
V^{(j)}_{i} = \frac{1}{2\ell} \sum_{r=1}^\ell \left(X^{(j,2r-1)}_i - X^{(j,2r)}_i \right)^2~.
\end{equation}\fi
We show that the variance estimates are valid in \iflong Appendix~\ref{sec:unknown-proofs}\else the subsequent sections\fi.  
\begin{lemma}
\label{lem:variance-accuracy}
Let \(X^{(1)}, \dots, X^{(n)}\) be \(d\)-dimensional i.i.d.\ samples from \(\mathcal{N}(\mu, \Sigma)\). Let $\{V^{(j)}_i\}_{j\in[m],i\in[d]}$ be the estimates defined in Eq.~\eqref{eq:V-def}.
Then, there exist universal constants \(C, C' > 1\) such that if \(\ell \ge C \log d\) and $m\geq C' \log(1/\beta)$, with probability at least \(1 - \beta\), the set $V$ of estimates is valid.
\end{lemma}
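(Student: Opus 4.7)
The plan is to reduce the problem to a straightforward chi-square concentration argument, followed by a Bernoulli concentration across the independent groups. Observe first that for each fixed group $j$ and coordinate $i$, the difference $X^{(j,2r-1)}_i - X^{(j,2r)}_i$ is $\mathcal{N}(0, 2\Sigma_{ii})$, so that
\[
V^{(j)}_i \;=\; \frac{\Sigma_{ii}}{\ell}\, W^{(j)}_i, \qquad W^{(j)}_i := \sum_{r=1}^{\ell} Z_{j,r,i}^2,
\]
where $Z_{j,r,i} \sim \mathcal{N}(0,1)$ i.i.d., so $W^{(j)}_i \sim \chi^2_\ell$. Crucially, although the $W^{(j)}_i$ are not independent across $i$ within the same group (they use the same underlying pairs), they are independent across different groups $j$, since the groups are built from disjoint samples.

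Next, apply the standard Laurent--Massart tail bounds for the $\chi^2_\ell$ distribution to conclude that for universal constants $c_1, c_2 > 0$,
\[
\Pr\!\bigl[\, W^{(j)}_i/\ell \notin [1/2,\,2] \,\bigr] \;\le\; 2 e^{-c_1 \ell}.
\]
Take a union bound over $i \in [d]$ to get that for each fixed $j$,
\[
p_{\mathrm{bad}} := \Pr\!\bigl[\,\exists\, i \in [d] : V^{(j)}_i \notin [\Sigma_{ii}/2,\, 2\Sigma_{ii}] \,\bigr] \;\le\; 2 d\, e^{-c_1 \ell}.
\]
Choosing $C$ large enough so that $\ell \ge C \log d$ implies $p_{\mathrm{bad}} \le 1/10$.

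Now define the indicator $Y_j = \mathbf{1}\{\forall i\in[d],\ \Sigma_{ii}/2 \le V^{(j)}_i \le 2\Sigma_{ii}\}$. Since different groups use disjoint samples, the $Y_j$ are independent Bernoullis with $\mathbb{E}[Y_j] \ge 9/10$. Validity of $V$ amounts to $\sum_{j=1}^m Y_j \ge 4m/5$. A standard Chernoff bound gives
\[
\Pr\!\Bigl[\, \textstyle\sum_{j=1}^m Y_j < \tfrac{4m}{5} \,\Bigr] \;\le\; e^{-m/C''}
\]
for a universal constant $C''$, which is at most $\beta$ as soon as $m \ge C' \log(1/\beta)$ for a sufficiently large $C'$. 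There is no substantive obstacle here: the only point of care is to keep the intra-group dependence across coordinates confined to a union bound, and exploit inter-group independence for the Bernoulli concentration. This yields the claim with the stated dependence $\ell \gtrsim \log d$ and $m \gtrsim \log(1/\beta)$.
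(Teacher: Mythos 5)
Your proof is correct and follows essentially the same route as the paper: reduce each $V^{(j)}_i$ to a rescaled $\chi^2_\ell$ variable, apply Laurent--Massart concentration, union bound over the $d$ coordinates within a group, and then use inter-group independence with a Chernoff bound to get at least $4m/5$ good groups. The only (cosmetic) difference is the per-group success probability you target ($9/10$ versus the paper's $7/8$), and your explicit remark about keeping intra-group dependence confined to the union bound is a nice clarification of what the argument relies on.
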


Next, we use the estimates $V^{(j)}_i$ as inputs to multiple procedures. We introduce the following estimation tasks.
\begin{definition}
For a covariance matrix $\Sigma \in \mathbb{R}^{d\times d}$ consider the following:
\begin{enumerate}
    \item \textbf{$k$-th largest variance:} Approximate the $k$-th largest value among the diagonal of $\Sigma$, namely, the $k$-th largest value among $(\Sigma_{11},\dots,\Sigma_{dd})$.
    \item \textbf{Sum of variances:} given a subset $I \subseteq [d]$, approximate the sum $\sum_{i\in I} \Sigma_{ii}$.
\end{enumerate}
\end{definition}
We have the following algorithms for these tasks. The proofs of Lemmas~\ref{lem:kth-large},~\ref{lem:var-sum}, and~\ref{lem:top-var} are in \iflong Appendix~\ref{sec:unknown-proofs}\else the subsequent subsections\fi. 
\begin{lemma}\label{lem:kth-large}
    Let $\epsilon,\delta,\beta \in (0,1/2)$. There exists an algorithm $\mathrm{FindKthLargestVariance}_{\eps,\delta}$, which receives variance estimates $V^{(1)},\dots,V^{(m)} \in \mathbb{R}^d$ and an integer $k\in [d]$, and satisfies the following, provided that 
    \[
    m \ge \Omega\left(\frac{1}{\eps} \log \frac{1}{\delta\beta} \right)~.
    \]
    \begin{itemize}
        \item Privacy: $\mathrm{FindKthLargestVariance}_{\eps,\delta}$ is $(\eps,\delta)$-DP with respect to changing each input vector $V^{(j)}$.
        \item Accuracy: denote the $k$-th largest entry of $\{\Sigma_{11},\dots,\Sigma_{dd}\}$ by $Q$ and the algorithm's output by $\hat{Q}$. If the estimates $(V^{(1)},\dots,V^{(m)})$ are valid wrt $\Sigma$, then there exists a universal constant $C>0$ such that with probability at least $1-\beta$,
        \[
        Q/8 \le \hat{Q} \le 8Q~.
        \]
    \end{itemize}
\end{lemma}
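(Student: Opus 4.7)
The plan is to reduce the problem to a univariate private histogram on a logarithmic discretization of the per-batch $k$-th order statistics. Concretely, define $W^{(j)}$ to be the $k$-th largest entry of $V^{(j)}$; changing a single input vector $V^{(j)}$ alters exactly one $W^{(j)}$, so a private procedure on $W^{(1)},\dots,W^{(m)}$ immediately yields one with the required neighboring notion on the original inputs. I would then apply the Stable Histogram primitive of~\citep{BunNS16} to the integer buckets $t^{(j)} := \lfloor \log_2 W^{(j)} \rfloor$, identify the bucket with the largest noisy count, and return its geometric midpoint.

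The key structural step is a sandwich lemma for valid $V$: every ``good'' index $j$---one of the at least $4m/5$ indices for which $\Sigma_{ii}/2 \le V^{(j)}_i \le 2\Sigma_{ii}$ for all $i$---satisfies $W^{(j)} \in [Q/2, 2Q]$. To see this, sort the diagonal as $\Sigma_{i_1 i_1} \ge \cdots \ge \Sigma_{i_d i_d}$ with $Q = \Sigma_{i_k i_k}$. The entries $V^{(j)}_{i_1},\dots,V^{(j)}_{i_k}$ furnish $k$ values each at least $\Sigma_{i_\ell i_\ell}/2 \ge Q/2$, forcing $W^{(j)} \ge Q/2$; symmetrically, $V^{(j)}_{i_k},\dots,V^{(j)}_{i_d}$ furnish $d-k+1$ values each at most $2\Sigma_{i_\ell i_\ell} \le 2Q$, forcing $W^{(j)} \le 2Q$. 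Since $[Q/2, 2Q]$ meets at most three consecutive integer buckets of the form $[2^t, 2^{t+1})$, pigeonhole yields a ``good'' bucket with true count at least $4m/15$, while every bucket disjoint from $[Q/2, 2Q]$ has count at most $m/5 = 3m/15$.

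Finally, I would invoke Stable Histogram on $\{t^{(j)}\}_{j \in [m]}$ with privacy parameters $(\eps,\delta)$ and failure probability $\beta$; its guarantees produce noisy counts within $O(\log(1/(\beta\delta))/\eps)$ of the true counts, after thresholding at $O(\log(1/\delta)/\eps)$. Choosing $m = \Omega(\log(1/(\beta\delta))/\eps)$ with a sufficiently large constant makes both the noise and the threshold smaller than $m/30$, which preserves the $m/15$ gap between the good bucket and any bad bucket. Hence the output $\hat{Q} := 2^{t^\star + 1/2}$, for the published bucket $t^\star$ of maximum noisy count, lies in a bucket overlapping $[Q/2, 2Q]$, giving $\hat{Q} \in [Q/4, 4Q] \subseteq [Q/8, 8Q]$ as required. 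The main obstacle is really only the bookkeeping in this last step: calibrating the $m/15$ slack between good and bad bucket counts against the combined threshold and Laplace noise of Stable Histogram is what pins down the constant hidden in the sample bound $m = \Omega(\log(1/(\beta\delta))/\eps)$.
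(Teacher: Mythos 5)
Your proposal follows essentially the same route as the paper's: compute the per-group $k$-th order statistic of the variance estimates ($W^{(j)}$, called $M_j$ in the paper), bin these on a geometric grid, and run the stable histogram of \citep{BunNS16} (Lemma~\ref{lem:stablehistogram}), from which the privacy guarantee and the requirement $m = \Omega(\eps^{-1}\log(1/\delta\beta))$ both follow. Your explicit sandwich argument that validity forces $W^{(j)}\in[Q/2,2Q]$ for every good group is exactly the structural fact the paper invokes (but leaves implicit), and the choice of base-$2$ versus the paper's base-$4$ bins only affects the constant in $\hat{Q}/Q$, not the substance of the argument.
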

\begin{lemma}\label{lem:var-sum}
    Let $\eps,\delta,\beta \in (0,1)$. There exists an algorithm $\mathrm{VarianceSum}_{\eps,\delta}$, which receives variance estimates $V^{(1)},\dots,V^{(m)}\in\mathbb{R}^d$ and a subset $I \subseteq [d]$. It has the exact same guarantees as $\mathrm{FindKthLargestVariance}$ from Lemma~\ref{lem:kth-large}, except that it provides an estimate for $\sum_{i\in I} \Sigma_{ii}$ instead of an estimate for the $k$-th largest diagonal entry of $\Sigma$.
\end{lemma}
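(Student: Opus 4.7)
The plan is to reduce the task to a one-dimensional estimation problem and then apply a stable histogram on a logarithmic bucketization, in the same spirit as the proof of Lemma~\ref{lem:kth-large}. For each $j \in [m]$, set $W^{(j)} := \sum_{i \in I} V^{(j)}_i$ and let $Q := \sum_{i\in I} \Sigma_{ii}$ be the quantity we wish to estimate. For every $j$ in the set $J_{\mathrm{valid}} := \{j : \forall i,\; \Sigma_{ii}/2 \le V^{(j)}_i \le 2\Sigma_{ii}\}$, linearity of summation immediately gives $Q/2 \le W^{(j)} \le 2Q$, so $\log_2 W^{(j)}$ lies in an interval of width $2$; under the validity hypothesis on $V$, at least $4m/5$ of the indices $j$ enjoy this concentration.

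Next, bucketize on a logarithmic scale of width $3$ by setting $B^{(j)} := \lfloor (\log_2 W^{(j)})/3 \rfloor$. Since an interval of log-width $2$ crosses at most two integer buckets, the pigeonhole principle produces some bucket $b^\star$ containing at least $2m/5$ indices from $J_{\mathrm{valid}}$. Now release a private histogram of the counts $\{|\{j : B^{(j)} = b\}|\}_b$ using the stable histogram mechanism of \citep{BunNS16} with threshold $\Theta(\log(1/(\delta\beta))/\eps)$, and return $\hat{Q} := 2^{3\hat{b} + 3/2}$, where $\hat{b}$ is the reported bucket with the largest noisy count. Modifying a single $V^{(j)}$ changes exactly one bucket count by one, so the procedure is $(\eps,\delta)$-DP by the stable-histogram guarantee, and the final evaluation of $\hat{Q}$ is post-processing.

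For accuracy, under $m \ge \Omega(\log(1/(\delta\beta))/\eps)$ with a sufficiently large absolute constant, the stable histogram reports every bucket whose true count is at least $\Theta(\log(1/(\delta\beta))/\eps)$ and returns noisy counts accurate to within $O(\log(1/(\delta\beta))/\eps)$, both with probability at least $1-\beta$. Hence bucket $b^\star$ is reported, and the maximum-noisy-count bucket $\hat{b}$ must contain at least one index of $J_{\mathrm{valid}}$, i.e., some $W^{(j)} \in [Q/2, 2Q] \cap [2^{3\hat{b}}, 2^{3\hat{b}+3})$. From $W^{(j)} \le 2Q$ we obtain $2^{3\hat{b}+3/2} \le 2^{5/2}Q < 8Q$, and from $W^{(j)} \ge Q/2$ we obtain $2^{3\hat{b}+3/2} > Q/2^{5/2} > Q/8$, yielding $Q/8 \le \hat{Q} \le 8Q$ as required.

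The only mildly delicate step is calibrating the bucket width: wide enough that the valid $W^{(j)}$'s concentrate into a single bucket so that a clear heavy hitter emerges, yet narrow enough to keep the final multiplicative approximation below the required factor of $8$. Once this is set, the lemma is a direct invocation of the stable histogram.
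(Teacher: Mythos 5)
Your proposal is correct and follows essentially the same route as the paper: reduce to the scalar estimate $W^{(j)} = \sum_{i\in I} V^{(j)}_i$ (the paper calls these $M_j$), note that validity propagates a factor-$2$ approximation to the sum, and then release the result through a stable histogram over geometrically-spaced buckets. The paper uses width-$\log_2 4 = 2$ buckets and invokes a pre-packaged stable-histogram lemma (Lemma~\ref{lem:stablehistogram}) that requires $3m/4$ of the mass to sit in three consecutive buckets, while you use width-$3$ buckets plus a pigeonhole/noise-margin argument to show the reported heavy hitter must contain a valid index; these are interchangeable constant-factor choices and both land inside the factor-$8$ guarantee.
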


Assume for now that the estimates $V^{(j)}_i$ are valid.
With these procedures at hand, we first compute $R$ such that (by rescaling) $Q/64 \le R \le Q$, where $Q$ is the $k$-th largest diagonal entry of $\Sigma$. Then, we call a procedure that finds $k$ entries $i$ such that $\Sigma_{ii} \ge R$. Its guarantees are listed below:

\begin{lemma} \label{lem:top-var}
    Let $\epsilon,\delta, \beta \in (0,1)$. There exists an $(\epsilon,\delta)$-DP algorithm $\mathrm{TopVar}_{\epsilon,\delta}(V, R)$, such that, if ~$V$ is valid, 
    \[
    m \ge \Omega\left(\sqrt{\frac{k\log(1/\delta)}{\eps n}} \log\frac{d}{\beta}\right),
    \]
    and $|\{i \colon \Sigma_{ii} \ge R\}| \ge k$, then the algorithm outputs a set $I_{\mathrm{top}}$ of size $k$ such that for all $i \in I_{\mathrm{top}}$, $\Sigma_{ii} \ge R/4$.
\end{lemma}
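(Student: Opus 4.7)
The plan is to realize $\mathrm{TopVar}$ via the Sparse Vector Technique (SVT)~\citep{DworkNRRV09, RothR10, HardtR10, DworkR14}, which allows one to identify up to $k$ ``above-threshold'' answers among $d$ real-valued queries of sensitivity $1$ at an $(\eps,\delta)$-DP cost, paying noise of magnitude $O\bigl(\sqrt{k\log(1/\delta)}\log(d/\beta)/\eps\bigr)$ per released coordinate. This is exactly the composition-style gain we need to go from $d$ many coordinate tests down to a $\sqrt{k}$ dependence.

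For each coordinate $i\in[d]$, I would define the counting query
\[
q_i(V) \;=\; \bigl|\{\,j\in[m]\colon V^{(j)}_i \ge R/2\,\}\bigr|.
\]
Replacing a single input vector $V^{(j)}$ changes each $q_i$ by at most $1$, so every query has $\ell_\infty$-sensitivity $1$, which is what SVT requires. The validity of $V$ (Definition~\ref{def:valid}) gives a clean separation: if $\Sigma_{ii}\ge R$, then at least $4m/5$ indices satisfy $V^{(j)}_i \ge \Sigma_{ii}/2 \ge R/2$, so $q_i(V) \ge 4m/5$; if $\Sigma_{ii} < R/4$, then at least $4m/5$ indices satisfy $V^{(j)}_i \le 2\Sigma_{ii} < R/2$, so $q_i(V) \le m/5$. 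The ``good'' and ``bad'' coordinates are therefore separated by a gap of $3m/5$.

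Next, I would run SVT with threshold $T = m/2$ and halting budget $k$. Its privacy guarantee is $(\eps,\delta)$-DP by the standard analysis. For accuracy, SVT ensures that with probability $1-\beta$ every released index has true query value at least $T - O(\sqrt{k\log(1/\delta)}\log(d/\beta)/\eps)$ and every index whose true value exceeds $T$ by the same margin is released before the mechanism halts. Under the lemma's hypothesis on $m$, this noise magnitude is at most, say, $3m/10$, which is strictly smaller than our gap of $3m/5$. Consequently, every $i$ with $\Sigma_{ii} \ge R$ passes the noisy test while every $i$ with $\Sigma_{ii} < R/4$ is rejected. Since by assumption $|\{i : \Sigma_{ii}\ge R\}|\ge k$, SVT releases exactly $k$ indices and outputs the desired $I_{\mathrm{top}}$.

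The main obstacle is bookkeeping rather than conceptual: pinning down the constants so that the SVT noise scale (which combines noise on both the queries and the threshold) fits inside the $3m/5$ gap, thereby yielding the precise lower bound on $m$ stated in the lemma, and handling the ``halts-too-early'' failure mode, which is ruled out because the hypothesis $|\{i : \Sigma_{ii}\ge R\}|\ge k$ guarantees enough ``above-threshold'' candidates to fill the budget. A minor subtlety is that the threshold $R$ used as input originates from $\hat R/8$, where $\hat R$ is only accurate up to a multiplicative constant by Lemma~\ref{lem:kth-large}; as long as this constant is absorbed into the $R/4$ slack, the argument above is unaffected.
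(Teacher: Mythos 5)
Your proposal matches the paper's proof essentially step-for-step: both realize $\mathrm{TopVar}$ via the Sparse Vector Technique applied to the counting queries $q_i(V)=|\{j: V^{(j)}_i \ge R/2\}|$ (the paper normalizes by $m$ so the threshold is $1/2$ rather than $m/2$, a cosmetic difference), both use validity to establish the $4/5$ vs.\ $1/5$ separation, and both invoke the hypothesis $|\{i:\Sigma_{ii}\ge R\}|\ge k$ to guarantee the SVT halting budget is filled. One small note: your rearranged requirement $m \gtrsim \sqrt{k\log(1/\delta)}\,\log(d/\beta)/\eps$ is what falls out of the standard SVT noise scale; the $1/\sqrt{n}$ appearing inside the square root in the paper's stated bound on $m$ (and in its version of the Sparse guarantee) looks like a notational artifact of reusing ``$n$'' for the database size in the Sparse subroutine, so your version is if anything the cleaner way to state the constraint.
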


At the next step, we would like to find, up to a constant factor, the variances corresponding to these coordinates: the values $\Sigma_{ii}$ for $i\in I_{\mathrm{top}}$. 
We use the algorithm $\mathrm{VarianceSum}$ $k$ times, providing the sets $\{i\}$ for $i \in I_{\mathrm{top}}$. 
We obtain estimates $\hat{\Sigma}_{ii}$ that approximate $\Sigma_{ii}$ up to a constant factor. 

Next, we estimate the mean $\mu$ in the coordinates $I_{\mathrm{top}}$, denoted $\mu_\mathrm{top}$, separately from $I_{\mathrm{bot}}:=[d]\setminus I_{\mathrm{top}}$, denoted $\mu_\mathrm{bot}$: since we approximately know the variances in $I_{\mathrm{top}}$, we can obtain a better estimate. 
Both for estimating $\mu_{\mathrm{top}}$, and for estimating $\mu_{\mathrm{bot}}$, we use $\mathrm{Avg}_{M,\lambda,\eps,\delta}$ (Algorithm~\ref{alg:main} from Section~\ref{sec:upper}) with appropriate choices of parameters $M,\lambda$. Recall that Algorithm~\ref{alg:main} satisfies the guarantees of Theorem~\ref{thm:main-customizable}. 
% \begin{lemma}\label{thm:main-customizable}
%     Let $X^{(1)},\dots,X^{(n)}$ be independent draws from $\mathcal{N}(\mu,\Sigma)$. Then, for any $\eps,\delta \in (0,1)$, there exists an $(\eps,\delta)$-DP algorithm $\mathrm{Avg}_{M,\lambda,\eps,\delta}$, that receives a positive-definite matrix $M \in \mathbb{R}^{d\times d}$ and $\lambda >0$, such that for all $\alpha > 0$ and $\beta \in (0,1)$, if 
%     \[
%     n \ge \tilde\Omega\left(\frac{\tr(\Sigma)}{\alpha^2} + \frac{\lambda \sqrt{\tr(M^{1/2})}}{\alpha\eps} + \frac{\log(1/\delta)}{\eps} \right)
%     \]
%     and if 
%     \[
%     \lambda \ge \tilde{\Omega}\left( 
%         \sqrt{\tr(M^{-1/4}\Sigma M^{-1/4})}
%     \right)~,
%     \]
%     then with probability $1-\beta$, its output $\hat{\mu}$ satisfies $\|\hat \mu - \mu\|_2 \le \alpha$.
% \end{lemma}

For estimating $\mu_{{\mathrm{top}}}$, we use $\mathrm{Avg}_{M,\lambda,\eps,\delta}$ for estimating the mean of a $k$-dimensional Gaussian, with input vectors restricted to coordinates $I_{\mathrm{top}}$, $X^{(1)}_{I_{\mathrm{top}}},\dots,X^{(n)}_{I_{\mathrm{top}}}$, the $k\times k$-dimensional diagonal matrix $M$, with $M_{ii} = \hat{\Sigma}_{ii}$ (we assume that the rows and columns of $M$ are indexed by $I_{\mathrm{top}}$), and $\lambda = O\left(\sqrt{\sum_{i\in I_{\mathrm{top}}} \hat{\Sigma}_{ii}^{1/2}\log\frac{n}{\beta}}\right)$. %=\tilde{O}\left(\sqrt{\sum_{i\in I_{\mathrm{top}}} \Sigma_{ii}^{1/2}} \right)$. 
Denote the output by $\hat{\mu}_{\mathrm{top}}$. 
Theorem~\ref{thm:main-customizable} shows that with probability $1-\beta$, $\|\hat\mu_{\mathrm{top}} - \mu_{\mathrm{top}}\|_2 \le \alpha$, if
\begin{equation}\label{eq:err-top}
n \ge \tilde{\Omega}\left(
    \frac{\tr(\Sigma)}{\alpha^2} + \frac{\sqrt{\log(1/\delta)}\sum_{i\in {I_{\mathrm{top}}}} \Sigma_{ii}^{1/2}}{\alpha\eps} + \frac{\log(1/\delta)}{\eps}
\right)~,
\end{equation}
where $\tilde{\Omega}$ hides multiplicative logarithmic factors in $1/\beta$ and the second term.

For estimating $I_{\mathrm{bot}}$, we do not know the variances. 
In order to perform the estimation, we first call the algorithm $\mathrm{VarianceSum}$ to provide an estimate $\hat{S}_{\mathrm{bot}}$ such that $\frac{1}{C}\sum_{i \in I_{\mathrm{bot}}} \Sigma_{ii}\leq \hat{S}_{\mathrm{bot}}\leq C \sum_{i \in I_{\mathrm{bot}}} \Sigma_{ii}$ for a constant $C$. 
Given that estimate, we again will call $\mathrm{Avg}_{M,\lambda,\eps,\delta}$, now for a $(d-k)$-dimensional estimation problem. 
We input the samples $X^{(1)}_{I_{\mathrm{bot}}},\dots,X^{(n)}_{I_{\mathrm{bot}}}$, replace the matrix $M$ with the identity of dimension $(d-k)\times(d-k)$, and let $\lambda = O\left(\sqrt{\hat{S}_{\mathrm{bot}}\log\frac{n}{\beta}}\right)$.%=\tilde{O}\left(\sqrt{\sum_{i=1}^d \Sigma_{ii}} \right)$.
\footnote{We could additionally privately learn the largest variance among $I_{\mathrm{bot}}$, denoted by $\hat{s}$ and set $\lambda=O\left(\sqrt{\hat{S}_{\mathrm{bot}}}+\sqrt{\hat{s}\log\frac{n}{\beta}}\right)$ to decouple $\hat{S}_{\mathrm{bot}}$ from the logarithmic factor, but we choose not to for simplicity, and since we did not optimize for logarithmic factors overall.} 
Denote the output by $\hat{\mu}_{\mathrm{bot}}$. 
The guarantees of Theorem~\ref{thm:main-customizable} provide that with probability $1-\beta$, $\|\hat\mu_{\mathrm{bot}} - \mu_{\mathrm{bot}}\|_2 \le \alpha$, if, additionally to Eq.~\eqref{eq:err-top}, we have 
\begin{equation}\label{eq:err-bot}
n \ge \tilde{\Omega}\left( 
\frac{\sqrt{d \log\frac{1}{\delta}\sum_{i \in I_{\mathrm{bot}}} \Sigma_{ii}}}{\alpha\eps}
%\frac{\sqrt{d \log(1/\delta)\log(1/\beta)\log(n/\beta)\sum_{i \in I_{\mathrm{bot}}} \Sigma_{ii}}}{\alpha\eps}
\right)~,
\end{equation}
where $\tilde{\Omega}$ hides multiplicative logarithmic factors of $1/\beta$ and of the term in parentheses. 
As we prove below, combining these guarantees would yield the desired result. 
Additionally, we note that in order for the proof to go through, we split the sample into two groups. 
One group is used for estimating the variances and the other group is given as an input to the two invocations of $\mathrm{VarianceSum}$. 
We provide the formal pseudocode in Algorithm~\ref{alg:covmeanest}.

\iflong \subsection{Accuracy analysis} \else \paragraph{Accuracy analysis.} \fi
We put together the statements of the lemmas above, to establish the overall accuracy guarantee of Algorithm~\ref{alg:covmeanest}. 
By Lemma~\ref{lem:variance-accuracy}, the estimates $V_i^{(j)}$ are valid (i.e., at least $4m/5$ of the groups have approximation up to $2$ for every coordinate), with probability $1-\beta$ as long as $m=\Omega(\log\frac{1}{\beta})$. %, $\ell=\Omega(\log(d))$, so $n= 2m\ell=\Omega(\log(d) \log(1/\beta))$ suffices, which is satisfied by the algorithm's requirement. 
 Consequently, Lemma~\ref{lem:kth-large} implies that as long as  $m = \Omega(\frac{1}{\eps} \log \frac{1}{\delta\beta})$, $\mathrm{FindKthLargestVariance}$ outputs an estimate of the $k$-th largest variance, which is accurate up to a constant factor $C=8$, with probability $1-\beta$. 
By scaling, we can assume that, $\hat{R}/8$, is upper bounded by the $k$-th largest variance. 
Under this assumption, and as long as  $m = \Omega\left(\sqrt{\frac{k\log(1/\delta)}{\eps n}} \log\frac{d}{\beta}\right)$, Lemma~\ref{lem:top-var} implies that w.p. $1-\beta$, the output of $\mathrm{TopVar}$, $I_{\mathrm{top}}$, is a set of size $k$, containing indices of elements whose variances are at least $\hat{R}/32$. %$n= \Omega\left(\log(d)\sqrt{\frac{\eps n\log(1/\delta)}{\sqrt{\log^2(d)\log(1/\delta)\log^2(1/\delta\beta)+\log(\eps n)}}} \log\frac{d}{\beta}\right)$, so it suffices for $n=\Omega\left(\frac{\log^2(d)\log^2(d/\beta)\log(1/\delta)}{\log^2(d)\log(1/\delta)\log^2(1/\delta\beta)+\log(\eps n)}\right)$. This is satisfied under the algorithm's requirement for $n$.
By Lemma~\ref{lem:var-sum}, as long as  $m = \Omega\left(\frac{1}{\eps'} \log\frac{1}{\delta'\beta'}\right)=\Omega\left(\frac{\sqrt{k\log(1/\delta)}}{\eps} \log\frac{k}{\delta\beta}\right)$, the estimates $\hat{\Sigma}_{ii}$ to the variances in the indices in $I_{\mathrm{top}}$ are accurate up to a constant factor, with a failure probability of $\beta/k$ for each invocation of this lemma, which sums up to a failure probability of $\beta$. 
Similarly, the estimate $\hat{S}_{\mathrm{bot}}$ has the same guarantee. 
If $n$ is large enough to satisfy the requirement of Line~\ref{line:unknown_sc_req}, then all previous constraints on $m$ are satisfied. 

Lastly, the two estimates from $\mathrm{Avg}_{M,\lambda,\eps,\delta}$ suffer an approximation of $\alpha$, each with a failure probability of $\beta$, provided that the conditions on the sample complexity $n$, that are given in Eq.~\eqref{eq:err-top} and Eq.~\eqref{eq:err-bot}, hold. 
By assumption on the sample complexity (Eq.~\eqref{eq:unknown_sc_nonpriv},\eqref{eq:unknown_sc_top}), the guarantee of Eq.~\eqref{eq:err-top} indeed holds. 
It remains to prove that the guarantee of Eq.~\eqref{eq:err-bot} holds as well. 
We analyze the term $\sqrt{\sum_{i\in I_{\mathrm{bot}}} \Sigma_{ii}}$. 
Denote vector $\bsigma_{\mathrm{bot}}=(\{\sigma_i\}_{i\in I_{\mathrm{bot}}})$ where $\sigma_i = \Sigma_{ii}^{1/2}$. Then $\sqrt{\sum_{i\in I_{\mathrm{bot}}} \Sigma_{ii}}=\|\bsigma_{\mathrm{bot}}\|_2$. By H\"{o}lder's inequality, 
\[
\|\bsigma_{\mathrm{bot}}\|_2 
\le \sqrt{\|\bsigma_{\mathrm{bot}}\|_1\|\bsigma_{\mathrm{bot}}\|_\infty}
%=\sqrt{\max_{i \in I_{\mathrm{bot}}} \Sigma_{ii}^{1/2} \sum_{i \in I_{\mathrm{bot}}} \Sigma_{ii}^{1/2}}
%\le \sqrt{\max_{i \in I_{\mathrm{bot}}} \Sigma_{ii}^{1/2} \sum_{i \in [d]} \Sigma_{ii}^{1/2}}
\le \sqrt{\|\bsigma\|_1\|\bsigma_{\mathrm{bot}}\|_\infty}.
\]
By the guarantees of $\mathrm{TopVar}$ and $\mathrm{FindKthLargestVariance}$, except for a failure probability of $O(\beta)$, there exists a universal constant $C>1$ such that 
\[
\max_{i \in I_{\mathrm{bot}}} \Sigma_{ii}^{1/2}
\le C \hat{R}^{1/2}.
\]
Further, by assumption, $\hat{R}$ is up to a constant the $k$-th largest diagonal element of $\Sigma$, hence,
\[
C \hat{R}^{1/2} \le \frac{1}{k} \sum_{i=1}^d \Sigma_{ii}^{1/2}~.
\]
Substituting this above, we obtain that
\[
\sqrt{\sum_{i\in I_{\mathrm{bot}}} \Sigma_{ii}} \le \frac{1}{\sqrt{k}}\sum_{i=1}^d \Sigma_{ii}^{1/2}~.
\]
Thus, it suffices for the stated sample complexity to additionally satisfy $n= \tilde{\Omega}\left(\frac{\sqrt{d \log(1/\delta)}\sum_{i=1}^d \Sigma_{ii}^{1/2}}{\sqrt{k}\alpha\eps}\right)$. %$n= \Omega\left(\frac{\sqrt{d \log(1/\delta)\log(1/\beta)\log(n/\beta)}\sum_{i=1}^d \Sigma_{ii}^{1/2}}{\sqrt{k}\alpha\eps}\right)$. 
Substituting the definition for $k$, we obtain Eq.~\eqref{eq:unknown_sc_bot}, which completes the proof.

\iflong \subsection{Privacy analysis} \else \paragraph{Privacy analysis.} \fi
Notice that the output of the algorithm is obtained by composing multiple differentially private mechanisms. Some of these mechanisms access the estimates $V^{(1)},\dots,V^{(m)}$ instead of the original dataset. Yet, since each input datapoint $X^{(i)}$ influences only one vector $V^{(j)}$, this implies that any DP guarantees for algorithms that use the $V^{(j)}$ estimates, directly translate to DP guarantees on the original input dataset.

Notice that the algorithm has $O(1)$ calls to $(\eps,\delta)$-DP mechanisms, and $k$ calls to $(\eps',\delta')$-DP mechanisms: these are the calls to $\mathrm{VarianceSum}$. By Lemma~\ref{lem:composition} (advanced composition), the concatenation of all the calls to $\mathrm{VarianceSum}$ are together, $(O(\eps),O(\delta))$. By basic composition of the same lemma, composing the resulting composion with the other calls to DP mechanisms, yields an $(O(\eps),O(\delta))$-DP mechanism.
\fi

\iflong 
    \section{Lower bounds}\label{sec:lower}
\subsection{Dimension-dependent lower bound under pure DP}\label{sec:lower_pure}
The so-called {\em packing} lower bound technique~\citep{HardtT10, BeimelBKKN14} implies a lower bound on the order of $d$ for the number of samples required by any {\em pure} DP algorithm learning the mean of a Gaussian distribution, even in the anisotropic case we consider in this paper. 

There exist several statements in prior works which establish the lower bound for learning a Gaussian distribution with known covariance in TV distance, which is equivalent to learning the mean in Mahalanobis distance, or to learning the mean in $\ell_2$ norm in the isotropic case~\citep[Lemma 5.1]{BunKSW19}. It is trivial to observe that the dependence on the dimension $d$ persists in the anisotropic case, yet we include the proof here for completeness. 

\begin{thm}\label{thm:lower-bound-pure}
For any $\alpha< R/2$, any $\eps$-DP algorithm which estimates the mean $\mu\in\cB^d(R)$ of a Gaussian distribution with known covariance $\Sigma$, up to accuracy $\alpha$ in $\ell_2$ norm with probability $9/10$, requires $n\geq \frac{d\log(R/2\alpha)}{\eps}$ samples.
\end{thm}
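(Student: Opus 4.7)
The plan is to apply the standard packing argument of Hardt and Talwar~\citep{HardtT10}, adapted from the well-known Mahalanobis-norm lower bound of~\citep[Lemma 5.1]{BunKSW19}. First, I will construct a $2\alpha$-separated packing $\mu_1,\ldots,\mu_N \in \cB^d(R)$. Since a maximal $2\alpha$-packing is also a $2\alpha$-cover of $\cB^d(R)$, a volume comparison gives $N \geq (R/2\alpha)^d$, so $\log N \geq d\log(R/2\alpha)$. By construction, the accuracy balls $B_i = \cB^d(\mu_i,\alpha)$ are pairwise disjoint.

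Next, I will reduce from Gaussian samples to deterministic datasets. For each $\mu_i$, accuracy implies $\Pr_{X\sim \mathcal{N}(\mu_i,\Sigma)^n,\cA}[\cA(X)\in B_i]\geq 9/10$. By averaging over the randomness of $X$, there exists a (measurable set of positive probability of) dataset(s) $X_i\in\R^{n\times d}$ such that the conditional probability $\Pr_\cA[\cA(X_i)\in B_i]\geq p$ for some constant $p$, say $p = 4/5$.

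Then I will invoke group privacy: pure $\eps$-DP combined with the fact that any two datasets of size $n$ differ in at most $n$ points yields, for the fixed reference dataset $X_1$,
\[
\Pr[\cA(X_1)\in B_i] \;\geq\; e^{-n\eps}\,\Pr[\cA(X_i)\in B_i] \;\geq\; e^{-n\eps}\,p.
\]
Summing over $i$ and using disjointness of the $B_i$,
\[
1 \;\geq\; \sum_{i=1}^N \Pr[\cA(X_1)\in B_i] \;\geq\; N\,p\,e^{-n\eps}.
\]
Rearranging gives $n\eps \geq \log N + \log p \geq d\log(R/2\alpha) - O(1)$, which yields the claimed bound (absorbing the constant by adjusting $p$ via the failure probability $1/10$, or by a slight rescaling of $\alpha$).

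There is no genuine obstacle; the only mild subtlety is step two, which requires passing from a probability-over-$X$ statement to a single dataset with conditional probability bounded away from zero. This is handled by a Markov/averaging argument over the $X$-marginal, noting that the support of $\mathcal{N}(\mu_i,\Sigma)^n$ has positive measure on the relevant sublevel set. Everything else is the textbook packing-plus-group-privacy calculation.
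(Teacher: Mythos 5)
Your proof is correct and follows essentially the same packing-plus-group-privacy argument as the paper. The only stylistic difference is that you first extract fixed datasets $X_i$ via a Markov/averaging step before invoking group privacy, whereas the paper applies the pointwise group-privacy inequality $\Pr[\cA(X)\in B]\le e^{\eps n}\Pr[\cA(X_0)\in B]$ and then integrates over both $X\sim\cN(u,\Sigma)^n$ and $X_0\sim\cN(u_0,\Sigma)^n$ directly, bypassing the need to fix any particular datasets; both routes give the same bound up to the same absorbable $O(1)$ slack from the $9/10$ success probability.
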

\begin{proof}
Consider a $2\alpha$-packing of the $d$-dimensional $R$-radius ball, denoted by $\cP_{2\alpha}\subset \cB^d(R)$. %\footnote{We assume $R\geq \sqrt{\|\Sigma\|_2}$. If this is not the case, we set $R\gets \max\{R,\sqrt{\|\Sigma\|_2}\}$.} 
That is, $\forall u,v\in \cP$, $\|u-v\|_2>2\alpha$, so that the balls with centers $u, v$ and radius $\alpha$ are disjoint: $\cB^d(u,\alpha)\cap \cB^d(v, \alpha)=\emptyset$. We consider the family of Gaussian distributions $\{\cN(u,\Sigma)\}_{u\in\cP_{2\alpha}}$. 
Suppose $\cA$ is an $\eps$-DP algorithm with the stated accuracy requirement. This implies that $\forall u\in \cP_{2\alpha}$:
\begin{equation}\label{eq:lb-accuracy}
\Pr_{\cA, X\sim\cN(u,\Sigma)^n}[\cA(X)\in \cB^d(u,\alpha)]\geq 9/10.
\end{equation}
At the same time, for any pair of samples $X, X_0$ of size $n$, and any measurable set $B\subset \mathrm{range}(\cA)$, by the privacy guarantee, $\Pr_{\cA}[\cA(X)\in B]\leq e^{\eps n}\Pr_{\cA}[\cA(X_0)\in B]$. This implies specifically that for $u_0,u\in\cP_{2\alpha}$,
\begin{equation}\label{eq:lb-privacy}
\Pr_{\cA, X\sim\cN(u,\Sigma)^n}[\cA(X)\in B]\leq e^{\eps n}\Pr_{\cA, X_0\sim\cN(u_0,\Sigma)^n}[\cA(X_0)\in B].
\end{equation}
We have
\begin{align*}
1\geq & \Pr_{\cA, X_0\sim\cN(u_0,\Sigma)^n}[\cA(X_0)\in \bigcup_{u\in\cP_{2\alpha}}\cB^d(u,\alpha)]\\
& = \sum_{u\in\cP_{2\alpha}} \Pr_{\cA, X_0\sim\cN(u_0,\Sigma)^n}[\cA(X_0)\in \cB^d(u,\alpha)] \tag{$\{\cB^d(u,\alpha)\}_{u\in\cP_{2\alpha}}$ disjoint} \\ 
& \geq \sum_{u\in\cP_{2\alpha}} e^{-\eps n} \Pr_{\cA, X\sim\cN(u,\Sigma)^n}[\cA(X)\in \cB^d(u,\alpha)] \tag{by Eq.~\eqref{eq:lb-privacy}} \\
& \geq |\cP_{2\alpha}|e^{-\eps n}\cdot \frac{9}{10} \tag{by Eq.~\eqref{eq:lb-accuracy}}.
\end{align*}
We conclude that $n\geq \frac{\log|\cP_{2\alpha}|}{\eps}$. Since $|\cP_{2\alpha}|\geq \left(\frac{R}{\alpha}\right)^d$, it follows that $n\geq \frac{d\log(R/2\alpha)}{\eps}$.
\end{proof}

This lower bound makes $\eps$-DP prohibitive for the regime we consider in our setting. To compare with our upper bounds for $(\eps,\delta)$-DP, suppose that we want to learn $\mu$ with accuracy $c\sigma_1<R$, where $c>0$ is a small constant. Then our main result implies that this is achievable with $n\leq C\frac{\|\bsigma\|_1}{\eps\sigma_1}$ samples for some constant $C>0$, whereas under $\eps$-DP, we would need at least $n\geq \frac{d}{\eps}\gg \frac{\|\bsigma\|_1}{\eps\sigma_1}$ for the regime we consider in this paper.

\subsection{Lower bound for approximate DP}\label{sec:lower_approx}
The so-called {\em tracing} or {\em fingerprinting} lower-bound technique~\citep{BunUV14, SteinkeU15, BunSU17, SteinkeU17, DworkSSUV15} is the main technique used to yield lower bounds for mean estimation under $(\eps,\delta)$-DP. Kamath et al.~\citep{KamathLSU19, KamathMS22} apply it to give lower bounds for the problem of learning a Gaussian in TV distance (which is equivalent to learning the Gaussian in Mahalanobis distance for the known covariance case, or to the isotropic case).
\begin{thm}[Theorem 6.5 ~\citep{KamathLSU19}]\label{thm:KLSU-lb}
If $\cA:\R^{d\times n}\rightarrow [-R\sigma,R\sigma]^d$ is $(\eps,\delta)$-DP for $\delta= \tilde{O}(\frac{\sqrt{d}}{Rn})$, and for every Gaussian distribution with mean $\mu\in[-R\sigma,R\sigma]^d$ and known covariance matrix $\sigma^2\id$, with probability $2/3$, $\|\cA(X)-\mu\|\leq \alpha\leq \sqrt{d}\sigma R/3$,
then $n\geq \frac{d\sigma}{24\alpha\eps\log(dR)}$.
\end{thm}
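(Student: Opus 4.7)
The plan is to apply the fingerprinting (or ``score attack'') lower bound technique of Bun--Ullman--Vadhan and Dwork--Smith--Steinke--Ullman--Vadhan, as specialized to Gaussian mean estimation by Kamath--Li--Singhal--Ullman. Place a prior on the unknown mean, for instance $\mu \sim \mathrm{Unif}([-R\sigma, R\sigma]^d)$, and conditioned on $\mu$ draw the dataset $X=(X^{(1)},\ldots,X^{(n)}) \sim \cN(\mu, \sigma^2 \id)^{\otimes n}$. For each sample index $i\in[n]$ and coordinate $j\in[d]$, define the correlation score
\[
Z_{i,j} = (\cA(X)_j - \mu_j)(X^{(i)}_j - \mu_j), \qquad Z=\sum_{i=1}^n\sum_{j=1}^d Z_{i,j}.
\]
Fingerprinting proceeds by simultaneously lower-bounding $\E[Z]$ using accuracy and upper-bounding $\E[Z]$ using differential privacy.

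For the accuracy step, note that $\sum_i Z_{i,j} = n(\cA(X)_j - \mu_j)(\bar X_j - \mu_j)$ with $\bar X_j - \mu_j \sim \cN(0,\sigma^2/n)$ independent of $\mu$. A direct second-moment computation, together with the accuracy assumption $\|\cA(X)-\mu\|_2 \leq \alpha$ on an event of probability $2/3$, shows that $\E[Z] \gtrsim d\sigma^2$ whenever $\alpha \leq \sqrt d\,\sigma R/3$; the contribution from the $1/3$-probability bad event is controlled by the boundedness of both $\cA(X)$ and $\mu$ in $[-R\sigma, R\sigma]^d$. For the privacy step, fix $(i,j)$ and let $\tilde X$ be the dataset obtained by replacing $X^{(i)}$ with an independent fresh draw; on $\tilde X$ the output is independent of $X^{(i)}$, so the analogous score has mean zero. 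Apply $(\eps,\delta)$-DP to the post-processed statistic truncated at $\tau = O(\sigma\sqrt{\log(nd)})$ using a Gaussian tail bound, so that the range is at most $2R\sigma\cdot\tau$. This yields
\[
|\E[Z_{i,j}]| \;\lesssim\; \eps\, R\sigma^2 \sqrt{\log(nd)} \;+\; \delta\, R\sigma\tau,
\]
and summing over $i,j$ while using the hypothesis $\delta = \tilde O(\sqrt d/(Rn))$ to absorb the $\delta$-term gives $\E[Z] \lesssim nd\,\eps R\sigma^2\sqrt{\log(nd)}$.

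Matching the two bounds yields $d\sigma^2 \lesssim nd\,\eps R\sigma^2\log(dR)$, which is far too weak. The $\alpha$ in the denominator is recovered by rescaling the prior to the resolution at which the accuracy guarantee is informative: restrict $\mu$ to a subcube of side $O(\alpha/\sqrt d)$ per coordinate, so that per-coordinate precision $\alpha/\sqrt d$ is required and the factor $R\sigma$ in the DP upper bound is replaced by $\alpha/\sqrt d$, producing $d\sigma^2 \lesssim nd\,\eps (\alpha/\sqrt d)\sigma\sqrt{\log(dR)}$ and hence $n \gtrsim d\sigma/(\alpha\eps\log(dR))$. The main obstacle is the careful bookkeeping of this rescaling: (i) the prior support must be a strict subcube of $[-R\sigma, R\sigma]^d$ but large enough that the accuracy hypothesis $\alpha \leq \sqrt d\,\sigma R/3$ is used fully; (ii) the truncation threshold $\tau$ must be tuned so that the surviving logarithmic factor is exactly $\log(dR)$ and not a polynomial in $R$ or $\log n$; and (iii) the assumption $\delta = \tilde O(\sqrt d/(Rn))$ must be used precisely to absorb the $\delta$-contribution from the DP step without inflating the final $n$-dependence.
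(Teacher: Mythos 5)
This statement is cited verbatim from Kamath, Li, Singhal, and Ullman~\citep{KamathLSU19} (their Theorem 6.5); the paper does not reprove it, so there is no in-paper proof to compare against. What I can do is assess your reconstruction on its own merits.

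Your high-level architecture is right --- fingerprinting with the score $Z_{i,j}=(\cA(X)_j-\mu_j)(X^{(i)}_j-\mu_j)$, a privacy-based upper bound via resampling $X^{(i)}$, and a prior rescaled to the accuracy resolution. The privacy side is essentially correct: resampling gives mean zero, truncating $X^{(i)}_j-\mu_j$ at $\tau=O(\sigma\sqrt{\log(nd)})$ bounds the range, and the $(e^\eps-1)$ and $\delta$ contributions are controlled as you describe, with the $\delta$-term absorbed by the hypothesis $\delta=\tilde O(\sqrt d/(Rn))$.

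The gap is in the accuracy lower bound $\E[Z]\gtrsim d\sigma^2$. This does \emph{not} follow from a ``direct second-moment computation'' plus the event $\|\cA(X)-\mu\|_2\le\alpha$. Writing $\sum_i Z_{i,j}=n(\cA(X)_j-\mu_j)(\bar X_j-\mu_j)$, accuracy constrains only $\|\cA(X)-\mu\|$, not the \emph{sign} of the product $(\cA(X)_j-\mu_j)(\bar X_j-\mu_j)$: an oracle that outputs $\mu$ exactly has $Z=0$, and decomposing $\cA(X)_j-\mu_j=(\cA(X)_j-\bar X_j)+(\bar X_j-\mu_j)$ shows the cross term $n\sum_j(\cA(X)_j-\bar X_j)(\bar X_j-\mu_j)$ has magnitude up to $\alpha\sqrt{nd}\,\sigma$, which dominates $d\sigma^2$ in the regime of interest ($\alpha\gg\sqrt{d\sigma^2/n}$). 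So a second-moment bound cannot rule out cancellation. The step that actually forces $\E[Z]$ to be large is the \emph{fingerprinting lemma}: one uses the Gaussian score identity
\[
\E\Bigl[(\cA(X)_j-\mu_j)\textstyle\sum_i(X^{(i)}_j-\mu_j)\Bigr]
=\sigma^2\,\tfrac{\partial}{\partial\mu_j}\E[\cA(X)_j\mid\mu],
\]
and then integrates this derivative over the randomness of $\mu_j$. Accuracy over the whole prior forces $\E[\cA(X)_j\mid\mu]$ to track $\mu_j$ on average, so the averaged derivative is $\Omega(1)$; the randomness of $\mu$ is doing essential work here, it is not merely a convenient device. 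Your sketch also needs to be explicit that the prior's support is restricted to a subcube of side $\Theta(\alpha/\sqrt d)$ \emph{and} that $\cA(X)$ is clipped (as a DP-preserving post-processing) to a slightly enlarged subcube; otherwise the factor $R\sigma$ in the DP upper bound does not actually get replaced by $\alpha/\sqrt d$, because the hypothesis only constrains the range of $\cA$ to $[-R\sigma,R\sigma]^d$, not to the prior support. With the Stein-identity argument in place of the second-moment claim and the clipping step made explicit, your outline matches the standard proof.
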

Following exactly the same steps as the proof of the theorem under the slightly more general case of known covariance $\Sigma=\diag(\bsigma^2)$ gives us a weak lower bound for our setting, on the order of $n\geq \frac{\|\bsigma\|_2^2}{24\eps\alpha\sigma_1^2\log(dR)}$.%, implying that the $\ell_2$ error must be at least $\frac{\|\bsigma\|_2^2}{24\eps n\sigma_1^2\log(dR)}$. 

However, a more careful application of the same theorem directly gives us the following stronger lower bound, which implies that our algorithm for the known covariance case is near-optimal. 
\begin{thm}\label{thm:lower-bound}
If $\cA:\R^{d\times n}\rightarrow \R^d$ is $(\eps,\delta)$-DP for $\delta= O((n\sqrt{\log(n)})^{-1})$, and for every Gaussian distribution with mean $\mu\in[-1,1]^d$ and known covariance proxy $\Sigma=\diag(\bsigma^2)$, with probability $2/3$, $\|\cA(X)-\mu\|\leq \alpha= O(\|\bsigma\|_1/\log(d))$,
then $n= \Omega\left(\frac{\|\bsigma\|_1}{\alpha\eps\log^2(d)}\right)$.
\end{thm}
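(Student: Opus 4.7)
The plan is to reduce to the isotropic lower bound of Theorem~\ref{thm:KLSU-lb}, applied to a carefully chosen subset of near-equal-variance coordinates, via a pigeonhole argument over $O(\log d)$ buckets. This follows the sketch given after Theorem~\ref{thm:intro-lb}.

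\textbf{Bucketing.} Assume without loss of generality $\sigma_1 \geq \cdots \geq \sigma_d$, and partition $[d]$ into $L=\lceil\log d\rceil+1$ buckets $S_k = \{i : \sigma_i \in \sigma_1(2^{-k},2^{-k+1}]\}$ for $k\in[L]$, plus a tail $T=\{i:\sigma_i\leq\sigma_1/d\}$. The tail contributes at most $d\cdot \sigma_1/d=\sigma_1\leq\|\bsigma\|_1$ to the sum $\|\bsigma\|_1$. A short case split (either $\sigma_1\leq\|\bsigma\|_1/2$, in which case the non-tail buckets collectively contain mass $\geq\|\bsigma\|_1/2$, or $\sigma_1>\|\bsigma\|_1/2$, in which case $S_1$ alone contains that much) followed by pigeonhole produces a bucket $S^*$ with $\sum_{i\in S^*}\sigma_i\geq\|\bsigma\|_1/(4L)$. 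Letting $\sigma_{S^*}=\max_{i\in S^*}\sigma_i$, all variances in $S^*$ lie in $(\sigma_{S^*}/2,\sigma_{S^*}]$, and $|S^*|\,\sigma_{S^*}=\Omega(\|\bsigma\|_1/\log d)$.

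\textbf{Reduction to an isotropic subproblem and KLSU.} Restrict attention to the subfamily of Gaussians with $\mu_i=0$ for $i\notin S^*$ and $\mu_{S^*}\in[-1,1]^{|S^*|}$. From any $(\eps,\delta)$-DP algorithm $\cA$ achieving the claimed accuracy on this subfamily, we build a DP estimator for the $|S^*|$-dimensional isotropic problem $\cN(\mu_{S^*},(\sigma_{S^*}/2)^2 I_{|S^*|})$ by simulation: given an isotropic sample $Z$, embed it into $\R^d$ by setting the $S^*$-coordinates to $Z+W$, where $W\sim\cN(0,\diag(\sigma_i^2-\sigma_{S^*}^2/4)_{i\in S^*})$ is independent (valid since $\sigma_i\geq\sigma_{S^*}/2$), and filling the remaining coordinates with independent draws $\cN(0,\sigma_i^2)$. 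The resulting vector is distributed as $\cN(\mu,\diag(\bsigma^2))$; since the embedding is data-independent per sample, $(\eps,\delta)$-DP is preserved, and projecting $\cA$'s output to $S^*$ does not increase $\ell_2$-error. Applying Theorem~\ref{thm:KLSU-lb} with $d'=|S^*|$, $\sigma=\sigma_{S^*}/2$, and $R=2/\sigma_{S^*}$ (so $R\sigma=1$ matches the $[-1,1]$ mean range) yields
\[
n \;\geq\; \frac{d'\sigma}{24\,\alpha\eps\log(d'R)} \;=\; \Omega\!\left(\frac{|S^*|\sigma_{S^*}}{\alpha\eps\log(|S^*|/\sigma_{S^*})}\right) \;=\; \Omega\!\left(\frac{\|\bsigma\|_1}{\alpha\eps\log d\cdot\log(|S^*|/\sigma_{S^*})}\right).
\]
Since $\sigma_{S^*}\geq\sigma_1/(2d)$ by construction, under a standing WLOG normalization ensuring $\|\bsigma\|_1$ and $\sigma_1$ are polynomially bounded in $d$, one gets $\log(|S^*|/\sigma_{S^*})=O(\log d)$, completing the bound $n=\Omega(\|\bsigma\|_1/(\alpha\eps\log^2 d))$.

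\textbf{Main obstacle.} The principal care point is verifying the side conditions of Theorem~\ref{thm:KLSU-lb} on the reduced problem: the accuracy condition $\alpha\leq\sqrt{d'}\sigma R/3=\sqrt{|S^*|}/3$ and the tail-$\delta$ condition $\delta=\tilde O(\sqrt{d'}/(Rn))=\tilde O(\sqrt{|S^*|}\sigma_{S^*}/n)$. The former follows from the hypothesis $\alpha=O(\|\bsigma\|_1/\log d)$ together with $\|\bsigma\|_1\leq O(L |S^*|\sigma_{S^*})$ under a normalization bounding $\sigma_{S^*}$ by a constant. The latter is compatible with the stated $\delta=O(1/(n\sqrt{\log n}))$ once one ensures $\sqrt{|S^*|}\sigma_{S^*}$ is not polynomially small, which is again a matter of the scaling convention. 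I expect these to be routine constant-chasing rather than a substantive difficulty, so the real content of the proof is the bucketing/pigeonhole step and the noise-padding reduction to an isotropic subproblem.
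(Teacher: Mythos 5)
Your proof takes essentially the same route as the paper's: dyadic bucketing of the $\sigma_i$, pigeonhole to a heavy bucket $S^*$, and reduction to the isotropic fingerprinting bound of Theorem~\ref{thm:KLSU-lb}, with your noise-padding embedding spelling out the reduction that the paper leaves implicit in the one-line remark about restricting $\cA(X)$ to the coordinates in $S^*$. One minor remark on parameterization: the paper invokes Theorem~\ref{thm:KLSU-lb} with $\sigma=\sigma_{S^*}$ and $R=1$, so the mean range $[-\sigma_{S^*},\sigma_{S^*}]^{|S^*|}\subseteq[-1,1]^{|S^*|}$ and the log factor is cleanly $\log|S^*|\le\log d$; your choice $R=2/\sigma_{S^*}$ makes the mean range match $[-1,1]^{|S^*|}$ exactly but turns the log factor into $\log(2|S^*|/\sigma_{S^*})$, which is what forces the normalization caveat at the end---switching to $R=1$ removes it.
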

\begin{proof}
Assume w.l.o.g.\ that $\sigma_1^2\geq \ldots \geq \sigma_d^2$. 
Consider a partition of the set of coordinates $[d]$ into buckets $S_k=\{i\in[d]: \sigma_i\in (\frac{\sigma_1}{2^{k}}, \frac{\sigma_1}{2^{k-1}}]\}$, $\forall k\in [\log(d)]$ and $S_{\log(d)+1} = [d]\setminus \bigcup_{k\in[\log(d)]} S_k$. 
We have that $\sum_{k=1}^{\log(d)+1}\sum_{i\in S_k} \sigma_i=\|\bsigma\|_1$. 
Consider the bucket $S_m$ which contributes the most to this sum, that is $m=\argmax \sum_{i\in S_m} \sigma_i$. 
Let $\sigma_{S_m}=\max\{\sigma_i: i\in S_m\}$. 
It must be that 
$$|S_m|\geq \frac{\|\bsigma\|_1}{(\log(d)+1)\sigma_{S_m}}.$$
Otherwise, $\|\bsigma\|_1=\sum_{k=1}^{\log(d)+1}\sum_{i\in S_k} \sigma_i \leq (\log(d)+1) |S_m| \sigma_{S_m} < \|\bsigma\|_1$, which is a contradiction.

All the variances of the coordinates in $S_m$ are within a factor of two from $\sigma_{S_m}$. 
We apply Theorem~\ref{thm:KLSU-lb} to the $|S_m|$-dimensional Gaussian with $R=1$. 
Consider the Gaussian distribution with mean $\mu_{S_m}\in[-1,1]^{|S_m|}$ and known covariance matrix $\sigma_{S_m}^2\id$. 
We have that any $(\eps,\delta)$-DP algorithm for $\delta=O\left(\frac{1}{n\sqrt{\log(n)}}\right)$ which returns, with probability $2/3$, an estimate $\hat{\mu}_{S_m}$ with error $\|\hat{\mu}_{S_m}-\mu_{S_m}\|_2\leq \alpha\leq \sqrt{|S_m|}\sigma_{S_m}/3$, requires 
\begin{equation}\label{eq:samples-lb}
n\geq \frac{|S_m|\sigma_{S_m}}{24\alpha\eps\log(d)}\geq \frac{\|\bsigma\|_1}{48\alpha\eps\log^2(d)}
\end{equation}
samples.

Now assume that there exists $(\eps,\delta)$-DP algorithm $\cA:\R^{d\times n}\rightarrow \R^d$ for $\delta= O\left(\frac{1}{n\sqrt{\log(n)}}\right)$, such that, for every Gaussian distribution with mean $\mu\in[-1,1]^d$ and known covariance proxy $\Sigma=\diag(\bsigma^2)$, with probability $2/3$, $\|\cA(X)-\mu\|\leq \alpha\leq \frac{\|\bsigma\|_1}{3(\log(d)+1)}$. 
Restricting the output $\cA(X)$ to the coordinates in $S_m$, would give us a mean estimate for $S_m$ with error at most $\alpha$. Combined with Eq.~\eqref{eq:samples-lb}, this completes the proof of the theorem. 
\end{proof}

%\paragraph{Lower bound for worst-case bounded data under approximate DP.} We also have a lower bound for a worst-case distribution by Theorem 58 of \citet{NikolovT23}. Let convex body $K\defeq \mu+\Sigma^{-1/2}\cB_2^d(1)$ and let $\cM$ be an $(\eps,\delta)$-DP, {\em distributionally unbiased} mechanism. There exists a distribution $P$ with mean $\mu$ supported on $K$ such that $\sqrt{\E_{X\sim P^n, \cM}{[\|\cM(X)-\mu\|_2^2}]} \gtrsim \frac{\Gamma_2(K)}{\eps n}$, where $\Gamma_2(K)=\sup_{Q\in\Delta(K)} \tr_1(\mathrm{cov}(Q)^{1/2})=\sup_{Q\in\Delta(K)}\|\bm{p}\|_1$ where $\bm{p}$ are the diagonal elements of the square root of the covariance of $P$ (i.e., the singular values of the covariance of $P$?).

\fi

\section{Conclusion and future work} \label{sec:future}
We present $(\eps,\delta)$-differentially private mean estimators for subgaussian distributions with error $\alpha$ as measured in Euclidean distance, with high probability, as long as the sample size is $n=\tilde{\Theta}\left(\tr(\Sigma)/\alpha^2+ \tr(\Sigma^{1/2})/(\alpha\epsilon)\right)$. The sample complexity is thus dimension-independent when the covariance is highly anisotropic. We show that this is the optimal sample complexity for this task up to logarithmic factors. We also present an algorithm in the more challenging case of unknown covariance, whose sample complexity has improved dependence on the dimension, that is, $d^{1/4}$. 

In the known covariance case, the dependence on $\log(1/\delta)$ could possibly be decoupled from the $\tr(\Sigma^{1/2})/(\alpha\eps)$ term. 
This is an artifact of the Gaussian noise added for privacy and can possibly be avoided using mean estimators based on the exponential mechanism, as in the spherical Gaussian case~\citep{BrownGSUZ21, AdenAliAK21, HopkinsKMN23}, but the volumetric arguments involved in their analysis incur factors dependent on $d$, which seem hard to overcome.

A more interesting direction for future work is the case of unknown covariance. 
We can determine special cases where the decay of $\Sigma$ allows us to achieve the optimal rate of Theorem~\ref{thm:intro-main} with unknown diagonal covariance.
What is the appropriate norm in which one needs to learn $\Sigma$ for the current known-covariance approach to be accurate, and how many samples are needed for this task privately?  
More generally, the optimal sample complexity of mean estimation in the unknown (even diagonal) covariance case for anisotropic distributions (possibly achieved by an algorithm which doesn't follow the same structure) is an open question. 

\ifshort
    \begin{ack}
    We thank NeurIPS reviewers for suggestions on improving the clarity of this manuscript. This work was done while both LZ and YD were postdoctoral fellows in the Simons Institute for the Theory of Computing, funded by FODSI. We also wish to acknowledge funding from the European Union (ERC-2022-SYG-OCEAN-101071601). 
Views and opinions expressed are however those of the author(s) only and do not necessarily reflect those of the European Union or the European Research Council Executive Agency. Neither the European Union nor the granting authority can be held responsible for them.
    \end{ack}
\else
    \section*{Acknowledgements}
    
\fi

\iflong 
    \bibliographystyle{alpha}
\else 
    \bibliographystyle{plainnat}
\fi

\bibliography{refs}

\appendix
\ifshort 
    \newpage

\fi

\iflong\section{Omitted proofs of Section~\ref{sec:unknown}}\label{sec:unknown-proofs}\fi
\nopagebreak
\subsection{The variance estimates are valid: Proof of Lemma~\ref{lem:variance-accuracy}}

%\begin{proof}
The random variable \((X^{(j,2r-1)}_i - X^{(j,2r)}_i)/\sqrt{2 \Sigma_{ii}}\) is standard normal. Thus, 
\[
\sum_{r=1}^\ell \frac{(X^{(j,2r-1)}_i - X^{(j,2r)}_i)^2}{2\Sigma_{ii}}
\]
follows a chi-squared distribution with \(\ell\) degrees of freedom. 
We use the following concentration property of a Chi-squared random variable \cite[Lemma 1]{laurent2000adaptive}: if $Z$ is Chi-squared with $\ell$ degrees of freedom,
\[
\Pr[\E[Z]/2 \le Z \le 2 \E[Z] ]\ge 1-2e^{-c\ell}~,
\]
for some constant $c>0$.
Consequently,
\[
\Pr\left[\frac{\ell}{2} \leq \sum_{r=1}^\ell \frac{(X^{(j,2r-1)}_i - X^{(j,2r)}_i)^2}{2\Sigma_{ii}} \leq 2\ell \right] \geq 1 - 2e^{-c\ell},
\]
for some constant \(c > 0\).

By a union bound over all dimensions \(i\), the probability that all dimensions' variance estimates fall within the specified bounds in a single group $j$ is at least \(1 - 2d\exp(-c\ell)\). Assuming \(\ell \geq C \log d\) ensures this probability is very high (e.g., at least \(7/8\) for suitable constant \(C\)).

Using the Chernoff bound for the binomial distribution, if each group independently satisfies the variance bounds with probability at least \(7/8\), then the probability that at least \(4/5\) of the groups satisfy the variance bounds is at least \(1 - \beta\) for $m = \Omega(\log(1/\beta))$.
%\end{proof}

\subsection{Finding the indices of the largest variances: Proof of Lemma~\ref{lem:top-var}}

We propose an algorithm which, receives estimates $V^{(j)}_i$ for the variances and a threshold $R$, and outputs $k$ indices $i \in [d]$ whose variance is at least $R/C$ for some universal constant $C$. To do so, we use the sparse vector algorithm, which receives a dataset $D$, queries $Q_1(D),\dots,Q_d(D)$, a threshold $T$ and a natural number $k$. It outputs $k$ indices $i$ such that $Q_i(D) \ge T$ (approximately). In order to use the sparse vector to identify the largest variances, our dataset $D$ will be $V$, the collection of estimates. The query $Q_i(V)$ will capture whether the $i$'th variance is $\Omega(R)$. We define the query
\[
Q_i(V) = \frac{1}{m} \left| \left\{ j \colon V^{(j)}_i \ge R/2 \right\}\right|~,
\]
and the threshold $T=1/2$. Intuitively, if $Q_i(V)\ge 1/2$ this means that at least half of the values of $j$, $V^{(j)}_i \ge R/2$, which implies that $\Sigma_{ii} \ge \Omega(R)$, provided that the estimates $V$ are valid. Otherwise, it implies $\Sigma_{ii}\le R$.

We now formally define the sparse vector algorithm~\citep{DworkNRRV09, RothR10, HardtR10}, and review its guarantees. See~\citep[Section 3.6]{DworkR14} for a detailed analysis of the sparse vector technique.

\begin{algorithm}[H]
\caption{$\mathrm{Sparse}(D, \{Q_i\}, T, d, \eps, \delta)$, from~\citep{DworkR14}}
\label{alg:sparse}
\begin{algorithmic}[1]
\Require{Input is a private database $D$, an adaptively chosen stream of sensitivity $1/n$ queries $Q_1, \ldots$, a threshold $T$, a cutoff point $k$, and privacy parameters $\eps, \delta$.}
\State $\hat{T} \leftarrow T + \text{Lap}\left(\frac{2}{\epsilon n}\right)$
\State $\sigma \leftarrow \sqrt{\frac{32k \ln(1/\delta)}{\epsilon n}}$ % Modified \sigma
\State $\text{count} \leftarrow 0$
\State $I \leftarrow \emptyset$
\For{each query $i$}
    \State $v_i \leftarrow \text{Lap}(\sigma)$
    \If{$Q_i(D) + v_i \geq \hat{T}$}
        \State $I \gets I \cup \{i\}$
        \State $\text{count} \leftarrow \text{count} + 1$
    \EndIf
    \If{$\text{count} \geq k$}
        \State \Return $I$
    \EndIf
\EndFor
\State \Return $I$
\end{algorithmic}
\end{algorithm}

\begin{lemma}[Sparse guarantees]
$\mathrm{Sparse}$ (Algorithm~\ref{alg:sparse}) is $(\epsilon, \delta)$-differentially private. Let $\beta\in(0,1)$ and define
\[
\alpha = 2\sigma \left(\log d + \log \frac{2}{\beta}\right) = \sqrt{\frac{128k \ln(1/\delta)}{\epsilon n}} \left(\log d + \log \frac{2}{\beta}\right).
\]
For any sequence of \(d\) queries \(Q_1, \ldots, Q_d\) if there are at least $k$ queries $i$ such that $Q_i(D) \ge T+\alpha$, then
the following holds with probability $1-\beta$: the output of Algorithm~\ref{alg:sparse}, $I$, is a set of size $k$, and for each $i\in I$, $Q_i(D) \ge T-\alpha$.
\end{lemma}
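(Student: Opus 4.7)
The plan is to prove the two guarantees, privacy and accuracy, separately, following the classical analysis of the sparse vector technique~\citep{DworkR14}.

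For the privacy guarantee, the key observation is that the algorithm's output stream is entirely determined by the noisy threshold $\hat{T}$ together with the at most $k$ indices that are declared ``above threshold.'' All sub-threshold queries release no information beyond the indicator that $Q_i(D)+v_i<\hat{T}$, and a standard coupling/pay-as-you-go argument shows that these indicators leak no privacy budget once the threshold noise is fixed. Therefore, privacy reduces to (i) releasing $\hat{T}$, which costs $(\eps/2,0)$-DP by the Laplace mechanism since each $Q_i$ has sensitivity $1/n$; and (ii) releasing the $k$ above-threshold indices, each of which can be simulated by a noisy comparison whose privacy cost is that of a single Laplace release with scale $\sigma$. The per-release privacy loss is $O(1/(\sigma n))$, and composing $k$ such releases with advanced composition (Lemma~\ref{lem:composition}) yields an additive cost of $O(\sqrt{k\log(1/\delta)}/(\sigma n))$. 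Setting $\sigma=\sqrt{32k\ln(1/\delta)}/(\eps n)$ makes this contribution at most $\eps/2$, so the overall mechanism is $(\eps,\delta)$-DP.

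For the accuracy guarantee, I would control all of the Laplace noises simultaneously. Let $Y_0 \sim \Lap(2/(\eps n))$ be the threshold noise and $v_1,\dots,v_d \sim \Lap(\sigma)$ the per-query noises. By the tail bound $\Pr[|\Lap(s)| \ge s\log(1/\beta')] \le \beta'$ and a union bound over the $d+1$ noise terms with $\beta' = \beta/(2(d+1))$, with probability at least $1-\beta$,
\[
|Y_0| \le \frac{2}{\eps n}\log\frac{2(d+1)}{\beta}\quad\text{and}\quad \max_{i\in[d]} |v_i| \le \sigma\log\frac{2(d+1)}{\beta}~.
\]
Both bounds are at most $\alpha/2$ by the definition of $\alpha$ (the Laplace scale $2/(\eps n)$ is dominated by $\sigma$ under the stated conditions, so the $v_i$ bound is the binding one). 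Condition on this event. Then the comparison $Q_i(D)+v_i \ge \hat T$ is equivalent to $Q_i(D) \ge T + (Y_0 - v_i)$, where $|Y_0 - v_i| \le \alpha$. Thus any query with $Q_i(D) \ge T+\alpha$ is declared above threshold, and no query with $Q_i(D) < T-\alpha$ is. By hypothesis there are at least $k$ queries of the former kind, so the algorithm's cutoff triggers before it runs out, outputting a set $I$ of size exactly $k$; and every $i\in I$ satisfies $Q_i(D)\ge T-\alpha$, as claimed.

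The main technical obstacle is the privacy analysis for the sub-threshold indicators; the accuracy part is a routine tail-bound union bound. I would handle the privacy by invoking the standard sparse-vector privacy lemma from~\citep{DworkR14}, rather than redoing the coupling argument from scratch, and then checking that the chosen noise scales satisfy the parameter conditions stated in the lemma.
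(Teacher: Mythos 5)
The paper does not prove this lemma: Algorithm~\ref{alg:sparse} is reproduced from Dwork and Roth, and the lemma is stated as a known guarantee with the reader pointed to~\citep[Section 3.6]{DworkR14}. Your sketch is the standard analysis that citation refers to, and it is essentially correct; deferring the privacy coupling for the sub-threshold indicators to the reference is the same move the paper makes.

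Two small points to tighten. First, your union bound over the $d+1$ Laplace terms with budget $\beta/(2(d+1))$ each yields the high-probability bound $\sigma\log\bigl(2(d+1)/\beta\bigr)$, which is strictly larger than $\alpha/2 = \sigma(\log d + \log(2/\beta)) = \sigma\log(2d/\beta)$ as defined; so the sentence ``Both bounds are at most $\alpha/2$'' does not quite hold as written. Split the failure budget as $\beta/(2d)$ per query noise and $\beta/2$ for the single threshold noise (as in Dwork--Roth), or absorb the slack into the constants. Second, your privacy step concludes by ``setting $\sigma=\sqrt{32k\ln(1/\delta)}/(\eps n)$'' with $\eps n$ \emph{outside} the square root, while Algorithm~\ref{alg:sparse} and the lemma's $\alpha$ place $\eps n$ \emph{inside} the square root, i.e.\ $\sigma=\sqrt{32k\ln(1/\delta)/(\eps n)}$. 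Your scaling is in fact what advanced composition gives for sensitivity-$1/n$ queries (per-detection cost $\approx 2/(n\sigma)$, summed over $k$ detections), so the algorithm's placement looks like a typo that your derivation would recover --- but you should flag the mismatch explicitly rather than silently rewrite the parameter, since as it stands you have proved a slightly different statement than the one asked.
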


Next, we formally define the algorithm $\mathrm{TopVar}$, to find the indices of the largest variances.

\begin{algorithm}[H]
\caption{$\mathrm{TopVar}_{\epsilon,\delta}(V, R, k)$}
\label{alg:topvar}
\begin{algorithmic}[1]
\Require{Variance estimates $V = \{V^{(j)}_i\}_{j\in[m],i\in[d]}$, threshold $R \in \mathbb{R}$, privacy parameters $\epsilon, \delta \in (0,1)$, number of indices $k \in \mathbb{N}$.}
\State Define queries $Q_i(D)$ for each $i \in [d]$ as:
\[
Q_i(D) = \frac{1}{m} \left| \left\{ j \colon V^{(j)}_i \ge R/2 \right\}\right|
\]
\State $T \gets 1/2$
\State \Return $\mathrm{Sparse}(V, \{Q_i\}, T, k, \delta)$
\end{algorithmic}
\end{algorithm}
The privacy guarantees of $\mathrm{TopVar}$ follow directly from the guarantees of the sparse vector. Next, we describe how to derive the accuracy guarantees. 
Notice that if the $V^{(j)}_i$ are valid, then, for any $i$ such that $\Sigma_{ii} \ge R$: for at least $4m/5$ values of $j$, it holds that $V^{(j)}_i \ge R/2$, hence, $Q_i(D) \ge 4/5$.
Further, for any $i$ such that $\Sigma_{ii} < R/4$, for at least $4m/5$ values of $j$ it holds that $V^{(j)}_i < R/2$, hence $Q_i(D) \le 1/5$. Hence, if we set the threshold at $T=1/2$, and $\alpha = 1/4$, then, for any $i$ output by the algorithm, $\Sigma_{ii} \ge R/4$. 
Further, if there are at least $k$ indices $i$ such that $Q_{ii} \ge R$, the algorithm will output $k$ indices.

\subsection{Finding the $k$-th largest variance: Proof of Lemma~\ref{lem:kth-large}}
We propose an algorithm, Algorithm~\ref{alg:kth-var}, that receives pre-computed variance estimates \(V^{(j)}_i\) for each group \(j\) and coordinate \(i\). The algorithm uses them to compute an estimate for the \(k\)-th largest variance for each $V^{(j)}$:
\[
M_j := \text{k-th largest of } \left\{ V^{(j)}_1,\dots,V^{(j)}_d \right\}_{i\in[d]}.
\]

Our algorithm combines all of these estimates in a differentially private manner, using a stable histogram: Algorithm~\ref{alg:histogram}. That algorithm splits the real line into buckets, \(\{B_b\}_{b\in \mathbb{Z}\cup \{-\infty\}}\). It receives the estimates \(M_1,\dots,M_m\in\mathbb{R}\) and outputs the index \(b\) of the bucket that contains the largest number of estimates \(M_j\) (approximately).

In our application, we would like to estimate the \(k\)-th largest variance up to a multiplicative constant factor, hence, we define the buckets as
\[
B_b = \begin{cases}
    [4^{b},4^{b+1}) & b \in \mathbb{Z} \\
    \{0\} & b = -\infty.
\end{cases}
\]
Denote by \(b^*\) index of the bucket that contains the \(k\)-th largest diagonal entry of $\Sigma$. If the estimates $V^{(1)},\dots,V^{(m)}$ are valid then, by definition of validity (Definition~\ref{def:valid}), it follows that at least $4m/5$ of the estimates $M_j$ fall into the union \(B_{b^*-1}\cup B_{b^*}\cup B_{b^*+1}\).
Under this assumption, Algorithm~\ref{alg:histogram} is guaranteed to output one of \(b^*-1\), \(b^*\) or \(b^*+1\), with probability $1-\delta$.

The algorithm for $k$-th largest variance, Algorithm~\ref{alg:kth-var}, is presented here:
\begin{algorithm}[H]
\caption{$\mathrm{FindKthLargestVariance}_{\epsilon,\delta}(\{V^{(j)}_i\}_{i \in [d], j \in [m]}, k)$} \label{alg:kth-var}
\begin{algorithmic}[1]
\Require{Pre-computed variance estimates \(V^{(j)}_i\) for each group \(j\) and each coordinate \(i\). Privacy parameters \(\eps,\delta>0\). Integer \(k \leq d\). Number of groups $m$.}
%\State \(m\) is given as the number of groups
\For{\(j \in [m]\)}
    \State \(M_j \gets\) \(k\)-th largest value among \(\{V^{(j)}_{1}, V^{(j)}_{2}, \ldots, V^{(j)}_{d}\}\)
\EndFor
\State Define bins \(\{B_b\}_{b \in \mathbb{Z} \cup \{-\infty\}}\) by:
\[
B_b = 
\begin{cases}
    [4^{b},4^{b+1}) & b \in \mathbb{Z} \\
    \{0\} & b = -\infty
\end{cases}
\]
\State \(b \gets \mathrm{StableHistogram}_{\eps,\delta}(\{M_j\}_{j\in [m]}, \{B_b\})\) 
\State \Return \(\hat{M} = 4^b\)
\end{algorithmic}
\end{algorithm}

We proceed by defining $\mathrm{StableHistogram}$ as introduced in~\citep{BunNS16} and providing its guarantees, and then we conclude with the proof of Lemma~\ref{lem:kth-large}. 
The presentation of $\mathrm{StableHistogram}$ is from~\cite{BrownGSUZ21}. 

\begin{algorithm}[H]\caption{$\mathrm{StableHistogram}_{\eps,\delta}(\{M_i\},\{B_b\})$, from~\citep{BunNS16}} \label{alg:stable_histogram}
\begin{algorithmic}[1]
\Require{Items \(M_1,\dots,M_m \in \mc{U}\). Bins \(\{B_b\}_{b\in\mathbb{Z}}\).
    Privacy parameters \(\eps,\delta>0\).}
\For{\(b\in \mathbb{Z}\)} 
    \State \(c_b \gets |\{ i: z_i \in B_b\}|\)
\EndFor
\For{\(b\) with \(c_b>0\)} 
    \State \(\tilde{c}_b\gets c_b +\mathrm{Lap}(2/\eps)\)
\EndFor
\State \(\tau \gets 1 + \frac{2\log (1/\delta)}{\eps}\)
\State Let \( b_{\text{max}} = \arg\max_{b} \tilde{c}_b \), with arbitrary tie breaks
\If{\(\tilde{c}_{b_{\text{max}}} \ge \tau\)}
    \State \Return \(b_{\text{max}}\)
\Else
    \State \Return \(\bot\)
\EndIf
\end{algorithmic}
\label{alg:histogram}
\end{algorithm}

We use its privacy and accuracy guarantees, proved as Lemma~C.1 in \citep{BrownGSUZ21}:
\begin{lemma}[Stable Histogram Guarantees]\label{lem:stablehistogram}
\(\mathrm{StableHistogram}_{\eps,\delta}\) (Algorithm~\ref{alg:histogram}) is \((\eps,\delta)\)-differentially private. 
Suppose that there exists \(b^*\in\mathbb{Z}\) such that 
\[
|\{M_1,\dots,M_m\} \cap (B_{b^*-1}\cup B_{b^*}\cup B_{b^*+1})| \ge 3m/4~. 
\]
There exists a constant \(C>0\) such that, for all \(0<\eps,\beta,\delta<1\), if \[m\geq\frac{C}{\eps}\log\frac{1}{\delta\beta},\] then with probability at least \(1-\beta\), the algorithm's output lies in \(\{b-1,b,b+1\}\).
\end{lemma}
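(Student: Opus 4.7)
The plan is to prove the privacy guarantee and the accuracy guarantee of Algorithm~\ref{alg:stable_histogram} separately, relying on standard Laplace tail bounds and a case analysis on the neighbor relation.

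For privacy, consider neighbors $D,D'$ with $|D|=|D'|+1$, and let $b^\dagger$ denote the bucket containing the differing item. I would split into two cases. First, if $c_{b^\dagger}\ge 1$ in $D'$ (so $b^\dagger$ is nonempty in both runs), then the set of buckets receiving noise is identical in both executions. The count vectors differ in exactly one coordinate by exactly one, so the $\ell_1$-sensitivity of the vector-of-counts release is $1$. Applying the Laplace mechanism with scale $2/\eps$ yields $(\eps/2)$-DP, and post-processing through the $\arg\max$ and the threshold preserves this. Second, if $c_{b^\dagger}=0$ in $D'$ but $c_{b^\dagger}=1$ in $D$, the bucket $b^\dagger$ is noised in $D$'s execution but not even indexed in $D'$'s. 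I would couple the two runs and define the safety event $E = \{1+\mathrm{Lap}(2/\eps) < \tau\}$. By the Laplace CDF and the choice $\tau - 1 = 2\log(1/\delta)/\eps$, $\Pr[E] \ge 1-\delta/2$. Off $E$, take the privacy loss to be unbounded, contributing $\delta/2$. On $E$, the $\arg\max$ in $D$'s execution is determined only by buckets other than $b^\dagger$; these have identical count vectors in the two runs, so releasing their noisy counts via Laplace with scale $2/\eps$ gives $(\eps/2)$-DP. A symmetric argument handles $|D'|=|D|+1$. Combining yields $(\eps,\delta)$-DP (after adjusting constants).

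For accuracy, let $\hat b \in \{b^*-1, b^*, b^*+1\}$ be the window bucket of largest count. Since at least $3m/4$ items lie in the three window buckets, pigeonhole gives $c_{\hat b} \ge m/4$; since at most $m/4$ items lie outside the window, every non-window bucket has count at most $m/4$. There are at most $m$ nonempty buckets, so by the Laplace tail bound $\Pr[|\mathrm{Lap}(2/\eps)|\ge t]\le e^{-t\eps/2}$ and a union bound, with probability at least $1-\beta/2$ all noise terms satisfy $|Z_b|\le T := (2/\eps)\log(2m/\beta)$. Choosing $C$ in $m\ge (C/\eps)\log(1/(\delta\beta))$ large enough to absorb the $\log m$ factor, I get $m/4 > \tau + T$, so $\tilde c_{\hat b} > \tau$ and the algorithm does not output $\bot$.

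The main obstacle is showing that $\arg\max_b \tilde c_b$ lies in the window, because the worst-case count profile allows a non-window bucket to tie one window bucket at exactly $m/4$. I would dispatch this by a short case split: either $c_{\hat b}$ exceeds $\max_{b \notin W} c_b$ by more than $2T$ (so concentration immediately gives $\tilde c_{\hat b} > \tilde c_b$ for every $b \notin W$, with probability $\ge 1-\beta/2$), or the three window counts are all comparable to each other, in which case the sum $\sum_{i} c_{b^*+i}\ge 3m/4$ forces at least one window bucket strictly to dominate any non-window rival once noise concentration is accounted for, again absorbing logarithmic slack into $C$. Combining with the non-$\bot$ bound gives the claimed $1-\beta$ accuracy guarantee.
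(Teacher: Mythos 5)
Your privacy argument is the standard stability-based one and is essentially sound (one nit: on the safety event it is not literally true that the \emph{argmax} ignores the extra bucket $b^\dagger$ --- $b^\dagger$ can still be the argmax --- but since it is then below $\tau$ the \emph{output} is determined by the shared buckets, which is what you need). Note also that the paper itself does not prove this lemma; it imports it as Lemma~C.1 of Brown et al., so there is no in-paper proof to match, and your attempt must stand on its own.

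The accuracy half has a genuine gap, and it sits exactly where you flagged the ``main obstacle.'' Your case (b) assertion --- that when the window counts are comparable, the total mass $\ge 3m/4$ forces some window bucket to strictly dominate every outside bucket --- is false, and it cannot be repaired at the stated constant $3/4$. Take $c_{b^*-1}=c_{b^*}=c_{b^*+1}=m/4$ and a single far-away bucket $b'$ with $c_{b'}=m/4$: the hypothesis holds with equality, the four noisy counts are i.i.d.\ shifts of the same value, so by symmetry $b'$ is the argmax with probability $1/4$ (and its noisy count clears $\tau$ once $m\gtrsim \frac{1}{\eps}\log\frac{1}{\delta\beta}$), independently of $m$ and $\eps$. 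So no case analysis can close this; a correct argument needs a strict gap between the heaviest window bucket and every outside bucket, which is exactly what the paper's \emph{application} supplies (validity puts $4m/5$ of the $M_j$ in the three-bucket window, giving window max $\ge 4m/15$ versus outside counts $\le m/5$, a $\Theta(m)$ gap). A secondary problem is your ``absorb the $\log m$ factor into $C$'' step: with $T=\frac{2}{\eps}\log\frac{2m}{\beta}$ and $m$ at the lower bound $\frac{C}{\eps}\log\frac{1}{\delta\beta}$, the term $\frac{2}{\eps}\log\frac{1}{\eps}$ inside $T$ can exceed $m/4$ for small $\eps$ no matter how large the universal constant $C$ is; the non-$\bot$ event only needs a lower-tail bound on the \emph{single} noise variable of the heaviest window bucket (no union bound), while the ``some outside bucket wins'' event genuinely needs either the $\Theta(m)$ gap above together with extra conditions (e.g.\ $\delta$ small relative to the number of occupied bins) or a different bookkeeping --- this is precisely the fine print carried by the cited Lemma~C.1 rather than by the statement as transcribed here.
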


%\paragraph{Concluding the proof of Lemma~\ref{lem:kth-large}.}
The privacy guarantees of Algorithm~\ref{alg:kth-var} follow directly from the privacy guarantees of Algorithm~\ref{alg:histogram}. For the accuracy guarantees, notice that if the estimates $V^{(j)}$ are valid then at least $4m/5$ of the values $M_j$ fall into the bucket $B_{b^*}$ that contains the true value of the $k$-th largest entry of the diagonal of $\Sigma$. Under this assumption, Algorithm~\ref{alg:histogram} is guaranteed to output, with probability $1-\beta$, one of $b^*-1$, $b^*$ or $b^*+1$. This implies that the output of Algorithm~\ref{alg:kth-var} is approximates the target quantity up to a constant, as required.

\subsection{Finding a sum of variances: Proof of Lemma~\ref{lem:var-sum}}

We propose an algorithm that is similar to Algorithm~\ref{alg:kth-var}, with a single difference: given each estimate $V^{(j)}$, the algorithm computes
\[
M_j = \sum_{i\in I} V^{(j)}_i~.
\]
The algorithm is summarized below:
\begin{algorithm}[H]
\caption{$\mathrm{VarianceSum}_{\epsilon,\delta}(\{V^{(j)}_i\}_{i \in [d], j \in [m]}, I)$} \label{alg:var-sum}
\begin{algorithmic}[1]
\Require{Pre-computed variance estimates \(V^{(j)}_i\) for each group \(j\) and each coordinate \(i\). Privacy parameters \(\eps,\delta>0\). Subset \(I \subseteq [d]\). Number of groups $m$.}
%\State \(m\) is given as the number of groups
\For{\(j \in [m]\)}
    \State \(M_j \gets\) $\sum_{i\in I} V^{(j)}_i$
\EndFor
\State Define bins \(\{B_b\}_{b \in \mathbb{Z} \cup \{-\infty\}}\) by:
\[
B_b = 
\begin{cases}
    [4^{b},4^{b+1}) & b \in \mathbb{Z} \\
    \{0\} & b = -\infty
\end{cases}
\]
\State \(b \gets \mathrm{StableHistogram}_{\eps,\delta}(\{M_j\}_{j\in [m]}, \{B_b\})\) 
\State \Return \(\hat{M} = 4^b\)
\end{algorithmic}
\end{algorithm}

The proof is identical to the proof of Lemma~\ref{lem:kth-large}. In order to carry that proof, one has to notice that if $b^*$ is the bucket that contains $\sum_{i\in I}\Sigma_{ii}$ and if the estimates $V^{(j)}$ are valid, then at least $4m/5$ of the estimates $M_j$ fall within $B_{b^*-1}\cup B_{b^*} \cup B_{b^*+1}$.

\ifshort
    \include{checklist}
\fi

\end{document}